\def\eqref#1{equation~\ref{#1}}
\def\1{\bm{1}}
\DeclareMathAlphabet{\mathsfit}{\encodingdefault}{\sfdefault}{m}{sl}
\SetMathAlphabet{\mathsfit}{bold}{\encodingdefault}{\sfdefault}{bx}{n}
\newcommand{\R}{\mathbb{R}}
\newcommand{\real}{\mathbb{R}}
\DeclareMathOperator*{\argmin}{arg\,min}
\newcommand{\highlight}[1]{\cellcolor{blue!20}{\textbf{#1}}}
\newcommand{\hl}[1]{\highlight{#1}}
\newcommand{\ohl}[1]{\cellcolor{orange!20}{\textbf{#1}}}
\newcommand{\ghl}[1]{\cellcolor{green!20}{\textbf{#1}}}
\newcommand{\rhl}[1]{\cellcolor{red!20}{\textbf{#1}}}
\newcommand{\RETURN}{\newline \textbf{Return }}
\newcommand{\norm}[1]{\left|\left|#1\right|\right|}
\newcommand{\ip}[2]{\left\langle#1, #2\right\rangle}
\theoremstyle{plain}
\newtheorem{theorem}{Theorem}[section]
\newtheorem{lemma}[theorem]{Lemma}
\newtheorem{corollary}[theorem]{Corollary}
\theoremstyle{definition}
\theoremstyle{remark}
\title{OTTER: Effortless Label Distribution Adaptation of Zero-shot Models}
\newcommand{\SYSNAME}{\textsc{OTTER}}
\newcommand{\OTTER}{\SYSNAME}
\newcommand{\ROTTER}{R-\SYSNAME}
\author{%
  \textbf{Changho Shin, Jitian Zhao, Sonia Cromp, Harit Vishwakarma, Frederic Sala}\\
  Department of Computer Sciences \\
  University of Wisconsin-Madison \\
  \texttt{\{cshin23, jzhao326, cromp, hvishwakarma, fsala\}@wisc.edu}
}
\begin{document}

\maketitle

\begin{abstract}\label{sec:abs}
Popular zero-shot models suffer due to artifacts inherited from pretraining. 
One particularly detrimental issue, caused by unbalanced web-scale pretraining data, is \textit{mismatched label distribution}.
Existing approaches that seek to repair the label distribution are not suitable in zero-shot settings, as they have mismatching  requirements, such as needing access to labeled downstream task data or knowledge of the true label balance in the pretraining distribution. 
We sidestep these challenges and introduce a simple and lightweight approach to adjust pretrained model predictions via optimal transport. 
Our technique requires only an \textit{estimate} of the label distribution of a downstream task.  
Theoretically, we characterize the improvement produced by our procedure under certain mild conditions and provide bounds on the error caused by misspecification. 
Empirically, we validate our method in a wide array of zero-shot image and text classification tasks, improving accuracy by 4.8\% and 15.9\% on average, and beating baselines like prior matching---often by significant margins---in 17 out of 21 datasets. 

\end{abstract}
\section{Introduction} \label{sec:intro}

\begin{wrapfigure}{R}{0.45\textwidth}
\vspace{-8mm}
\centering
\includegraphics[width=0.45\textwidth]{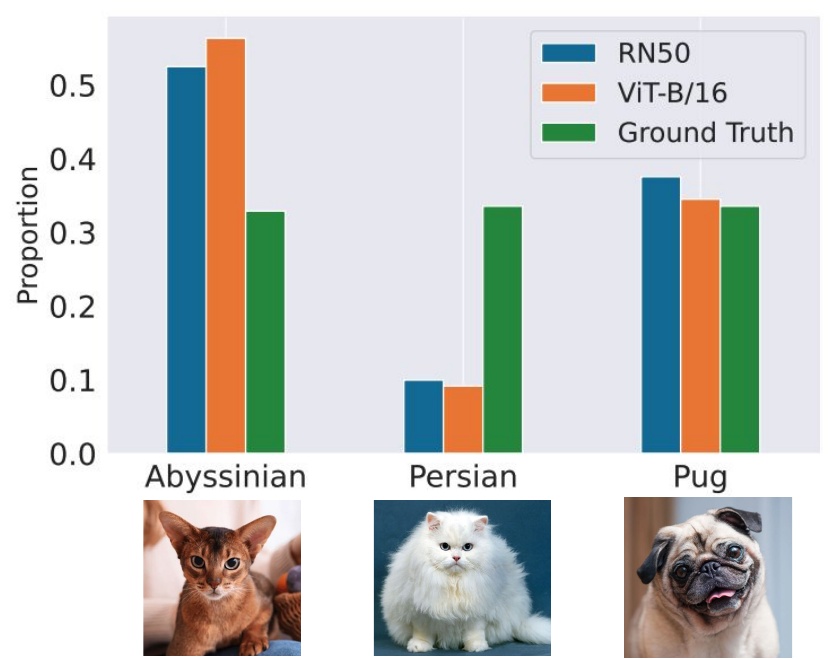}
\caption{Label distribution mismatch example in zero-shot classification. 
In the Oxford-IIIT-Pet dataset, the ground-truth labels are uniformly distributed, while  zero-shot models exhibit biased predictions toward certain classes. This bias is influenced by the distribution of labels in the pretraining task.}
\label{fig:motivating_class_balance}
\end{wrapfigure}
%

Zero-shot models are popular but struggle with biases inherited from their large pretraining datasets \cite{bias_1, bias_2, bias_inherit_1}. 
In particular, zero-shot classification is strongly biased by \emph{the label distribution} of the pretraining task. When the label distribution of the downstream task differs from pretraining, the performance of zero-shot classifiers suffers greatly. For example, Figure \ref{fig:motivating_class_balance} illustrates the effects of mismatched distributions on a pet image classification task. Two CLIP models (RN50, and ViT-B/16) produce biased predictions on the \texttt{Abyssinian} and \texttt{Persian} classes. 
Furthermore, datasets with a large number of classes, such as ImageNet, may contain both extremely common and very rare classes, resulting in an outsized probability that a zero-shot model will predict some classes over others. As a result, even large models intended for use in zero-shot settings, such as CLIP \cite{CLIP_base}, naturally have a label distribution mismatch between pretraining data and downstream tasks.

%






Existing methods that seek to address label distribution \emph{make strong assumptions or have expensive requirements}. 
%
For example, to fine-tune a model, we must obtain a labeled fine-tuning dataset of adequate size, then obtain the time and compute to further train the model. To perform label shift adaptation techniques, we must know the true label distribution of the pretraining distribution---difficult to impossible for real-world tasks. 

Can we deal with label distribution mismatch \textbf{\textit{without additional training or access to ground-truth downstream task information}}? While seemingly challenging, one cause for optimism is the observation that zero-shot models still give relatively high prediction probabilities for correct classes, though classes common in pretraining tend to have relatively inflated scores overall. Intuitively, the model has already learned to identify examples of its downstream classes (and so does not require further training) and is already impacted by the pretraining label distribution (and so does not need access to the ground-truth pretraining label distribution). Instead, the model's prediction probabilities must be adjusted based on an estimated downstream label distribution specification. 

To perform this label distribution adjustment, we view zero-shot learning through the lens of optimal transport (OT) and develop a technique called \textbf{$\SYSNAME$ (Optimal TransporT adaptER)}. This OT-based approach offers a systematic way to rebalance predicted labels: data points are transported to optimal downstream classes, minimizing the overall cost in accordance with the estimated the downstream label distribution specifications. 

Theoretically, we show that optimal transport given the true label distribution of the downstream can recover the Bayes-optimal classifier under mild conditions. Additionally, we provide error bounds on our adaptation method for misspecification. We provide synthetic experiments validating our theoretical claims. In real-world data  settings, we validate our method on a wide variety of image and text classification tasks, showing 4.8\% and 15.5\% accuracy improvement on average in image and text zero-shot classification tasks, respectively.
Finally, we evaluate our method in \emph{few-shot adaptation} scenarios, where $\SYSNAME$ provides further improvements when combined with linear probing. Our contributions include:
\begin{itemize}[topsep=0pt, noitemsep]
   \item $\SYSNAME$, an algorithm to deal with label distribution mismatch at inference time via optimal transport,
    \item Theoretical results showing the effectiveness of our method, including the ability to recover the Bayes-optimal classifier and a sensitivity analysis with respect to the label distribution specification estimation,
    \item Extensive empirical results on zero-shot classification for text and image datasets, showing accuracy improvements of up to 25\%,
    \item Experimental results demonstrating the applicability of $\SYSNAME$ to few-shot settings, showing accuracy improvements of up to 15\%, even with noisy label distribution specifications,
    \item Extensions of OTTER to leverage label hierarchy information or relax the batched prediction requirement and
    \item Application to LLM selection bias mitigation.
\end{itemize}
\section{Background and Problem Formulation}
Before presenting $\SYSNAME$ and providing our results, we introduce some crucial background and describe the problem setting formally.  
\subsection{Background}
We briefly describe zero-shot models, the technical tool we use (optimal transport), along with other techniques that seek to address shifts. We have extended related work, including in-depth characterizations and comparisons with related methods, in Appendix \ref{sec:related_works}.

\paragraph{Zero-shot Models.}
Zero-shot classification, popularized by models such as CLIP \cite{CLIP_base}, is a powerful paradigm that enables prediction on downstream tasks without additional fine-tuning. Image, language, and multimodal models have been increasingly employed for zero-shot prediction \cite{wei2021finetuned, liu2023visual}. These models undergo extensive pretraining on massive datasets with concept and label spaces that may be very different from those of downstream applications.

\paragraph{Optimal Transport.}
Optimal Transport (OT) is a framework for matching two probability distributions \cite{peyre2019computational, santambrogio2015optimal}. We predominantly consider  optimal transport between empirical discrete measures. Suppose that we are given points $x_1, x_2, \ldots, x_n \in \mathcal{X}$ and $y_1, \ldots, y_K \in \mathcal{Y}$, a source measure $\mu$ defined by $\mu =\sum_{i=1}^{n}w_i \delta_{x_i}$, and a target measure given by $\nu =\sum_{j=1}^K p_j\delta_{j}$, where $w_i, p_j$ are positive values such that $\sum_{i=1}^n w_i=1$, $\sum_{j=1}^K p_j=1$. Suppose also that $\delta_{x}$ is a Dirac delta function at $x$, i.e. $\delta_{x}(x')=1$ if $x'=x$, and $\delta_{x}(x')=0$ otherwise. 
Given a cost matrix $C \in \R^{n \times K}$, 
the Monge-Kantorovich formulation of optimal transport is to find a minimal cost transport plan $\pi$ such that 
\[\pi = \argmin_{\gamma \in \Pi(\mu, \nu)} \langle \gamma, C\rangle,\]
where $\Pi(\mu, \nu) = \{\gamma \in \R^{n \times K}_+ | \gamma \mathbf{1}=\mu, \gamma^T \mathbf{1}=\nu\}$. 

\paragraph{Distribution and Label Shifts.} Distribution shift refers to the discrepancy between a source distribution $P_s$ on which the model is trained, and a target distribution $P_t$ on which the model is deployed. Distribution shift often degrades trained model performance on target tasks. Label shift is a specific type of distribution shift such that $P_s(Y)\neq P_t(Y)$\label{eq:label_shift_1} and the data generation process is fixed --- in other words, the conditional distributions of the inputs are the same: $P_s(X|Y)= P_t(X|Y)$\label{eq:label_shift_2}. 
Techniques such as importance sampling \cite{lipton2018detectingbbse,azizzadenesheli2019regularizedrlls,garg2020unifiedmlls}, recalibration \cite{alexandari2020maximum} and domain adaptation \cite{tachet2020domain} are commonly used to mitigate the effects of label shift. Unfortunately, these methods assume access to source distribution data, whereas zero-shot models' pretraining data is inaccessible (often proprietary or blocked for privacy reasons). Thus, adapting zero-shot models to new label distributions poses  challenges unmet by these pre-existing methods.

\subsection{Problem Formulation}
Let $\mathbf{X} =\{x_1, x_2, \ldots, x_n \}$ be an inference dataset with $x_i \in \mathcal{X}$. Furthermore, let $\mathbf{Y}=\{y_1, y_2, \ldots, y_n\}$ be the true labels of the $K$-class classification dataset, such that $y_i\in \mathcal Y = [K]$, are sampled according to the downstream label distribution $\nu=(p_1, p_2, \ldots, p_K)$.

Let $s_{\theta}(x,j):=P_\theta(Y=j|X=x)$ be a pretrained classification model constrained to the downstream label space. During pretraining, $s_\theta$ has been biased to the source label distribution $\nu^s$. We wish to offset such label distribution bias with a label distribution specification $\hat{\nu}$ for the target distribution. $\hat{\nu}$ is expected to be closer to the true label distribution of the downstream task. Given a label distribution specification, our goal is to rebalance predictions so that the predicted label distribution follows the label distribution specification.





\section{Proposed Framework} \label{sec:method}

\begin{algorithm}[t!]
	\caption{\SYSNAME}\label{alg:main}
	\begin{algorithmic}[1]
		\State \textbf{Input:}
		  Input $\mathbf{X}=\{x_1, \ldots, x_n\}$, label distribution specification $(p_1, \ldots, p_K)$, cost matrix $C \in \R^{n \times K}$
            \State Define input marginal $\mu=\mathbf{1}\frac{1}{n}$, prediction marginal $\nu=(p_1, \ldots, p_K)$
            \State Run optimal transport and obtain transport plan $\pi$ s.t. $\pi = \argmin_{\gamma \in \Pi(\mu, \nu)} \langle \gamma, C\rangle $.
            \State Get modified classification outputs $\hat{y}_i = \arg\max_{j \in [K]} \pi_{i,j}$.
		\RETURN \{$\hat{y}_i\}_{i\in [n]}$
	\end{algorithmic}
\end{algorithm}
We propose OTTER (Optimal TransporT adaptER), an optimal transport-based label distribution adaptation approach. 
Our goal is to have the $n$ input data points allocated to $K$ classes match a given label distribution $\hat{\nu}$, where $\sum_{j=1}^K\hat{\nu}_j=1, \hat{\nu}_j \geq 0$. Specifically, we want to classify $n \hat{\nu}_1$ points as the first class, $n \hat{\nu}_2$ points as the second, and so on. However, there are many such allocations, and it is not a priori clear which one should be selected. We propose formulating an optimal transport problem that selects the allocation minimizing a particular cost: 
\[\pi = \argmin_{\gamma \in \Pi(\mu, \hat{\nu})} \langle \gamma, C\rangle,\]
where $\Pi(\mu, \hat{\nu}) = \{\gamma \in \R^{n \times K}_+ | \gamma \mathbf{1}=\mu, \gamma^T \mathbf{1}=\hat{\nu}\}$, $\mu=\frac{1}{n}\mathbf{1}$ and $C$ is the cost (loss) matrix such that $C_{ij}$ represents a loss when we classify $x_i$ as class $j$. This procedure is described in Algorithm \ref{alg:main}. Note that this procedure naturally matches the given label distribution specification $\hat{\nu}$.

We wish to use Algorithm 1 for zero-shot classification  given the pretrained model $s_\theta$. To do so, we must select a cost function and produce $C_{ij}$. An ideal choice of such a function is $C_{ij}=-\log P_t(Y=j|X=i)$ such that optimal transport minimizes the negative log posterior under constraints. However, the target distribution $P_t$ is unknown. Instead, we replace the posterior with the classifier scores $s_\theta(x_i,j)$. We highlight that this choice of cost matrix is an natural extension of zero-shot classification under the label distribution constraint. We prove this claim in the next section. First, we show a toy example that shows how $\SYSNAME$ improves zero-shot accuracy.

\paragraph{Example.} 
To illustrate the benefits of $\SYSNAME$, consider the following example for binary classification. We have two data points, $X=\{x_1,x_2\}$ with $Y=\{1,2\}$, and true label distribution $\nu=(\frac{1}{2},\frac{1}{2})$.
Suppose that the zero-shot model's prediction scores are $s_1 = (0.4, 0.6) $ and $s_2 = (0.1, 0.9)$. 

Traditional classification yields $\hat{y}_1=2, \hat{y}_2=2$, producing a 50\% error rate. However, given the cost matrix $C$ derived from the prediction score matrix \[S= \begin{bmatrix}
0.4 & 0.6 \\
0.1 & 0.9 
\end{bmatrix},C= \begin{bmatrix}
-\log 0.4 & -\log 0.6 \\
-\log 0.1 & -\log 0.9 
\end{bmatrix},\] along with $\mu=(0.5, 0.5)$ and $\nu=(0.5, 0.5)$, the optimal transport procedure discovers the transport map $\pi =  \begin{bmatrix}
0.5 & 0.0 \\
0.0 & 0.5 
\end{bmatrix}$,
yielding $\hat{y}_1=1, \hat{y}_2=2$. This corrects the original zero-shot prediction error. 


\paragraph{Extension to Online Predictions} 
While \SYSNAME\ offers efficient label distribution adaptation, it requires a batched set of inference data points, making online predictions challenging. To address this, we introduce \ROTTER\ (Reweighting \SYSNAME), which learns a reweighting factor---an estimate of $P_t(Y)/P_s(Y)$---using \SYSNAME's predictions $y_{\text{\SYSNAME}}$ on a validation set. Once learned, these reweighting parameters can be applied to online predictions by adjusting logit probability scores. We use a reweighting formulation equivalent to logit adjustment in \cite{zhu2024generalized}. The reweighted probability scores of $P_\theta$, with a reweighting vector $r \in \mathbb{R}^K$, are defined as:

\[
P_{\theta, r}(Y=j|X=x) = \cfrac{r_j P_{\theta}(Y=j|X=x)}{\sum_{j'=1}^K r_{j'} P_\theta(Y=j'|X=x)}.
\]

The parameter $r$ is learned using cross-entropy loss on $y_{\SYSNAME}$. We expect \ROTTER\ to perform comparably to OTTER, with the additional advantage of not requiring \SYSNAME\ to be run over the entire dataset. In the following section, we provide a theoretical result showing that $r^* = P_t(Y)/P_s(Y)$ is an optimal reweighting parameter when learned from $y_{\SYSNAME}$ as pseudolabels, effectively addressing label shift.
\section{Theoretical Results} \label{sec:theoretical_results}
In practical scenarios, label distribution specifications are frequently subject to noise, and prediction probabilities may not be well-calibrated. To understand the impact of these factors, we examine how errors in label distribution specification and calibration influence the transport plan. 
Our theoretical analysis yields following findings: (a) $\SYSNAME$ can recover the Bayes-optimal classifier in the label shift setting, (b) for a noisy cost matrix with the noisy label distribution specification setup, the suboptimality can be bounded by the deviation of cost matrix and label distribution, and (c) \ROTTER \ effectively addresses label shift when learned from $y_{\SYSNAME}$ as pseudolabels.

\paragraph{Classification as optimal transport.}
Prior to discussing the main theoretical results, we demonstrate that standard classification---expressed as $\hat{y}_i = \arg \max_{j \in [K]} P_\theta(Y=j|X=x_i)$---can be interpreted as a (trivial) solution derived from optimal transport.
\begin{theorem}\label{thm:classification_as_ot}
Let $\nu^{ZS}_j=\frac{1}{n}\sum_{i=1}^n\mathbbm{1}[\hat{y}^{ZS}_i=j]$, where
$\hat{y}^{ZS}_{i} = \arg \max_{j' \in [K]} P_\theta(Y=j'|X=x_i)$. Then, given $C_{ij} = -\log P_\theta(Y=j|X=x_i)$,
\begin{align*}
    &\pi = \arg \min_{\gamma \in \Pi(\mu, \nu^{ZS})} \ip{\gamma}{C}, \\
    &\hat{y}^{OT}_i = \arg \max_{j \in [K]}\pi_{ij}.
\end{align*}
Assuming there are no ties in scores, i.e. $P_\theta(Y=j|X=x_i) \neq P_\theta(Y=j'|X=x_i), \text{ for all } j \neq j'$, the \SYSNAME \ predictions are equivalent zero-shot predictions, i.e. $\hat{y}^{OT}_i = \hat{y}^{ZS}_i$ for all $i \in [n]$.
\end{theorem}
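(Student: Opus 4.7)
The plan is to exhibit an explicit candidate transport plan, show it is feasible for the marginals $(\mu,\nu^{ZS})$, prove optimality via a row-wise lower bound, and then argue uniqueness of the column of support in each row using the no-ties hypothesis. Concretely, I would define the deterministic plan
\[
\pi^*_{ij} \;=\; \tfrac{1}{n}\,\mathbbm{1}\!\left[\hat{y}^{ZS}_i = j\right].
\]

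The first step is feasibility. Row sums are $\sum_j \pi^*_{ij} = \tfrac{1}{n}$, matching $\mu = \tfrac{1}{n}\mathbf{1}$. Column sums are $\sum_i \pi^*_{ij} = \tfrac{1}{n}\sum_i \mathbbm{1}[\hat{y}^{ZS}_i = j] = \nu^{ZS}_j$ by the very definition of $\nu^{ZS}$. So $\pi^* \in \Pi(\mu,\nu^{ZS})$.

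The second step is optimality by a row-wise bound. For any $\gamma \in \Pi(\mu,\nu^{ZS})$,
\[
\langle \gamma, C\rangle \;=\; \sum_{i=1}^n \sum_{j=1}^K \gamma_{ij} C_{ij} \;\ge\; \sum_{i=1}^n \Bigl(\sum_{j=1}^K \gamma_{ij}\Bigr)\min_{j\in[K]} C_{ij} \;=\; \tfrac{1}{n}\sum_{i=1}^n \min_{j\in[K]} C_{ij},
\]
using the row marginal constraint $\sum_j \gamma_{ij} = \tfrac{1}{n}$. Because $-\log$ is strictly decreasing, $\arg\min_j C_{ij} = \arg\max_j P_\theta(Y=j|X=x_i) = \hat{y}^{ZS}_i$, so the right-hand side equals $\tfrac{1}{n}\sum_i C_{i,\hat{y}^{ZS}_i} = \langle \pi^*, C\rangle$. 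Hence $\pi^*$ attains the lower bound and is optimal.

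The third step handles uniqueness, which is the only place where the no-ties assumption is used and is the subtlest piece of the argument. Under the no-ties hypothesis, the row-wise minimizer $\hat{y}^{ZS}_i$ of $C_{ij}$ is unique for every $i$. If $\pi$ is any optimizer of the OT problem, then equality must hold in the row-wise bound above, which forces $\pi_{ij} = 0$ whenever $C_{ij} > \min_{j'} C_{ij'}$; that is, all of the row-$i$ mass $\tfrac{1}{n}$ must sit in the single column $j = \hat{y}^{ZS}_i$. In particular $\pi = \pi^*$ and therefore $\hat{y}^{OT}_i = \arg\max_{j\in[K]} \pi_{ij} = \hat{y}^{ZS}_i$ for every $i\in[n]$.

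The main obstacle, if any, is only the uniqueness step: solvers of OT may in principle return any optimal plan, so one must be careful to note that the optimal coupling is unique as an assignment (each row's mass fully supported on one column) even though uniqueness of the OT plan fails in the presence of ties. The no-ties hypothesis precisely rules this out, and the row-wise inequality above makes the forced support structure transparent. No additional machinery (duality, Sinkhorn, etc.) is needed; the argument is a direct LP-style feasibility plus tight lower-bound calculation.
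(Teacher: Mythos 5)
Your proposal is correct, and it is more complete than the paper's own proof while resting on the same underlying fact. The paper argues by contradiction: it supposes some $\hat{y}^{OT}_i \neq \hat{y}^{ZS}_i$, asserts that this would force $\sum_i -\log P_\theta(Y=\hat{y}^{OT}_i|X=x_i) < \sum_i -\log P_\theta(Y=\hat{y}^{ZS}_i|X=x_i)$, and then notes that the right side is already a rowwise minimum, so no assignment can beat it. That argument is terse and leaves implicit exactly the two things you make explicit: that the deterministic plan $\pi^*_{ij} = \tfrac{1}{n}\mathbbm{1}[\hat{y}^{ZS}_i=j]$ is feasible for $(\mu,\nu^{ZS})$ (otherwise the comparison has no meaning), and that the optimal $\pi$ is in fact supported, row by row, on the single column $\hat{y}^{ZS}_i$, so that $\arg\max_j \pi_{ij}$ is well-defined and equal to it. Your constructive version --- exhibit $\pi^*$, verify feasibility, prove optimality by the row-wise bound $\langle \gamma, C\rangle \ge \tfrac{1}{n}\sum_i \min_j C_{ij}$, then use the no-ties hypothesis to force the support of any optimizer onto the unique row minimizers --- cleanly separates feasibility, optimality, and uniqueness. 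In particular your third step is the one the paper glosses over: an OT solver could in principle return a mass-splitting optimizer, and it is the no-ties assumption, via tightness of the rowwise inequality, that rules this out and makes $\hat{y}^{OT}_i = \hat{y}^{ZS}_i$ unambiguous. So the two proofs share the same key observation (each row's cost minimizer is the zero-shot argmax), but yours is the direct LP-style argument and the paper's is a compressed contradiction that implicitly assumes the optimizer is deterministic.
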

This theorem has the following implications. First, it suggests that the predictions will remain unchanged if we set $\hat{\nu}=\nu^{ZS}$. Second, Bayes-optimal classifiers can be derived through optimal transport, using a (true) cost matrix defined as $C^*_{ij} = -\log P_t(Y=j|X=x_i)$, coupled with the true label distribution $\nu^*$. 

Our analysis begins with the label shift setup, which is a commonly-studied type of distribution shift---as well as a prominent issue when applying zero-shot models. We demonstrate that when the label distribution is correctly specified, optimal transport preserves Bayes-optimal classifier predictions under label shift. Next, we consider general perturbations in label distribution and cost matrix as well as their impact on the resulting  solutions.

\subsection{Label Shift Invariance} 
In this setting, we assume features follow the same conditional distribution across source and target distributions, i.e. $P_s(X|Y)=P_t(X|Y)$. Furthermore, we suppose that the prediction scores are accurately calibrated in the training dataset, such that $s_\theta(x, j) = P_s(Y=j|X=x)$. For zero-shot models, we often lack access to $P_s$. This is a typical scenario in zero-shot model applications: after training on large-scale corpora, we use the pretrained model without the source dataset.

For a given downstream task with the target label distribution  $\nu^*=P_t(Y)$, one standard approach to achieve the Bayes-optimal classifier for the target distribution is to reweight the score function outputs using the ratio $\frac{P_t(Y=j)}{P_s(Y=j)}$. This adjustment leads to:
\begin{align*}
\tilde{s}_\theta(x, j) &= s_\theta(x,j) \cdot \frac{P_t(Y=j)}{P_s(Y=j)} \propto P_t(X=x|Y=j) \cdot P_t(Y=j) \propto P_t(Y=j|X=x).
\end{align*}
This reweighted score function aligns with the target distribution, thus  correcting label shift.

Although reweighting the score function is a popular solution, it faces an important obstacle when applied to zero-shot models like CLIP, where the source distribution $P_s(Y)$ is typically unknown. We show that $\SYSNAME$ successfully induces a Bayes classifier for the target distribution, represented as $f_t(x) = \arg\max_{j \in [K]} P_t(Y=j| X=x)$, without requiring access to $P_s(Y)$. This capability is particularly significant for zero-shot models, enabling them to adapt to target distributions effectively, even in the absence of explicit knowledge of the source distribution.

Now, we show that optimal transport can be an effective tool to correct label shift.
\begin{theorem}\label{thm:label_shift_invariance}
Suppose the pretrained model is well-calibrated for the source distribution,
\begin{align*}
    P_\theta(Y=j|X=x_i)=P_s(Y=j|X=x_i)
\end{align*}
and there is no tie probability, for all $j\neq j', i \in [n]$  
\[P_\theta(Y=j|X=x_i) \neq  P_\theta(Y=j'|X=x_i).\] Denote the Bayes optimal predictions in the target distribution as $\hat{y}^*_i=\arg\max_{j \in [K]} \log P_t(Y=j|X=x_i)$. Then $\SYSNAME$ predictions $\hat{y}=\SYSNAME(\mathbf{X}, \nu^*, C)$ are the same as Bayes optimal predictions $\hat{y}^*$.
\end{theorem}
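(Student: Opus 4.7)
The plan is to reduce the claim to Theorem \ref{thm:classification_as_ot} by showing that, under the stated calibration and label-shift assumptions, the OT problem with cost $C_{ij} = -\log P_\theta(Y=j|X=x_i)$ and target marginal $\nu^*$ is equivalent to the OT problem with the ``target-Bayes'' cost $C^*_{ij} = -\log P_t(Y=j|X=x_i)$. Applying Bayes' rule together with $P_s(X|Y)=P_t(X|Y)$ gives
\[-\log P_t(Y=j|X=x_i) \;=\; -\log P_s(Y=j|X=x_i) \;-\; \log\frac{P_t(Y=j)}{P_s(Y=j)} \;+\; \log\frac{P_t(X=x_i)}{P_s(X=x_i)},\]
so the calibration hypothesis $P_\theta=P_s$ yields $C^*_{ij} = C_{ij} + a_j + b_i$, where $a_j$ depends only on the column index and $b_i$ only on the row index.

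Next, I would observe that any row-plus-column perturbation of the cost leaves the OT optimizer unchanged. Indeed, for every $\gamma\in\Pi(\mu,\nu^*)$ the marginal constraints give
\[\ip{\gamma}{C^*} \;=\; \ip{\gamma}{C} \;+\; \sum_{j=1}^K a_j\,\nu^*_j \;+\; \sum_{i=1}^n b_i\,\mu_i,\]
and the last two sums are constants independent of $\gamma$. Hence $\arg\min_{\gamma\in\Pi(\mu,\nu^*)}\ip{\gamma}{C} = \arg\min_{\gamma\in\Pi(\mu,\nu^*)}\ip{\gamma}{C^*}$, so running $\SYSNAME$ on $(\mu,\nu^*,C)$ is equivalent to running it on $(\mu,\nu^*,C^*)$. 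Moreover, the no-ties hypothesis on the rows of $C$ transfers verbatim to $C^*$, because adding a row constant $b_i$ does not change the relative ordering of entries in row $i$.

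Finally, I would apply Theorem \ref{thm:classification_as_ot} to the rewritten problem with cost $C^*$: that theorem, transported from $P_\theta$ to $P_t$, says that the OT solution with marginal equal to the empirical distribution of the row-wise minimizers of $C^*$ assigns $\arg\max_j\pi_{ij}=\hat y^*_i$. The main obstacle is the marginal-consistency step: for a finite sample, $\nu^*=P_t(Y)$ need not equal the empirical distribution $\nu^{\mathrm{Bayes}}_j=\frac{1}{n}\sum_i\mathbbm{1}[\hat y^*_i=j]$ that Theorem \ref{thm:classification_as_ot} requires. I would resolve this either by interpreting $\nu^*$ as the empirical Bayes-prediction distribution on the given batch (the natural idealized input to $\SYSNAME$ in this setting) or, more carefully, by directly arguing that the plan which concentrates each row's $1/n$ mass on $\hat y^*_i$ is both feasible and objective-minimizing under $C^*$ once the column marginal is chosen to match; then $\arg\max_j\pi_{ij}=\hat y^*_i$ follows because each row has a unique nonzero entry. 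The remaining steps are routine cost-matrix algebra and an appeal to the invariance established above.
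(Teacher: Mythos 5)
Your proposal is correct and follows essentially the same route as the paper: decompose $C^*_{ij}-C_{ij}$ into a column-dependent plus a row-dependent term, argue that the OT optimizer is invariant under such row-plus-column shifts (the paper packages this as Theorem~\ref{thm:invariant_col_perturb} and Corollary~\ref{cor:invariance_ot}, while you give the equivalent one-line constant-offset argument directly from the marginal constraints), and then appeal to Theorem~\ref{thm:classification_as_ot} applied to the Bayes-optimal cost $C^*$. The marginal-consistency gap you flag is genuine and the paper resolves it exactly as you propose: the paper's proof silently inserts the assumption $\nu_j=\frac{1}{n}\sum_{i=1}^n\mathbbm{1}[\hat{y}^{*}_i=j]$, i.e.\ it reads $\nu^*$ as the empirical distribution of the batch's Bayes-optimal predictions rather than the population marginal $P_t(Y)$, which is your first suggested fix.
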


 That is, $\SYSNAME$ recovers a Bayes classifier in the target distribution without access to the source distribution, given the target distribution and a well-calibrated model for the source dataset.

\subsection{General Perturbation Sensitivity} \label{sec:perturbTheory}
In practical applications, calibration error could extend beyond noise in the elements of the cost matrix. A key source of error is label distribution estimation error. Hence, we address a more general setting, examining the impact of simultaneous perturbations in the label distribution and cost matrix of the transport plan. Our result applies techniques from perturbation theory for linear programming . 

We rewrite our optimal transport problem $\min_{\pi \in \Pi(\mu, \nu)}\ip{\pi}{C}$ as a linear programming problem. Let $\pi$ and $C$ be the transport plan and cost matrix respectively. Matrix $G$ and vector $g$ are used to denote the row and column constraints on $\pi$ to form a feasible plan which transports distribution from $\mu$ to $\nu$.
\[H:=\begin{bmatrix}
\mathbf{1}_n^T \otimes \mathbbm{I}_K \\
\mathbbm{I}_n \otimes \mathbf{1}_K^T \\
\end{bmatrix}, G=\begin{bmatrix}
    H \\
    -H
\end{bmatrix}, g=\begin{bmatrix}
    \mu \\
    \nu \\
    -\mu \\
    -\nu
\end{bmatrix}.\]

Then, we have the equivalent linear programming problem, 
\begin{equation}\label{eq: LP_primal}
\min \left\{\sum_{i,j} C_{i,j} \pi_{i,j}|G \cdot \text{vec}(\pi) \geq g, \pi \geq 0 \right\}.
\end{equation}

We adapt a theorem from \citet{robinson1973bounds} with our optimal transport problem notation.

\begin{theorem}\label{thm:perturbation_bound}
    Let the primal linear programming problem be defined as in equation (\ref{eq: LP_primal}), and its dual problem be $\max \{w^T g | w^T G \leq \text{vec}(C)^T, w\geq 0 \}$. 
    Suppose perturbed cost matrix is $\hat{C} = C+\Delta_C$, the perturbed class distribution $\hat{\nu}=\nu + \Delta_{\nu}$, such that $\hat{g} = g+\Delta_g$ where \[\Delta_g=\begin{bmatrix}
    0 \\
    \hat{\nu}-\nu \\
    0 \\
    -\hat{\nu}+\nu
\end{bmatrix}.
\]
Assume that primal and dual problems are solvable. Denote the original solutions as $\pi, w$ and perturbed solutions as $\hat{\pi}$ and $\hat{w}$. Then,
\[ 
\|\pi - \hat{\pi} \|_F^2 \leq \kappa^2
    \left(\| \Delta_\nu \|_2^2 
    + \|[\text{vec}(\Delta_C)]_+ \|_2^2 +\|\text{vec}(C)^T \text{vec}(\hat{\pi}) - g^T \hat{w} \|_2^2\right) - \|w-\hat{w}\|_2^2\]
, where $1 \leq p \leq \infty$ and $\kappa$ is a Hoffman constant that only relates to the original problem \cite{hoffman1952approximate}. 
    

\end{theorem}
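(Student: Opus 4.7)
The plan is to apply the perturbation bound of Robinson adapted to the LP formulation (\ref{eq: LP_primal}). The central tool is Hoffman's lemma: for any polyhedral system $\{z : M z \leq d\}$, there is a constant $\kappa = \kappa(M)$ such that the distance from any point $\hat z$ to the feasible set is at most $\kappa \cdot \|[M\hat z - d]_+\|_2$. The strategy is to package primal feasibility, dual feasibility, and zero duality gap into a single polyhedral system in the joint variable $z = (\text{vec}(\pi), w)$, apply Hoffman's lemma to the original system using the perturbed optimum as the test point, and read off the bound.

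First, I would write down the KKT characterization of primal-dual optimality: a pair $(\pi, w)$ is optimal for the original LP iff (i) $G\,\text{vec}(\pi) \geq g$ and $\text{vec}(\pi) \geq 0$, (ii) $G^T w \leq \text{vec}(C)$ and $w \geq 0$, and (iii) $\text{vec}(C)^T \text{vec}(\pi) = g^T w$. Stacking these constraints---with the scalar equality in (iii) encoded as two opposing inequalities---yields a matrix $M$ and right-hand side $d$ such that $(\pi, w)$ is optimal iff $M z \leq d$. Crucially, $M$ depends only on $G$ (hence only on $n$ and $K$), while $d$ depends linearly on $(C, \mu, \nu)$, so the Hoffman constant $\kappa = \kappa(M)$ is intrinsic to the original problem's structure.

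Next, I would measure how much the perturbed optimum $(\hat\pi, \hat w)$ violates the \emph{original} system block by block. The primal marginal block contributes exactly $\|\Delta_\nu\|_2$, since $\mu$ is unchanged and only the $\nu$-entries of $g$ move; the dual constraint block contributes $\|[\text{vec}(\Delta_C)]_+\|_2$, where the positive part arises because $G^T \hat w \leq \text{vec}(C) + \Delta_C$ violates the original $\leq \text{vec}(C)$ only where $\Delta_C > 0$; and the duality-gap block contributes $|\text{vec}(C)^T \text{vec}(\hat\pi) - g^T \hat w|$, since the perturbed pair has zero gap in the perturbed problem but generally not in the original. Hoffman's lemma then gives the joint bound
\begin{align*}
&\|\text{vec}(\pi) - \text{vec}(\hat\pi)\|_2^2 + \|w - \hat w\|_2^2 \\
&\quad \leq \kappa^2 \bigl(\|\Delta_\nu\|_2^2 + \|[\text{vec}(\Delta_C)]_+\|_2^2 + \|\text{vec}(C)^T \text{vec}(\hat\pi) - g^T \hat w\|_2^2\bigr),
\end{align*}
and rearranging to isolate $\|\pi - \hat\pi\|_F^2 = \|\text{vec}(\pi) - \text{vec}(\hat\pi)\|_2^2$ by subtracting $\|w - \hat w\|_2^2$ from both sides yields exactly the stated inequality.

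The main obstacle will be the bookkeeping: correctly identifying which blocks of $M z \leq d$ are sensitive to which perturbation, justifying why only $[\cdot]_+$ appears on $\Delta_C$ (one-sided dual inequality), why $\Delta_\nu$ enters without a positive part (the primal marginals are equalities encoded as two opposed inequalities, so both $\pm \Delta_\nu$ blocks contribute, and their squared norms combine into a single $\|\Delta_\nu\|_2^2$ up to a constant absorbed in $\kappa$), and why the duality gap enters as a scalar residual from condition (iii). A secondary care point is confirming that $\kappa$ depends on $M$ alone---and hence not on the perturbations or even on the cost $C$---which follows from the standard Hoffman construction applied to the combined KKT polyhedron. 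Once these blocks are aligned with Robinson's original argument, the bound follows term-by-term.
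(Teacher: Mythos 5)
Your proposal is correct and follows essentially the same route as the paper: the paper invokes Robinson's Corollary~3.1 (itself a consequence of Hoffman's lemma applied to the combined primal--dual--duality-gap polyhedron) as a black box, whereas you sketch re-deriving that corollary from Hoffman's lemma directly, but the block-by-block bookkeeping---$\Delta_\nu$ from the primal marginal constraints, $[\text{vec}(\Delta_C)]_+$ from the one-sided dual inequality, and the duality-gap residual of $(\hat\pi,\hat w)$ in the unperturbed problem---matches the paper's argument exactly. One small correction: the two opposed $\nu$-inequality blocks contribute $\|[\Delta_\nu]_-\|_2^2 + \|[\Delta_\nu]_+\|_2^2 = \|\Delta_\nu\|_2^2$ exactly, with no extra constant to absorb into $\kappa$.
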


Ignoring the constant and the subtraction part, the upper bound can be decomposed into three components,
\begin{itemize}[topsep=1pt, partopsep=1pt, itemsep = 0pt]
        \item $\Delta_\nu$: noise (or the estimation error) of the target balance,
        \item $[\text{vec}(\Delta_C)]_+$: noise (or the calibration error) of the cost matrix,
        \item $ \text{vec}(C)^T \text{vec}(\hat{\pi}) - g^T \hat{w}$: the suboptimality of perturbed solution $\hat{w}.$
    \end{itemize}
Theorem \ref{thm:perturbation_bound} implies that the deviation from perturbed solution to true solution is bounded by the magnitude of perturbations and suboptimality of the perturbed solution. From this result, we can expect prediction accuracy to deteriorate with perturbations in the label distribution and calibration.




\subsection{Effectiveness of \ROTTER} We provide a theoretical result showing that R-OTTER can learn an optimal parameter by learning reweighting parameters from $\hat{y}_{\SYSNAME}$ as pseudolabels, and produce identical predictions to $\SYSNAME$ \ once the optimal parameter is obtained in the label shift setup.

\begin{theorem}\label{thm:rotter}
Under the same assumptions as in Theorem \ref{thm:label_shift_invariance}, the parameter $r^* = P_t(Y)/P_s(Y)$ is optimal when learning with $y_{\SYSNAME}$ as pseudolabels. \end{theorem}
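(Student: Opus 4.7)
The plan is to show that, under the label-shift and calibration assumptions inherited from Theorem \ref{thm:label_shift_invariance}, the reweighted classifier $P_{\theta,r^*}(Y\mid X)$ coincides with the target posterior $P_t(Y\mid X)$. Combined with Theorem \ref{thm:label_shift_invariance}, this will immediately imply that the classifier parametrized by $r^*$ agrees with $\SYSNAME$ pointwise, so it perfectly classifies the pseudolabels $y_{\SYSNAME}$---the sense in which $r^*$ is optimal.

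First, I would substitute the calibration assumption $P_\theta(Y=j\mid X=x)=P_s(Y=j\mid X=x)$ and the choice $r^*_j=P_t(Y=j)/P_s(Y=j)$ into the definition of $P_{\theta,r^*}$, so that each term in both the numerator and denominator takes the form $[P_t(Y=j)/P_s(Y=j)]\,P_s(Y=j\mid X=x)$. Next, I would apply Bayes' rule on the source distribution, $P_s(Y=j\mid X=x)=P_s(X=x\mid Y=j)P_s(Y=j)/P_s(X=x)$, and invoke the label-shift identity $P_s(X\mid Y)=P_t(X\mid Y)$ to rewrite each such term as $P_t(Y=j\mid X=x)\,P_t(X=x)/P_s(X=x)$. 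Since the factor $P_t(X=x)/P_s(X=x)$ is independent of $j$, it cancels between the numerator and the denominator, yielding $P_{\theta,r^*}(Y=j\mid X=x)=P_t(Y=j\mid X=x)$.

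To close the argument, I would invoke Theorem \ref{thm:label_shift_invariance}, which (together with the no-tie hypothesis) guarantees $y_{\SYSNAME}(x_i)=\arg\max_{j} P_t(Y=j\mid X=x_i)$. Combining this with the identity above gives $\arg\max_{j} P_{\theta,r^*}(Y=j\mid X=x_i)=y_{\SYSNAME}(x_i)$ for every $i$, so $r^*$ makes zero classification errors against the pseudolabels, and the induced classifier is exactly the target posterior---the natural population minimizer of cross-entropy within the restricted reweighting family.

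The hard part will be pinning down the precise notion of ``optimal'' used in the theorem. Hard-label cross-entropy over a restricted parametric family is not in general uniquely minimized at a soft classifier matching the label-generating posterior, so I would interpret optimality either in the $0$-$1$/argmax sense (where matching $P_t(Y\mid X)$ is sufficient and $r^*$ attains zero loss on the pseudolabels) or by observing that the $(K-1)$-parameter reweighting family is exactly rich enough to realize the unique log-ratio shift $\log r^*=\log P_t(Y)-\log P_s(Y)$ that converts $P_s(Y\mid X)$ into $P_t(Y\mid X)$ under label shift. Once the notion is fixed, the remainder of the proof is bookkeeping: carrying $P_t(X)/P_s(X)$ through the Bayes-rule manipulation and verifying the cancellation.
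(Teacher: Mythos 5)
Your proof follows essentially the same route as the paper's: substitute the calibration assumption and $r^*_j = P_t(Y=j)/P_s(Y=j)$ into the reweighting formula, push through Bayes' rule with the label-shift identity $P_s(X\mid Y)=P_t(X\mid Y)$, cancel the $j$-independent factor, conclude $P_{\theta,r^*}(Y\mid X)=P_t(Y\mid X)$, and then invoke Theorem~\ref{thm:label_shift_invariance} to match the argmax against $y_{\SYSNAME}$. Your closing remark about the imprecision of ``optimal'' is a fair observation the paper does not engage with---the paper's proof likewise only establishes $\arg\max_j P_{\theta,r^*}(Y=j\mid X)=y_{\SYSNAME}$ and asserts optimality from there---so your zero-one/argmax reading is exactly the sense in which the paper's claim holds.
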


The proof is provided in Appendix \ref{appsubsec:rotter_proof}.
\section{Experiments} \label{sec:experiments}
The primary objective of our experiments is to (1) validate that $\OTTER$ improves zero-shot model performance when given accurate label distribution estimates and (2) investigate its sensitivity to perturbations. In experiments on real datasets (Section \ref{subsec:exp_real}), we confirm that $\SYSNAME$ can improve zero-shot classification significantly in a variety of settings. In synthetic experiments (Section \ref{subsec:synthetic_exp}), we validate our theoretical claims---label shift invariance and sensitivity to perturbation in a fully controllable setting. Additionally, we show that \OTTER \ can be combined with label distribution estimation methods in the few-shot learning setting (Section \ref{subsubsec:exp2_fewshot}). Next, we show the empirical results for H-OTTER that leverages label hierarchy (Section \ref{subsubsec:exp_hierarchy}) and \ROTTER \ that mitigates the batched prediction requirement (Section \ref{subsubsec:exp_rotter}). Finally, we we apply 
\SYSNAME \ to mitigate LLM selection bias (Section \ref{subsec:exp_selection_bias}). Our code is available at \href{https://github.com/SprocketLab/OTTER}{https://github.com/SprocketLab/OTTER}.

\subsection{Real Data Experiments}\label{subsec:exp_real}
\begin{table*}[t!]
\setlength\tabcolsep{1.5pt} 
    \centering
    \small
    \begin{tabular}{l|ccc|l|ccc}
        ~ & Zero-shot & Prior Matching & \SYSNAME & ~ & Zero-shot & Prior Matching & \SYSNAME\\ 
        \toprule
        CIFAR10 & 88.3 & 91.3 ($\pm 0.0$) & \hl{91.7} & Oxford-IIIT-Pet & 83.8 & 82.0 ($\pm 0.3$)& \hl{88.8} \\ 
        CIFAR100 & 63.8 & 64.1 ($\pm 2.7$)& \hl{67.9} & Stanford-Cars & 55.7 & 39.8 ($\pm 2.6$) & \hl{59.7} \\
        Caltech101 & 79.8 & 59.3 ($\pm 15.4$)& \hl{88.7} & STL10 & 98.0 & 98.4 ($\pm 0.0$) & \hl{98.6} \\ 
        Caltech256 & 79.8 & 9.5 ($\pm 1.5$)& \hl{87.0} & SUN397 & 47.1 & 6.7 ($\pm 1.6$) & \hl{54.1} \\
        Country211 & 19.8 & 19.0 ($\pm 0.1$) & \hl{21.1} & CUB & 46.0 & 40.4 ($\pm 0.0$)& \hl{50.4} \\
        DTD & 39.0 & 42.1 ($\pm 0.1$)& \hl{44.4} & ImageNet & 60.2 & 53.6 ($\pm 0.1$)& \hl{62.9} \\
        EUROSAT & 32.9 & 41.6 ($\pm 0.8$) & \hl{42.0} & ImageNet-r & 68.9 & 16.7 ($\pm 3.5$) & \hl{72.4}\\
        Flowers102 & 64.0 & 54.0 ($\pm 14.1$)& \hl{70.8}& ImageNet-Sketch & 39.8 & 36.5 ($\pm 0.4$) & \hl{44.5}\\
        Food101 & 85.6 & 86.8 ($\pm 3.1$)& \hl{89.9} \\
        \midrule
        Amazon review & 74.0 & 58.8 ($\pm 46.4$) & \hl{91.7} & GenderBias & 84.1 & 41.4 ($\pm 39.6)$ & \hl{91.9}\\
        CivilComments & 48.4 & 57.2 ($\pm 37.7$)& \hl{81.4} & HateXplain & 30.9 & 31.3 ($\pm 3.3$)& \hl{34.3} \\
        \bottomrule
    \end{tabular}
    \caption{Accuracy (\%) in zero-shot image classification (ViT-B/16) and text classification (BERT). We use the true label distribution as the label distribution specification. The numbers in parenthesis of Prior Matching represent the standard deviation of 10 different samplings of the validation set. $\SYSNAME$ produces improvements nearly across-the-board, with an average lift 4.9\% in image classification and 15.5\% in text classification, outperforming a powerful baseline, prior matching in almost all cases.}\label{tab:exp1_true_balance_img}
\end{table*}
We hypothesize that the model performance can improve significantly when the label distribution specification is exact.

\noindent \textbf{Setup and Procedure.} We used 17 image classification datasets and 4 text classification datasets. We employed CLIP \cite{CLIP_base} for image zero-shot classification, and BERT \cite{devlin2018bert}. A comprehensive list and details of experiments can be found in Appendix \ref{appsec:experiment_details}.

\noindent \textbf{Baseline.}
We adopt Prior Matching (PM) \citep{liusie2023mitigating} as a baseline. It optimizes score weighting parameters to align with the label distribution specification. A detailed explanation of Prior Matching is given in Appendix \ref{appsec:algorithm_details}. It is worth noting that the performance of \emph{Prior Matching is highly sensitive to hyperparameters such as temperature and learning rate}. Optimal hyperparameters may vary across different datasets. We selected hyperparameters through grid search, by evaluating their performance on a validation set, consisting of 10 labeled examples per class. In contrast, we highlight that $\SYSNAME$ is tuning-free.  

\noindent \textbf{Results.} Table \ref{tab:exp1_true_balance_img} shows the image classification results with CLIP (ViT-B/16) and the text classification results with BERT. Notably, $\SYSNAME$ \emph{demonstrates a 4.8\% and 15.5\% enhancement on average in image and text zero-shot classification, respectively}. While Prior Matching shows competitive performance when accurately tuned, it often struggles. We found that hyperparameter tuning fails in the class-imbalanced datasets such as Caltech256, SUN397, ImageNet-r (Appendix \ref{appsec:experiment_details}, Table \ref{tab:data_imbalance}). This suggests that the hyperparameter selection process necessitates a validation set label distribution similar to the target distribution---rendering it unusable in zero-shot scenarios. More details and additional experiment results --- including the sensitivity study on the label distribution specification error, computation time, and combination with other prompting methods --- are provided in Appendix \ref{appsec:exp1_exact_prior}.

\subsection{Synthetic Experiments}\label{subsec:synthetic_exp}

We hypothesize $\SYSNAME$ is invariant to label shift under the conditions in Theorem \ref{thm:label_shift_invariance}. We also investigate the sensitivity to perturbations of the cost matrix and the label distribution.

\noindent \textbf{Setup and Procedure.} We simulate label shift in logistic regression. Suppose $X|Y=0 \sim \mathcal{N}(-1,1)$ and $X|Y=1 \sim \mathcal{N}(1,1)$. Training data is sampled from a mixture of Gaussians $X_s \sim \nu^s_{0}\mathcal{N}(-1,1)+\nu^s_{1}\mathcal{N}(1,1)$ such that $P_s(Y=0)=\nu^s_0, P_s(Y=1)=\nu^s_1$, $\nu^s_0+\nu^s_1=1$. Similarly, we sample the test data from $X_t \sim \nu^t_{0}\mathcal{N}(-1,1)+\nu^t_{1}\mathcal{N}(1,1)$. We fix the training set label distribution as $\nu^s_0=0.1, \nu^s_1=0.9$ and vary test set label distribution $\nu^t$ to simulate label shift. We train a logistic regression model with 10,000 samples from the source distribution, and test the model with 10,000 samples from the target distribution. A Bayes-optimal classifier in the target distribution is given by $f_{Bayes}(x)=\mathbbm{1}[x\geq \frac{1}{2}(\log \frac{\nu^t_0}{\nu^t_1} +1)$]. The naive classifier is defined as the maximizer of the predicted score. The $\OTTER$  predictions are produced with Algorithm \ref{alg:main}, with the cost matrix $C_{ij}=-\log P_\theta(Y=j|X=x_i)$ and the label distribution specification $\nu^t$, where $P_\theta(Y|X)$ represents the logistic regression model scores. 

We separately investigate perturbed prediction score matrix and perturbed label distribution specification's impact on the prediction accuracy. For perturbed prediction scores, we fix the label distribution to be the true one, and add noise $\delta \sim \mathcal{N}(0, \sigma^2)$ of varying levels $\sigma$ to the predicted score $P_\theta(Y=1|X)$. For the perturbed label distribution specification, we fix the prediction score to be true scores and add noise $\epsilon$: $\hat{\nu}=\nu^t+(\epsilon, -\epsilon)$. We use these perturbed variants to obtain perturbed solutions and compare with ground-truth solution.

\begin{figure*}[!t]
  \centering
  \begin{subfigure}[b]{0.48\linewidth}
\includegraphics[width=\linewidth]{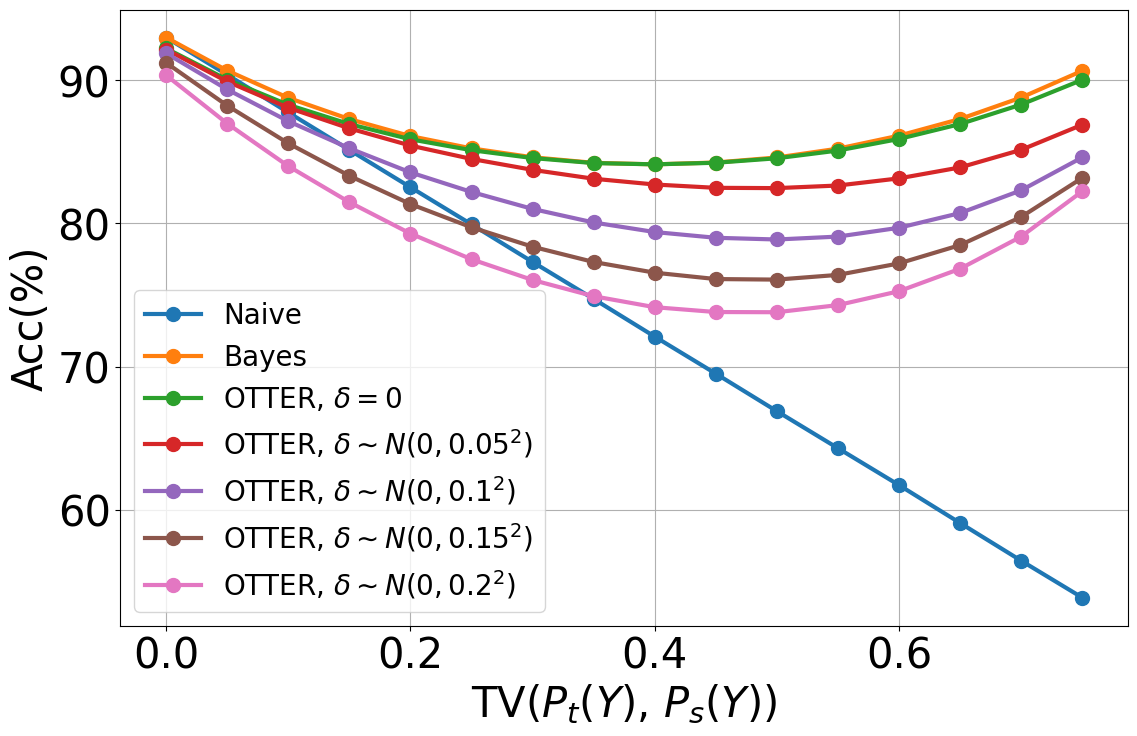}
    \caption{Prediction accuracy changes with perturbed confidence score.}
    \label{fig:sub1}
  \end{subfigure}%
  \hspace{3mm}
  \begin{subfigure}[b]{0.48\linewidth}
\includegraphics[width=\linewidth]{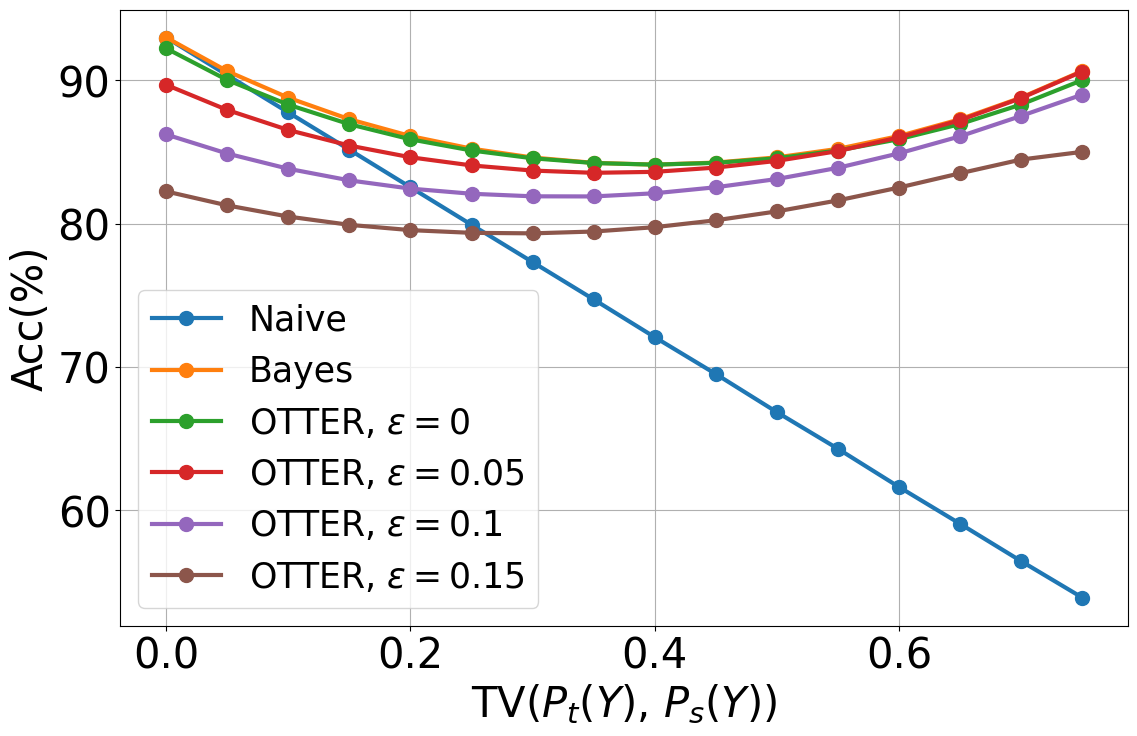}
    \caption{Prediction accuracy changes with perturbed label distribution.}
  \end{subfigure}
  \caption{Synthetic experiment results. X-axis represents total variation distance between the source and the target distribution, describing label shift severity. Y-axis represents prediction accuracy. Curves represent different methods and noise levels. Our approaches dramatically outperform the baseline at higher mismatch levels.}  
  \label{fig:synthetic}
\end{figure*}

\noindent \textbf{Results.} Figure \ref{fig:synthetic} illustrates how accuracy changes with label shift when the predicted score is perturbed and when label distribution specification is perturbed. We observe that the naive classifier deteriorates as the total variation distance between source and target distributions increases. It indicates that naive classifier is sensitive to label shift. However, without perturbation, \emph{$\SYSNAME$ remains unaffected by the label distribution shift}, which validates our invariance result in Section \ref{sec:theoretical_results}. 

In the case of confidence prediction perturbation, both the naive classifier and OTTER have accuracy decreasing as perturbation level increases. For simplicity, we omitted the naive classifier's performances under different levels of noise as adding zero-mean noise does not alter its accuracy significantly. We observe that OTTER has better performance than the naive method when significant label shift exists. Similarly, for label distribution perturbation, we observe as the noise level $\epsilon$ increases, $\SYSNAME$'s accuracy downgrades---but still yields better performance when label shift is severe. 

Our experimental results suggest simply using prediction scores for zero-shot classification leads to inaccurate predictions under label shift, while  $\SYSNAME$ is robust to label shift when no perturbations are present. Perturbations in both predicted score and label distribution specification downgrades the predicted accuracy, as expected, but $\SYSNAME$ still yields better results than the naive baseline.

\begin{table}[!t]\small
\setlength\tabcolsep{1.5pt} 
    \centering
    \begin{tabular}{c|ccc|ccc}
        Dataset & ZS & ZS BBSE+PM & ZS BBSE+OT & LP & LP BBSE+PM & LP BBSE+OT \\
        \toprule
        CIFAR10 & 88.3 & 72.7 & 87.5 & \ohl{90.2} & 89.8 & 90.0 \\ 
        CIFAR100 & \ohl{63.8} & 3.2 & 59.1 & 58.3 & 24.4 & 60.5 \\
        Caltech101 & 79.8 & 32.5 & 80.7 & \ohl{91.5} & 87.5 & 91.4 \\
        Caltech256 & 79.8 & 6.0 & 80.3 & 84.5 & 58.4 & \hl{85.4} \\
        Country211 & \ohl{19.8} & 1.5 & 15.9 & 12.4 & 9.2 & 13.2 \\
        DTD & 39.0 & 3.2 & 31.2 & 58.6 & 49.0 & \hl{59.3} \\ 
        EUROSAT & 32.9 & 19.2 & 34.0 & 74.6 & 71.6 & \hl{75.9} \\
        Flowers102 & 64.0 & 40.3 & 60.8 & 89.0 & 87.8 & \hl{90.2} \\ 
        Food101 & \ohl{85.6} & 15.3 & 82.3 & 79.1 & 60.6 & 79.8 \\
        Oxford-IIIT-Pet & \ohl{83.8} & 43.3 & 71.4 & 75.7 & 72.0 & 75.6 \\
        Stanford-Cars & 55.7 & 2.3 & 51.7 & 64.5 & 65.4 & \hl{66.3} \\ 
        STL10 & \ohl{98.0} & 97.4 & 96.9 & 97.7 & 97.5 & 97.6 \\ 
        SUN397 & \ohl{47.1} & 6.9 & 25.6 & 0.2 & 0.2 & 0.2 \\ 
        cub & 46.0 & 3.3 & 45.5 & 72.2 & 63.3 & \hl{75.6} \\ ImageNet & \ohl{60.2} & 0.8 & 57.7 & 56.8 & 53.6 & 59.8 \\
        ImageNet-r & \ohl{68.9} & 1.7 & 63.3 & 54.9 & 47.6 & 57.1 \\
        ImageNet-Sketch & 39.8 & 0.8 & 40.4 & 43.4 & 37.9 & \hl{48.3} \\
        \midrule
        Amazon & 74.0 & 47.9 & \hl{89.1} & 71.3 & 66.9 & 71.3 \\
        CivilComments & 48.3 & \ghl{69.1} & 55.8 & 53.8 & 45.5 & 53.8 \\
        Gender & 84.0 & 57.0 & \hl{87.8} & 78.0 & 71.2 & 78.5 \\
        HateXplain & 30.4 & 34.4 & \hl{35.2} & 32.8 & 32.7 & 32.3 \\ \bottomrule
    \end{tabular}
    \caption{Accuracy (\%) with $\SYSNAME$ combined with class balance estimation. ZS BBSE denotes BBSE label distribution estimation based on zero-shot prediction scores, and LP BBSE denotes BBSE label distribution estimation based on linear probing prediction scores. We report the mean of 10 different random samplings of the validation set. $\SYSNAME$ produces moderate improvements when comined with linear probing in image classification tasks. In text classification tasks, $\SYSNAME$ significantly improves accuracy, up to 15.1\%, even with noisy label distribution estimation.} \label{tab:exp2_fewshot_full}
\end{table}

\begin{wrapfigure}{L}{0.60\textwidth}
\begin{minipage}{\linewidth}
\centering
\begin{subfigure}{0.46\linewidth}
\includegraphics[width=\linewidth]{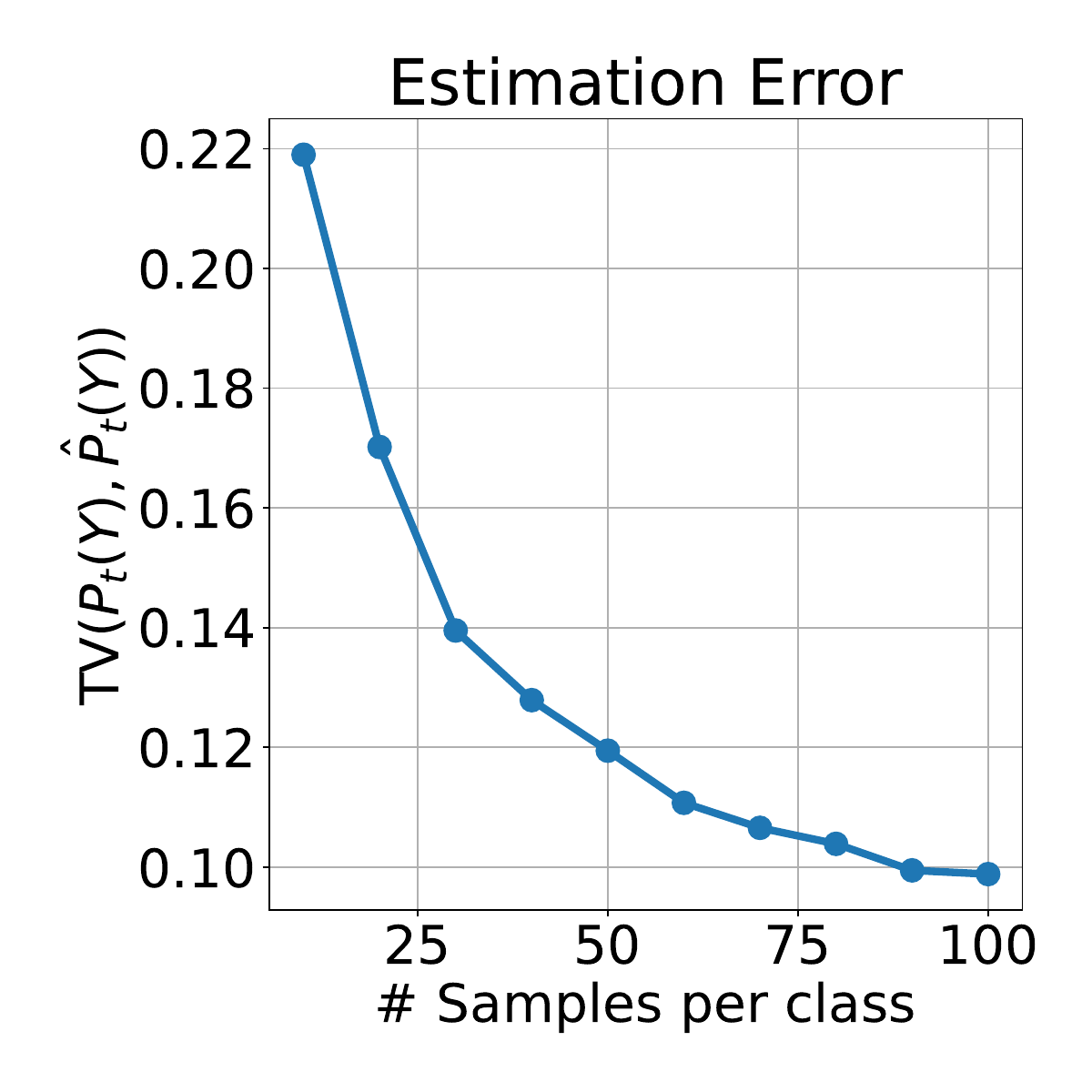}
    \caption{BBSE estimation error}
    \label{fig:sub1}
  \end{subfigure}%
  \begin{subfigure}[b]{0.46\linewidth}
\includegraphics[width=\linewidth]{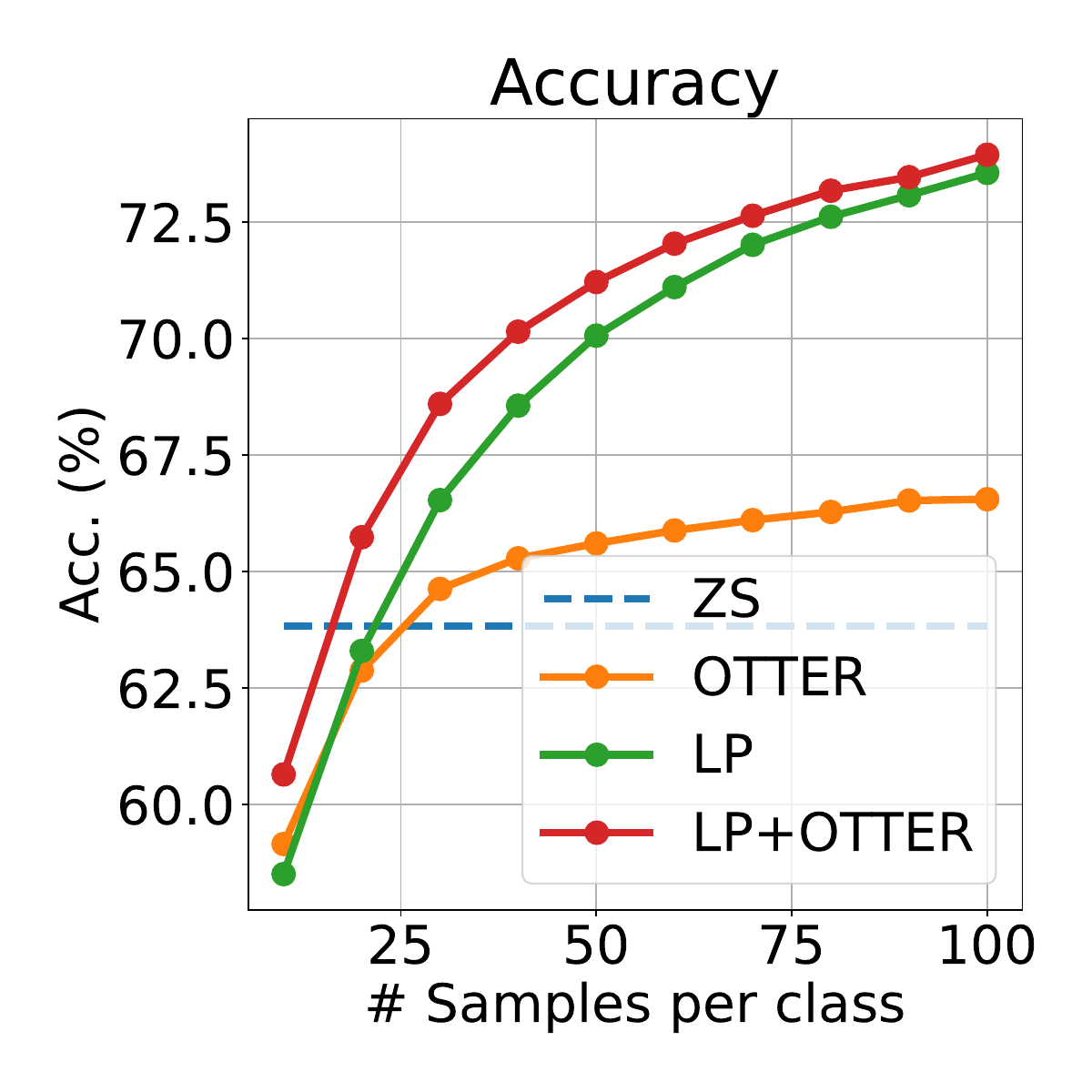}
    \caption{Accuracy}
  \end{subfigure}
  \captionof{figure}{Ablation on the number of samples in few-shot learning. In (a), We can observe that BBSE estimation get more precise as the number of samples increases. Following this, OTTER gets better accuracy in (b). Additionally, OTTER consistently improves linear probing when combined. }\label{fig:num_sample_ablation_main}
  \vspace{-5mm}
\end{minipage}\hfill
\end{wrapfigure}

\subsection{Few-shot adaptation with label distribution estimation}\label{subsubsec:exp2_fewshot} 
We anticipate that $\OTTER$ can be used in few-shot learning when combined with label distribution estimation methods. We expect $\OTTER$ can improve zero-shot classification if the label distribution estimation error is sufficiently small. Also, we expect $\OTTER$ can improve linear probing, which is one of standard approaches for few-shot learning.

\noindent \textbf{Setup and Procedure.} We use the same datasets as the previous experiment. We consider a 10-shot learning setting: 10 labeled samples per class are given. Note that labeled samples have uniform label distribution, while the label distribution in the target distribution \emph{may not be uniform}. This setting requires the use of label distribution estimation methods used in label shift adaptation \cite{lipton2018detectingbbse,azizzadenesheli2019regularizedrlls,garg2020unifiedmlls}. We estimate the target label distribution with Black Box Shift Estimation (BBSE) \cite{lipton2018detectingbbse}. BBSE estimates the target balance using confusion matrix, under the label shift assumption. For detailed explanation, refer to Appendix \ref{appsec:algorithm_details}.


\noindent \textbf{Results.} Table \ref{tab:exp2_fewshot_full} shows the image and text zero-shot classification results with the label distribution estimation via BBSE and linear probing.  The image classification results show that $\SYSNAME$ can yield  mild improvement over linear probing, even with the label distribution estimation errors. Figure \ref{fig:num_sample_ablation_main} shows that accuracy improvement is consistent across the number of samples used for linear probing.
In text classification, we found $\OTTER$ improves zero-shot text classifications where the number of classes is small ($K=2$ or $3$). While it shows a relatively high variance due to the small sample size (20 $\sim$ 30), the average accuracy improves significantly over zero-shot classification. More detailed analysis regarding label distribution estimation error and the number of samples is provided in Appendix \ref{appsec:exp2_fewshot}.

\begin{wraptable}{L}{0.45\textwidth}
\begin{minipage}{\linewidth}
    \small
    \begin{tabular}{lccc}
        ~ & \SYSNAME & H-\SYSNAME \\ 
        \toprule
        RN50 & 38.5 ($\pm 4.9$) & \rhl{43.6 ($\pm 3.1$)} \\
        RN101 &  39.9 ($\pm 6.9$) & \rhl{44.8 ($\pm 5.1$)} \\
        ViT-B/32 &  59.0 ($\pm 3.1$) & \rhl{59.3 ($\pm 2.9$)} \\
        ViT-B/16 &  54.6 ($\pm 8.3$) & \rhl{58.2 ($\pm 3.6$)} \\
        ViT-L/14 &  \hl{71.3 ($\pm 3.9$)} & 69.4 ($\pm 5.2$) \\
        \bottomrule
    \end{tabular}
    \caption{Accuracy (\%) with hierarchical $\SYSNAME$ (H-$\SYSNAME$). (H-$\SYSNAME$) yields additional improvements over $\SYSNAME$, up to 5.1\%, using the hierarchy information of labels.}\label{tab:exp3_hotter}
\end{minipage}
\vspace{-5mm}
\end{wraptable}

\subsection{Zero-shot prediction improvement with class hierarchy}\label{subsubsec:exp_hierarchy} 
We hypothesize incorporating class hierarchy information can enhance few-shot label distribution estimation and thus improve zero-shot predictions. 


\noindent \textbf{Setup and Procedure.}
We use a subset of CIFAR100 data with WordNet hierarchy. Specifically, we take `fish' and `tree' as superclasses and have 5 subclasses in each of them. We suppose we can access 10 labeled samples per each subclass. We first apply $\SYSNAME$ with the superlevel label distribution estimation and make pseudo-labels of superlevel class in the test set. Using them, we estimate the sublevel label distribution and use $\SYSNAME$.

\noindent \textbf{Results.}
Table \ref{tab:exp3_hotter} presents the results. As anticipated, we note an enhancement in accuracy when compared to the naive implementation of $\SYSNAME$. Specifically, we observe a significant improvement in accuracy for RN50, RN101, and ViT-B/16, which we attribute primarily to the reduction in label distribution estimation error. Further details are provided in Appendix \ref{appsec:exp_hierarch}.

\subsection{Effectiveness of \ROTTER}\label{subsubsec:exp_rotter}
We show that \ROTTER \ provides a performance comparable to that of \OTTER \ empirically.

\paragraph{Setup and Procedure.} We use the identical setup for Section \ref{subsec:exp_real}. \ROTTER \ learns reweighting parameters in validation set using $y_{\OTTER}$. Note that the validation set is not required to be labeled since $y_{\OTTER}$ is used as pseudolabels in the validation set.

\paragraph{Results.}  Although \ROTTER \ is suboptimal compared to $\OTTER$ due to generalization issues, it still provides label distribution correction, improving accuracy over zero-shot predictions. We also provide synthetic experiments for \ROTTER \ in Appendix \ref{appsubsec:rotter_synthetic}.

\begin{table}[!t]
    \setlength\tabcolsep{1.5pt} 
    \centering
    \small
    \begin{tabular}{l|ccc|l|ccc}
        &Zero-shot &OTTER &R-OTTER & &Zero-shot &OTTER &R-OTTER \\\midrule
        CIFAR10 &88.3 &91.7 &88.4 &Oxford-IIIT-Pet &83.8 &88.8 &85.7 \\
        CIFAR100 &63.8 &67.9 &65.3 &Stanford-Cars &55.7 &59.7 &51.0 \\
        Caltech101 &79.8 &88.7 &88.2 &STL10 &98.0 &98.6 &98.1 \\
        Caltech256 &79.8 &87.0 &79.6 &SUN397 &47.1 &54.1 &46.6 \\
        Country211 &19.8 &21.1 &19.5 &CUB &46.0 &50.4 &44.4 \\
        DTD &39.0 &44.4 &44.0 &ImageNet &60.2 &62.9 &59.8 \\
        EUROSAT &32.9 &42.0 &39.3 &ImageNet-r &68.8 &72.4 &68.5 \\
        Flowers102 &64.0 &70.8 &69.7 &ImageNet-Sketch &39.8 &44.5 &39.3 \\
        Food101 &85.6 &89.9 &88.5 & & & & \\
        \bottomrule
    \end{tabular}
    \caption{Accuracy (\%) of naive zero-shot, \OTTER, and \ROTTER \ in zero-shot image classification (ViT-B/16)}\label{tab:exp6_rotter}    
    \vspace{-3mm}
\end{table}

\subsection{Mitigating selection bias in LLM multiple-choice questions}\label{subsec:exp_selection_bias}
Selection bias is the tendency of LLMs to favor certain prefix tokens in multiple-choice questions \citep{zheng2023selectionbias1,dalvidiscovering,choi2024selectionbias2,wei2024selectionbias3}. We demonstrate that \SYSNAME\ can effectively mitigate selection bias by randomly shuffling the options and enforcing a uniform class balance in \SYSNAME.

\paragraph{Setup and Procedure.} Our experimental setup follows \citet{zheng2023selectionbias1}, utilizing the MMLU \citep{hendrycks2020mmlu}, ARC-Challenge \citep{clark2018arcchallenge}, and CommonsenseQA (CSQA) \citep{talmor2019csqa} datasets. We test with llama-2(-chat)-7/13B \citep{touvron2023llama} and vicuna-v1.3-7/13B \citep{vicuna2023} models, treating the probabilities of each option token (A/B/C/D) as prediction probabilities. \SYSNAME\ is applied under the assumption of a uniform distribution. We use 0-shot predictions and evaluate performance using accuracy and the standard deviation of recalls (RStd) as metrics.

\paragraph{Results.}

\begin{table}[!t]\centering
\scriptsize
\begin{tabular}{l|rr|rr|rr|rr|rr|rr}\toprule
&\multicolumn{4}{c}{\textbf{ARC-Challenge}} &\multicolumn{4}{c}{\textbf{CommonsenseQA (CSQA)}} &\multicolumn{4}{c}{\textbf{MMLU }} \\\midrule
&\multicolumn{2}{c}{Naive} &\multicolumn{2}{c}{OTTER} &\multicolumn{2}{c}{Naive} &\multicolumn{2}{c}{OTTER} &\multicolumn{2}{c}{Naive} &\multicolumn{2}{c}{OTTER} \\
\textbf{Model} &Acc.$(\uparrow)$ &RStd $(\downarrow)$&Acc. &RStd &Acc. &RStd &Acc. &RStd &Acc. &RStd &Acc. &RStd \\
\midrule
Llama-2-7b &36.0 &27.4 &\textbf{45.5} & \textbf{1.7} &31.9 &28.4 & \textbf{42.7} & \textbf{3.8} & 36.1 & 22.5 & \textbf{40.0} & \textbf{0.9} \\
Llama-2-13b & \textbf{62.9} &6.0 &62.8 & \textbf{1.5} &57.0 &10.2 &\textbf{58.1} & \textbf{2.0} &51.1 &6.9 & \textbf{51.8} & \textbf{1.3} \\
\cdashline{1-13}
Llama-2-7b-chat &56.5 &12.4 & \textbf{57.4} & \textbf{1.3} &56.5 &15.2 & \textbf{60.4} & \textbf{3.5} &45.9 &13.1 & \textbf{46.6} &	\textbf{0.5}\\
Llama-2-13b-chat &64.4 &13.7 & \textbf{66.2} & \textbf{2.5} &64.0 &9.8 &\textbf{66.4} & \textbf{1.8} &52.3 &13.9 & \textbf{53.8} & \textbf{1.0} \\
\cdashline{1-13}
vicuna-7b &53.5 &8.6 & \textbf{54.1} & \textbf{2.3} &56.9 &8.3 & \textbf{57.6} & \textbf{1.0} &46.6 &5.8 & \textbf{46.9} & \textbf{0.3} \\
vicuna-13b &62.9 &8.3 & \textbf{63.3} & \textbf{2.4} &63.4 &12.9 & \textbf{64.5} & \textbf{3.1} &50.5 &9.9 & \textbf{50.7} & \textbf{1.1} \\
\bottomrule
\end{tabular}
\caption{Mitigation of selection bias via \SYSNAME.}\label{tab:selection_bias}
\vspace{-5mm}
\end{table}

Table \ref{tab:selection_bias} presents the experimental results. OTTER significantly reduces selection bias, enhancing accuracy by up to 10.6\% and lowering RStd by as much as 25.7\%.

\section{Conclusion} \label{sec:conclusion}
While zero-shot models have been successful, pretraining using Internet-scale datasets yields artifacts that may harm downstream tasks. In this paper, we identify the bias in class balance, and provide a simple but powerful solution using optimal transport. Theoretically, we describe how OT can fix label distribution mismatch and its sensitivity to perturbations. Empirically, we validated our approach's ability to improve zero-shot classification accuracy, mitigating label distribution mismatch in zero-shot models. We believe our method can expedite the deployment of zero-shot classification, reducing the necessity of finetuning for downstream tasks.

\newpage
\subsubsection*{Acknowledgments}
We are grateful for the support of the NSF under CCF2106707 (Program Synthesis for Weak Supervision) and the Wisconsin Alumni Research Foundation (WARF). Jitian Zhao gratefully acknowledge support from the IFDS at UW-Madison and NSF through TRIPODS grant 2023239 for their support. We thank Nick Roberts for his valuable discussions and insights.

\bibliography{main}
\bibliographystyle{plainnat}

\newpage
\appendix
\section*{Appendix}
The appendix contains glossary, algorithm details, proofs, and detailed experimental results. The glossary contains a convenient reminder of our terminology (Appendix \ref{appsec:glossary}). Appendix \ref{sec:related_works} provides more related works and discussion about the relationship between our work and related papers.
Appendix \ref{appsec:algorithm_details} describes the relevant algorithms used in our experiments, including Prior Matching \cite{liusie2023mitigating} and BBSE \cite{lipton2018detectingbbse}.
Appendix \ref{appsec:theory_details} provides the proofs of theorems that appeared in Section \ref{sec:theoretical_results}.
Finally, we give more details and analysis of the experiments and provide additional experiment results in Appendix \ref{appsec:experiment_details}.

\section{Glossary}\label{appsec:glossary}
The glossary is given in Table \ref{tab:glossary} below.
\begin{table}[h]
\centering
\begin{tabular}{p{0.1\linewidth} p{0.8\linewidth}}
\hline
Symbol & Definition \\ \hline
$n$ & Number of points \\
$K$ & Number of classes \\
$[K]$ & The set of classes $\{1, 2, \ldots, K\}$ \\
$\mathcal{X}$ & Input feature space \\
$\mathcal{Y}$ & Label space \\
$X$ & Input features \\
$Y$ & True labels \\
$P_s$ & Source (training) distribution of data \\
$P_t$ & Target (testing) distribution of data \\
$s_{\theta}$ & Prediction score function with parameter $\theta$ \\
$C^*$ & Bayes optimal cost matrix for prediction\\
$\hat{C}$ & Estimate of cost matrix for prediction\\
$\nu$ & Class balance for true labels\\
$\nu^{ZS}$ & Class balance for predicted labels from the zeroshot model\\
$\Delta_C$ & Additive perturbations to cost matrix \\
$\Delta_{\nu}$ & Additive perturbation to class balance \\
$\pi$ & Optimal transport plan \\
$G, g$ & Constraint matrix and vector for linear programming s.t. feasible set is $\{x \in \mathcal{X}: Gx\geq g\}$ \\
$w$ & Dual solution for linear programming problem \\
$\kappa$ & Hoffman constant for the true optimal transport problem \\
$[x]_+$ & Positive parts of $x$, i.e. $[x]_+ := x\mathbbm{1}[x>0]$ \\
$[x]_-$ & Negative parts of $x$, i.e. $[x]_- := x\mathbbm{1}[x<0]$ \\
$\text{vec}(A)$ & Vectorized $A$, $\text{vec}(A)=[A_{11}, \ldots A_{m1}, A_{12}, \ldots, A_{m2}, \ldots, A_{1n}, \ldots A_{mn}]^T$ for $A \in \real^{m \times n}$\\

\hline
\end{tabular}
\caption{Glossary}\label{tab:glossary}
\end{table}

\section{Extended Related Work} \label{sec:related_works}

\noindent \textbf{Improving Zero-shot Classification at Inference Time.} As zero-shot classification has gained popularity, several works have been developed to improve zero-shot classification at inference time.  \citet{chuang2023debiasing, adila2023zero} use vector projection methods to remove spurious correlations at inference time. \citet{menon2022visual, novack2023chils, an2023more} augment prompts with language models and combine their classification output to improve zero-shot performance. \citet{roberts2023geometry} uses additional information of label space geometry to extend model pre-trained on the label subset to broader use-cases. While these works try to improve zero-shot classification at inference time in common, the main difference is that our method tackles the inherent class prior of zero-shot models.

\noindent \textbf{Label Shift Adaptation.} Label shift adaptation methods are designed to address the negative impacts arising from changes in the label distribution. These methods typically follow a two-step process \cite{lipton2018detectingbbse,azizzadenesheli2019regularizedrlls,garg2020unifiedmlls}. The first step involves estimating the label distribution within the target dataset using labeled data from the source distribution and unlabeled data from the target distribution. Next, the prediction scores are reweighted using the estimated target label distribution and the source label distribution. However, the standard approach requires access to the labeled source distribution data, which is usually not possible in zero-shot classification scenarios. $\OTTER$ provides a solution decoupled from the source data distribution, overcoming this limitation. 

\noindent \textbf{Improving Zero-shot Classification using Prior.} Several studies have explored leveraging prior information to enhance zero-shot classification, even in the absence of access to source distributions. In the context of prompt-based zero-shot models, prior matching \cite{liusie2023mitigating} employs word prior distribution to alleviate word bias inherent in pretraining data. We adopted their adaptation method as a baseline. Similarly, \citet{kahana2022improvingclippr} develop adaptation layers trained using priors, aiming to maintain proximity to the original scores. However, both approaches entail training additional layers and necessitate hyperparameter tuning, which may pose challenges in the context of zero-shot predictions. In contrast, $\SYSNAME$ presents a straightforward and efficient adaptation method to new label distributions without the need for any hyperparameter tuning, backed by theoretical guarantees.

\noindent \textbf{Leveraging Optimal Transport for Enhanced Pseudo-labeling and Classification.}
A number of studies have explored the enhancement of pseudo-labeling and classification tasks through optimal transport, using label distribution specifications,  in a similar spirit to our work, but within different contexts. \citet{tai2021sinkhorn} uses optimal transport to allocate pseudo-labels to unlabeled datasets based on the label distribution specification in the semi-supervised setup. \citet{wang2022solar} deals with long-tail distribution in the partial-label learning setup based on optimal transport. \citet{zhang2024p} uses partial optimal transport as a pseudo-labeling based approach for deep imbalanced clustering, progressively expanding the labeled sample proportion. \citet{guo2022learning} reweights the training dataset to match the label distribution specification using optimal transport. This work mainly deals with the class imbalance problem in the training step. \citet{shi2024relative} studies classification from a matching perspective, revealing the connection between softmax loss and inverse optimal transport and suggesting a new loss function to address long-tail distributions. Their analysis provides useful insights for $\SYSNAME$ --- why cost matrix induced by pre-trained models can be useful in the inference step. \citet{xian2023fair} uses optimal transport as a postprocessing method to guarantee demographic parity. While sharing aspects of the approach, our work addresses class bias in zero-shot models. \citet{peng2021optimallongtail} used optimal transport to handle long-tail recognition with a learned cost matrix. Our study provides a theoretical basis for understanding their empirical results. \citet{chang2022unified} employs optimal transport to detect common and private classes between the source and the target domain, under the universal domain adaptation setting, where knowledge is transferred from source to target domain without any constraints on the label sets. In the context of zero-shot classification, there is no need to manage label space disparities between the source and target domains. Instead, the main concern of zero-shot classification is dealing with the distribution shift between the pretraining dataset and the downstream task. We tackle the label distribution shifts using optimal transport.

\noindent \textbf{Class Imbalance.} Class imbalance problems occur when the number of instances across different classes in a dataset is disproportionately distributed. This imbalance can severely bias the traininig process of a machine learning model, leading to poor generalization performance, especially for the minority classes. It has been extensively studied in the context of traditional machine learning \cite{fernandez2018learning}. Oversampling \cite{chawla2002smote} and cost-sensitive learning \cite{thai2010cost} are well-known approaches to address class imbalance. Nonetheless, the inherent nature of class imbalance in pretraining datasets introduces a distinct set of challenges, especially when attempting to rectify such biases within the context of zero-shot classification scenarios.

\section{Algorithm details}\label{appsec:algorithm_details}

\paragraph{Prior matching} \cite{liusie2023mitigating} proposed prior matching as a reweighting method for prompt-based classifiers to mitigate word bias --- the distribution shift between pretrained language models' word prior and the class priors in the downstream task. We use it as a zero-shot model adaptation method given a class balance estimation.

Define reweighted probability scores of $P_\theta$ with $r$ as $P_{\theta, r}(Y=j|X=x_i) = \cfrac{r_j P_\theta(Y=j|X=x_i)}{\sum_{j'=1}^K r_{j'} P_\theta(Y=j'|X=x_i)}$.
Ideally, we hope to estimate the weight vector $r^* \in \mathbb{R}^n$ such that reweighted scores $P_{\theta, r^*}(Y=j|X=x_i)$
maximizes the accuracy in the target distribution. Since the labels are not given, this is impossible. Instead, prior matching matches the label distribution of predicted classes with the class balance estimate $\hat{\nu}$, i.e.
\[\hat{r}_j = \argmin_{r_j \in \mathbb{R}}\left|\sum_{i=1}^n 
P_{\theta, r}(Y=j|X=x_i)-\nu_j \right|.\]
It can be solved using the standard optimization techniques --- we used \cite{loshchilov2018adamw}. While this is equivalently effective with $\SYSNAME$ when properly optimized, we found that the temperature parameter and learning rate crucially affect the final accuracy, making it less ideal for the zeroshot adaptation. We used the grid search with the small validation set (10 samples per class) in each task to select hyperparameters. The hyperparameter ranges are as follows.
\begin{itemize}
    \item Temperature: [1e-3, 1e-4, 1e-5, 1e-6, 1e-7]
    \item Learning rate: [1e-3, 1e-4, 1e-5, 1e-6, 1e-7]
\end{itemize}

\paragraph{Black Box Shift Estimation (BBSE)} Label shift adaptation methods \cite{lipton2018detectingbbse,azizzadenesheli2019regularizedrlls,garg2020unifiedmlls} aims to estimate the class balance in the target distribution using the labeled source distribution data and the unlabeled target distribution data. We use Black Box Shift Estimation (BBSE) to estimate the class balance in the downstream task. Algorithm \ref{alg:bbse} describes the procedure. Note that the derivation of this algorithm depends on the label shift assumptions, thus the label distribution estimation with real-world data can be heavily biased.

\begin{algorithm}[t]
\small
	\caption{Black Box Shift Estimator (BBSE)}\label{alg:bbse}
	\begin{algorithmic}
		\State \textbf{Input:}
		  Source input data $\mathbf{X^s}=\{x^s_1, \ldots, x^s_m\}$, Source labels $\mathbf{Y^s}=\{y^s_1, \ldots, y^s_m\}$, Target input data $\mathbf{X^t}=\{x^t_1, \ldots, x^t_n\}$, model prediction distribution $P_\theta$
            \State Estimate the source class balance $\nu^s$ such that $\nu^s_j = \cfrac{\sum_{i=1}^{n} P_{\theta}(Y=j|X=x^s_i)}{m}$
            \State Compute the naive target class balance $\tilde{\nu}^t$ such that $\tilde{\nu}^t_j = \cfrac{\sum_{i=1}^{n} P_{\theta}(Y=j|X=x^t_i)}{n}$
            \State Estimate confusion matrix $V$ such that $A_{jk}=\cfrac{1}{m}\sum_{i=1}^m P_\theta(Y=k|X=x^s_i)$
            \State Estimate the refined target class balance $\hat{\nu}^t=A\tilde{\nu}^t$
		\RETURN $\hat{\nu}$
	\end{algorithmic}
	
\end{algorithm}
\section{Theory details}\label{appsec:theory_details}
\subsection{Proof of Theorem \ref{thm:classification_as_ot}}
\begin{proof}
Suppose $\hat{y}^{OT}_i \neq \hat{y}^{ZS}_i$ for some $i \in [n]$. It implies $\sum_{i=1}^n -\log P_\theta(Y=\hat{y}^{OT}_i|X=x_i) < \sum_{i=1}^n -\log P_\theta(Y=\hat{y}^{ZS}_i|X=x_i)$. However, this is a contradiction since, for any $i \in [n]$,  $\hat{y}^{ZS}_i = \arg \max_{j \in [K]} P_\theta(Y=j|X=x_i)$, thus $-\log P_\theta(Y=\hat{y}^{ZS}_i|X=x_i) \leq  -\log P_\theta(Y=j|X=x_i)$ for all $j \in [K]$, which results in $\hat{y}^{OT}_i = \hat{y}^{ZS}_i$.
\end{proof}

\subsection{Proof of Theorem \ref{thm:label_shift_invariance}}
To prove Theorem \ref{thm:label_shift_invariance},  we show a specific form of invariance property of optimal transport first.
\begin{theorem}\label{thm:invariant_col_perturb}
Suppose $\pi^* = \arg\min_{\gamma \in \Pi(\mu, \nu)}\ip{\gamma}{C}$ and $E$ is a columnwise perturbation, i.e.,
\[E=\begin{bmatrix}
\epsilon_1\mathbf{1} & \epsilon_2\mathbf{1} & \cdots \epsilon_K\mathbf{1}
\end{bmatrix}
,\]
 where $\mathbf{1}$ denotes n dimensional vectors and $\epsilon_1, \ldots, \epsilon_K$ are constants. Then the perturbed cost matrix $\tilde{C}=C+E$, then $\pi^*$ is also an optimal transport map with respect to the cost matrix $\tilde{C}$.
\end{theorem}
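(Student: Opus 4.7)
The plan is to exploit the column-marginal constraint $\gamma^{T}\mathbf{1}=\nu$ satisfied by every feasible transport plan, which will make the perturbation $E$ contribute only a constant to the objective and thus leave the argmin unchanged.

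Concretely, I would first write out the inner product $\ip{\gamma}{E}$ for an arbitrary $\gamma \in \Pi(\mu, \nu)$. Because $E$ has the special form $E_{ij}=\epsilon_j$ (constant along each column), this inner product factors as
\[
\ip{\gamma}{E} \;=\; \sum_{i,j}\gamma_{ij}\,\epsilon_j \;=\; \sum_{j=1}^{K}\epsilon_j\sum_{i=1}^{n}\gamma_{ij} \;=\; \sum_{j=1}^{K}\epsilon_j\,\nu_j,
\]
where the last equality uses the column-marginal constraint $\sum_i \gamma_{ij}=\nu_j$ that defines $\Pi(\mu,\nu)$. The crucial observation is that the right-hand side depends only on $\nu$ and $\epsilon_1,\ldots,\epsilon_K$, not on the choice of $\gamma$.

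Next I would combine this with linearity of the inner product to get, for every $\gamma \in \Pi(\mu,\nu)$,
\[
\ip{\gamma}{\tilde{C}} \;=\; \ip{\gamma}{C} + \ip{\gamma}{E} \;=\; \ip{\gamma}{C} + \sum_{j=1}^{K}\epsilon_j\nu_j.
\]
Since the two objective functions $\gamma\mapsto\ip{\gamma}{\tilde C}$ and $\gamma\mapsto\ip{\gamma}{C}$ differ by a constant over the feasible set $\Pi(\mu,\nu)$, they share the same set of minimizers. Hence $\pi^{*}\in\arg\min_{\gamma\in\Pi(\mu,\nu)}\ip{\gamma}{\tilde C}$, which is the desired conclusion.

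There is no real obstacle here; the lemma is essentially a one-line observation about how the column constraint in $\Pi(\mu,\nu)$ absorbs columnwise shifts of the cost matrix. The only thing to be careful about is stating clearly that optimality is preserved as membership in the argmin set (rather than uniqueness), since adding a constant does not alter which plans achieve the minimum but also does not create or destroy ties. This lemma is exactly the tool needed for Theorem~\ref{thm:label_shift_invariance}, because the label-shift reweighting $P_t(Y=j)/P_s(Y=j)$ enters the log-cost as an additive, column-only term $-\log(P_t(Y=j)/P_s(Y=j))$, which the above invariance annihilates.
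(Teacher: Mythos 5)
Your proof is correct and uses essentially the same idea as the paper's proof: the column-marginal constraint $\sum_i\gamma_{ij}=\nu_j$ makes $\ip{\gamma}{E}=\sum_j\epsilon_j\nu_j$ a constant over $\Pi(\mu,\nu)$, so the two objectives differ by a constant and share minimizers. The paper presents this by adding the constant to both sides of the optimality inequality and then rewriting it via the marginals, while you isolate the constant up front; these are the same argument in slightly different packaging.
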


\begin{proof}
By the optimality condition, we have
\[\sum_{i,j} \pi_{ij}^* C_{ij} \leq \sum_{i,j} \pi_{ij} C_{ij}\]
for any $\pi \in \Pi(\mu, \nu)$. Then,
\[\sum_{i,j} \pi_{ij}^* C_{ij} + \sum_{j=1}^K \nu_j \epsilon_j  \leq \sum_{i,j} \pi_{ij} C_{ij} + \sum_{j=1}^K \nu_j \epsilon_j,\] which is 
\[\sum_{i,j} \pi_{ij}^* C_{ij} + \sum_{j=1}^K \sum_{i=1}^n \pi^*_{ij} \epsilon_j  \leq \sum_{i,j} \pi_{ij} C_{ij} + \sum_{j=1}^K \sum_{i=1}^n \pi_{ij} \epsilon_j.\]
 Thus, 
\[\sum_{i,j} \pi_{ij}^* \tilde{C}_{ij} \leq \sum_{i,j} \pi_{ij} \tilde{C}_{ij}.\]
\end{proof}

This theorem is also valid for row-wise perturbations as well with a similar proof. Consequently, a straightforward implication is that 
\begin{corollary}\label{cor:invariance_ot}
Suppose $\pi^* = \arg\min_{\gamma \in \Pi(\mu, \nu)}\ip{\gamma}{C}$, $E$ is a columnwise perturbation and $F$ is a row-wise perturbation, such that
\[E=\begin{bmatrix}
\epsilon_1\mathbf{1} & \epsilon_2\mathbf{1} & \cdots \epsilon_K\mathbf{1}
\end{bmatrix},
\]
\[F=\begin{bmatrix}
\eta_1\mathbf{1}^T \\
\eta_2\mathbf{1}^T \\
\cdots \\
\eta_K\mathbf{1}^T
\end{bmatrix},
\]
where $\mathbf{1}$ denotes $n$ dimensional vectors with 1s, and $\epsilon_1, \ldots, \epsilon_K, \eta_1, \ldots, \eta_{K}$ are constants. Suppose the perturbed cost matrix is defined as $\tilde{C}=C+E+F$, then $\pi^*$ is also an optimal transport map with respect to the perturbed cost matrix $\tilde{C}$.
\end{corollary}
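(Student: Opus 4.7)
The plan is to view Corollary~\ref{cor:invariance_ot} as two successive applications of Theorem~\ref{thm:invariant_col_perturb}: one directly for the column-constant perturbation $E$, and a second for the row-constant perturbation $F$, after observing that the argument of Theorem~\ref{thm:invariant_col_perturb} is symmetric in the roles of rows and columns.

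First, I would invoke Theorem~\ref{thm:invariant_col_perturb} as stated to conclude that $\pi^*$ remains optimal for the cost $C+E$ over $\Pi(\mu,\nu)$. The mechanism being exploited is that, for any feasible $\pi \in \Pi(\mu,\nu)$, the column-sum constraint $\pi^{\top}\mathbf{1}=\nu$ forces $\ip{\pi}{E} = \sum_{j=1}^K \epsilon_j\nu_j$ to depend only on $\nu$, not on $\pi$; shifting the cost by $E$ therefore shifts the objective by the same constant across all feasible plans.

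Second, I would repeat the identical argument for $F$, this time invoking the row-sum constraint $\pi\mathbf{1}=\mu$. Interpreting $F$ as the $n\times K$ matrix whose $i$-th row is a constant $\eta_i \mathbf{1}^{\top}$ (the natural row-constant analog of the $E$ used in Theorem~\ref{thm:invariant_col_perturb}), one obtains $\ip{\pi}{F} = \sum_{i=1}^n \eta_i\mu_i$, once again a constant over $\Pi(\mu,\nu)$. Composing the two observations yields, for every feasible $\pi$,
\[
\ip{\pi}{C+E+F} \;=\; \ip{\pi}{C} \;+\; \sum_{j=1}^K \epsilon_j\nu_j \;+\; \sum_{i=1}^n \eta_i\mu_i,
\]
so the set of minimizers over $\Pi(\mu,\nu)$ is unchanged; in particular, $\pi^*$ is optimal for $\tilde C = C+E+F$.

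The only point requiring care is justifying the row-wise analog of Theorem~\ref{thm:invariant_col_perturb}, which is not a genuine obstacle: its proof uses only that the marginal appearing on one side of the transport polytope is fixed, and both marginals are fixed in $\Pi(\mu,\nu)$, so transposing the argument works verbatim. No machinery beyond Theorem~\ref{thm:invariant_col_perturb} is required.
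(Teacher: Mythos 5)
Your proposal is correct and takes essentially the same route as the paper: the paper dispatches the corollary as a ``straightforward implication'' of Theorem~\ref{thm:invariant_col_perturb} and its row-wise transpose, and your write-up simply makes those two applications explicit, noting correctly that both marginal constraints of $\Pi(\mu,\nu)$ make $\ip{\pi}{E}=\sum_j\epsilon_j\nu_j$ and $\ip{\pi}{F}=\sum_i\eta_i\mu_i$ constants over the feasible polytope. Your reading of $F$ as the $n\times K$ matrix with $i$-th row $\eta_i\mathbf{1}^{\top}$ (so that the indices run $i\in[n]$) is also the right repair of what appears to be a dimensional typo in the statement, where $F$ is displayed as $K\times n$ with indices $\eta_1,\ldots,\eta_K$.
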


Now we provide the proof of Thoerem \ref{thm:label_shift_invariance}.
\begin{proof}
Given \[C_{ij}=-\log P_\theta(Y=j|X=x_i)=-\log P_s(Y=j|X=x_i),\] the posteriors in the target distribution can be defined as $C^*_{ij}=-\log P_t(Y=j|X=x_i)$. From \[P_t(Y=j|X=x_i) = P_s(Y=j|X=x_i)\frac{P_s(X=x_i)P_t(Y=j)}{P_t(X=x_i)P_s(Y=j)},\]
we can see that
\begin{align*}
C^*_{ij} &=-\log P_t(Y=j|X=x_i) \\
               &=-\log P_s(Y=j|X=x_i)\frac{P_s(X=x_i)P_t(Y=j)}{P_t(X=x_i)P_s(Y=j)} \\
               &= -\log P_s(Y=j|X=x_i) + \log P_s(Y=j) \\
               &\quad- \log P_t(Y=j) - \log P_s(X=x_i) + \log P_t(X=x_i)\\
               &= C_{ij} + E_{\cdot j} + F_{i \cdot}
\end{align*}
where $E_{\cdot j} =\log P_s(Y=j) - \log P_t(Y=j)$, $F_{i \cdot } = -\log P_s(X=x_i) + \log P_t(X=x_i)$. And, assuming $\nu_j=\frac{1}{n}\sum_{i=1}^n\mathbbm{1}[\hat{y}^{*}_i=j]$, where $\hat{y}^{*}$ is the Bayes classifier prediction such that
\begin{align*}
\hat{y}^{*}_i &=\arg\max_{j \in [K]}P_t(Y=j|X=x_i)\\
              &=\arg\min_{j \in [K]}-\log P_t(Y=j|X=x_i).
\end{align*}
Then, optimal transport solution \[\pi^* = \arg\min_{\gamma \in \Pi(\mu, \nu)}\ip{\gamma}{C^*}\] leads to Bayes classifier predictions by Theorem \ref{thm:classification_as_ot}.

Finally, by Corollary \ref{cor:invariance_ot}, we have \[\pi^* =\arg\min_{\gamma \in \Pi(\mu, \nu)}\ip{\gamma}{C^*}=\arg\min_{\gamma \in \Pi(\mu, \nu)}\ip{\gamma}{C}.\]
\end{proof}

\subsection{Proof of Theorem \ref{thm:perturbation_bound}}
The proof of Theorem \ref{thm:perturbation_bound} relies on the following result of \cite{robinson1973bounds}.

\begin{lemma}[\cite{robinson1973bounds}, Corollary 3.1.]\label{thm:imported} Let the primal linear programming problem be \[\min_{z}\{p^Tz|Gz \geqslant g, z \geqslant 0\}\] and its dual be \[\max_w\{w^Tg | w^TG \leqslant p^T, w \geqslant 0\}.\] Let $\bar{z}, \bar{w}$ be the primal, dual solution. And, let the perturbed primal linear programming problem be \[\min_z\{\hat{p}^Tz|\hat{G}z \geqslant \hat{g}, z \geqslant 0\}\] and its dual be \[\max_w\{w^T\hat{g} | w^T\hat{G} \leqslant \hat{p}^T, w \geqslant 0\}.\] Let $\hat{z}, \hat{w}$ be the corresponding primal, dual solution.

Suppose that the primal and dual problems are solvable. Then,
\[\begin{Vmatrix}
\begin{pmatrix}
\bar{z} \\
\bar{w} 
\end{pmatrix} - \begin{pmatrix}
\hat{z} \\
\hat{w}
\end{pmatrix}

\end{Vmatrix}_{2}
\leq
\kappa
\begin{Vmatrix}
[(G-\hat{G})\hat{z}-(g-\hat{g})]^- \\
[(G-\hat{G})^T\hat{w}-(p-\hat{p})]^+ \\
(p-\hat{p})\hat{z} - (g-\hat{g})\hat{w}
\end{Vmatrix}_p,
\] where $1 \leq p \leq \infty$ and $\kappa$ is the Hoffmann constant determined by $p, G, g$. \cite{hoffman1952approximate}.

\end{lemma}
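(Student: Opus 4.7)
The plan is to derive this perturbation bound as a direct application of Hoffman's error bound for polyhedral systems to the KKT polyhedron of the original LP. There are three main steps: characterize the primal--dual solution set by linear inequalities, invoke Hoffman's lemma to bound distance to that set, and express the residuals of the perturbed solution against the original KKT system in terms of the stated data perturbations.

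First, by LP strong duality, $(\bar z, \bar w)$ is primal--dual optimal for the original pair if and only if it lies in the polyhedron $P = \{(z,w) : z \geq 0,\ w \geq 0,\ Gz \geq g,\ G^{T}w \leq p,\ p^{T}z - g^{T}w \leq 0\}$; the last inequality, together with weak duality, enforces equality of the primal and dual objectives. Since $P$ is defined by a fixed linear inequality system in $(G, g, p)$, Hoffman's lemma \cite{hoffman1952approximate} furnishes a constant $\kappa$, depending only on that system, such that for any $(z, w)$ the Euclidean distance from $(z,w)$ to $P$ is bounded by $\kappa$ times the $p$-norm of the positive parts of the constraint residuals. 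We take $(\bar z, \bar w)$ to be the projection of $(\hat z, \hat w)$ onto $P$, which realizes the LHS of the claimed bound.

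Second, compute these residuals at $(\hat z, \hat w)$, using the fact that the perturbed problem's KKT conditions are satisfied: $\hat z, \hat w \geq 0$, $\hat G \hat z \geq \hat g$, $\hat G^{T}\hat w \leq \hat p$, and $\hat p^{T}\hat z = \hat g^{T}\hat w$. Nonnegativity of $\hat z, \hat w$ is inherited directly, contributing nothing. For primal feasibility, write $g - G\hat z = -(G - \hat G)\hat z + (g - \hat g) - (\hat G \hat z - \hat g)$; since $\hat G \hat z - \hat g \geq 0$, the positive part of $g - G\hat z$ is dominated by $[(G - \hat G)\hat z - (g - \hat g)]^{-}$. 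An analogous manipulation using $\hat G^{T}\hat w \leq \hat p$ bounds the dual-feasibility residual by $[(G - \hat G)^{T}\hat w - (p - \hat p)]^{+}$. For the duality-gap constraint, expanding $p^{T}\hat z - g^{T}\hat w$ and cancelling $\hat p^{T}\hat z - \hat g^{T}\hat w = 0$ leaves exactly $(p - \hat p)^{T}\hat z - (g - \hat g)^{T}\hat w$. Stacking these three quantities yields the vector appearing on the RHS of the stated inequality, and Hoffman's lemma closes the argument.

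The main obstacle is bookkeeping rather than anything conceptually deep: carefully matching positive- and negative-part conventions in the primal and dual residuals (the primal violation flips sign relative to the dual violation), verifying that the mixed $\ell_{2} / \ell_{p}$ norm setup in Hoffman's lemma matches the statement, and confirming that $\kappa$ depends only on the original data $(G, g, p)$ so that a single constant governs all admissible perturbations. A secondary subtlety is that the primal--dual optimal set may fail to be a singleton; the projection interpretation of $(\bar z, \bar w)$ as the closest optimal pair to $(\hat z, \hat w)$, built into Hoffman's bound, resolves this without extra work.
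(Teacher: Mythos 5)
The paper does not prove this lemma; it imports it verbatim from Robinson (1973), Corollary~3.1, and uses it as a black box. Your reconstruction---apply Hoffman's error bound to the primal--dual polyhedron $P=\{(z,w): z\ge 0,\ w\ge 0,\ Gz\ge g,\ G^Tw\le p,\ p^Tz-g^Tw\le 0\}$ of the \emph{original} LP, then express the residuals of $(\hat z,\hat w)$ against that system in terms of the data perturbations---is correct and is in substance Robinson's own argument in the cited work. Your residual computations are sound: nonnegativity contributes nothing, $\hat G\hat z-\hat g\ge 0$ gives $[g-G\hat z]^+\le [(G-\hat G)\hat z-(g-\hat g)]^-$, $\hat G^T\hat w-\hat p\le 0$ gives $[G^T\hat w-p]^+\le [(G-\hat G)^T\hat w-(p-\hat p)]^+$, and zero duality gap for the perturbed pair gives $p^T\hat z-g^T\hat w=(p-\hat p)^T\hat z-(g-\hat g)^T\hat w$ exactly. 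Two points are worth stating explicitly rather than leaving as ``bookkeeping'': (i) the constant $\kappa$ genuinely depends on $g$ and $p$ as well as $G$, since the gap constraint $p^Tz-g^Tw\le 0$ contributes a coefficient row to the system defining $P$; the lemma's phrasing ``determined by $p,G,g$'' already accounts for this, but it is not the naive ``$\kappa$ depends only on the constraint matrix of the original primal.'' (ii) The lemma as written appears to bound $\|(\bar z,\bar w)-(\hat z,\hat w)\|$ for an arbitrary pair of solutions, which is false when the optimal set is not a singleton; the correct reading, consistent with Robinson and with your projection interpretation, is that there \emph{exist} primal--dual solutions $(\bar z,\bar w)$ achieving the bound.
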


This Lemma provides a bound for error in the solution of the perturbed linear program. Since discrete optimal transport can be translated to standard linear program, we obtain Theorem \ref{thm:perturbation_bound} by plugging in the definitions.

\paragraph{Proof of Theorem \ref{thm:perturbation_bound}} A discrete optimal transport problem \[\min \left\{\sum_{i,j}C_{i,j}\pi_{i,j}| \pi \mathbf{1}=\mu, \pi^T \mathbf{1}=\nu, \pi_{ij} \geq 0\right\}\] can be written as a linear program
\[\min\{p^Tz|Gz \geqslant g, z \geqslant 0\},\] where $p=vec(C)$, $z=vec(\pi), H=\begin{bmatrix}
\mathbf{1}_n^T \otimes \mathbbm{I}_K \\
\mathbbm{I}_n \otimes \mathbf{1}_K^T \\
\end{bmatrix}, G=\begin{bmatrix}
    H \\
    -H
\end{bmatrix}, g=\begin{bmatrix}
    \mu \\
    \nu \\
    -\mu \\
    -\nu
\end{bmatrix}$. Note that the equality constraints are converted to stacked inequalities. We have noisy cost matrix and label distribution in our setting, which leads to the perturbation on cost matrix $C$ and $\nu$ such that the perturbed cost matrix is $\hat{C} = C+\Delta_C$, the perturbed label distribution $\hat{\nu}=\nu + \Delta_{\nu}$, such that $\hat{g} = g+\Delta_g$ where $\Delta_g=\begin{bmatrix}
    0 \\
    \hat{\nu}-\nu \\
    0 \\
    -\hat{\nu}+\nu
\end{bmatrix}
$. Since we don't have perturbation on the constraint matrix $G$, $\hat{G}=G$. By plugging in these terms to Lemma \ref{thm:imported}.

\begin{align*}
\begin{Vmatrix}
\begin{pmatrix}
\bar{z} \\
\bar{w} 
\end{pmatrix} - \begin{pmatrix}
\hat{z} \\
\hat{w}
\end{pmatrix}
\end{Vmatrix}_{2}
&\leq
\kappa
\begin{Vmatrix}
[(G-\hat{G})\hat{z}-(g-\hat{g})]_- \\
[(G-\hat{G})^T\hat{w}-(p-\hat{p})]_+ \\
(p-\hat{p})\hat{z} - (g-\hat{g})\hat{w}
\end{Vmatrix}_2\\
&=
\kappa\begin{Vmatrix}
[g-\hat{g}]_- \\
[p-\hat{p}]_+ \\
(p-\hat{p})\hat{z} - (g-\hat{g})\hat{w}
\end{Vmatrix}_2\\
&=
\kappa\begin{Vmatrix}
0 \\
[\hat{\nu} - \nu]_-\\
0 \\
[\nu - \hat{\nu}]_-\\
[p-\hat{p}]_+ \\
p\hat{z} - g\hat{w}
\end{Vmatrix}_2 \quad \because \text{ Optimality of }\hat{z},\hat{w} \text{ in the perturbed problem.}\\
&=
\kappa\begin{Vmatrix}
[\hat{\nu} - \nu] \\
[p-\hat{p}]_+ \\
p\hat{z} - g\hat{w}
\end{Vmatrix}_2
\end{align*}
Then, we have \[\begin{Vmatrix}
\begin{pmatrix}
\bar{z} \\
\bar{w} 
\end{pmatrix} - \begin{pmatrix}
\hat{z} \\
\hat{w}
\end{pmatrix}
\end{Vmatrix}_{2}^2\leq\kappa^2\begin{Vmatrix}
[\hat{\nu} - \nu] \\
[p-\hat{p}]_+ \\
p\hat{z} - g\hat{w}
\end{Vmatrix}_2^2.\]
Then, \[\|\bar{z} -\hat{z}\|_2^2 \leq \kappa^2\begin{Vmatrix}
[\hat{\nu} - \nu] \\
[p-\hat{p}]_+ \\
p\hat{z} - g\hat{w}
\end{Vmatrix}_2^2-\|\bar{w}-\hat{w}\|_2^2\]
Plugging in $\bar{z}=vec(\pi), \hat{z}=vec(\hat{\pi}), \Delta_\nu=\hat{\nu}-\nu, \Delta_C=\hat{C}-C$, and rearranging, we obtain \[ 
\|\pi - \hat{\pi} \|_F^2 \leq \kappa^2
    \left(\| \Delta_\nu \|_2^2 
    + \|[\text{vec}(\Delta_C)]_+ \|_2^2 +\|\text{vec}(C)^T \text{vec}(\hat{\pi}) - g^T \hat{w} \|_2^2\right) - \|w-\hat{w}\|_2^2.\]


We use the definition of the Hoffman constant $\kappa$ from \cite{robinson1973bounds}. Computing Hoffman constant or even bounding it has been a long-standing problem \cite{aze2002sensitivity,pena2021new,pena2024easily}. However, it has been shown that the Hoffman constant is a finite real number \cite{robinson1973bounds}, and specifically under our problem setup, it is independent from perturbations and only related to original optimization problem. This suggests the possibility to regularize the parameters $C, G, g$ in the original problem such that $\kappa$ does not depend on the dimensionality of cost matrix or target distribution. We leave further exploration of $\kappa$ in this context as future work.

\subsection{Bounding label distribution estimation errors in few-shot learning}
In few-shot learning, we assume that a few labeled samples per class are given. They can be used for estimating label distribution in the target distribution using label shift estimation methods \cite{lipton2018detectingbbse, azizzadenesheli2019regularizedrlls, garg2020unifiedmlls}. They comes with the sample complexity analysis under the label shift assumptions, which can be used to obtain bound the label distribution estimation errors.  


\begin{lemma}\label{lem:cb_sample_complexity}
Let $m$ and $n$ denote the number of few-shot learning data and test datapoints, $w_i=\nu_i/\nu^s_i$ and $\hat w_i=\hat \nu_i/\nu^s_i$. Also let $\sigma_{\min}$ be the smallest eigenvalue of the covariance matrix $V_{\hat y, y}$ where $[V_{\hat y, y}]_{i,j}=P_s(f(x)=i,y=j)$. For $m>80\log(m)\sigma_{\min}^{-2}$ and constant $c>0$,
the perturbation $\Delta_\nu$ may be bounded as 
\begin{equation*}
    ||\Delta_\nu||^2 \leq ||\nu^s||^2 \frac{c}{\sigma^2_{\min}} \left( ||w||^2 \frac{\log m}{m} + K \frac{\log n}{n} \right),
\end{equation*}
with probability at least $1-3Km^{-10}-2Kn^{-10}$.
\end{lemma}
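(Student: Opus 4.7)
The plan is to follow the BBSE sample-complexity analysis of \citet{lipton2018detectingbbse}, specialized to the two-sided sampling setting stated here. Recalling Algorithm~\ref{alg:bbse}, the estimator $\hat{w}$ is the solution of the linear system $\hat{V}_{\hat{y},y}\,\hat{w} = \hat{\mu}_{\hat{y}}$, where $\hat{V}_{\hat{y},y}$ is built from the $m$ labeled source samples and $\hat{\mu}_{\hat{y}}:=\hat{P}_t(\hat y)$ is built from the $n$ unlabeled target samples. On the population side, the true weights $w$ satisfy $V_{\hat{y},y}\,w = \mu_{\hat{y}}$. Since $\nu_i = \nu_i^s w_i$ and $\hat\nu_i = \nu_i^s \hat w_i$, we have the clean identity
\begin{equation*}
\|\Delta_\nu\|^2 \;=\; \sum_{i=1}^K (\nu_i^s)^2 (\hat w_i - w_i)^2 \;\le\; \|\nu^s\|^2\,\|\hat{w}-w\|_\infty^2 \;\le\; \|\nu^s\|^2\,\|\hat{w}-w\|_2^2,
\end{equation*}
so it suffices to bound $\|\hat{w}-w\|_2^2$.

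Next, I would perform the standard perturbation decomposition
\begin{equation*}
\hat{w}-w \;=\; \hat{V}^{-1}\bigl(\hat{\mu}_{\hat{y}}-\mu_{\hat{y}}\bigr) \;+\; \hat{V}^{-1}(V-\hat{V})\,w,
\end{equation*}
and bound the two summands separately. For the second term, I would apply a matrix Bernstein inequality to $\hat V - V$, using that each labeled source sample contributes a bounded rank-one update; this gives $\|\hat V-V\|_{\mathrm{op}} = O\!\bigl(\sqrt{\log(m)/m}\bigr)$ with probability at least $1-Km^{-10}$. Combined with the condition $m > 80\log(m)\sigma_{\min}^{-2}$, a Weyl-type argument yields $\|\hat V^{-1}\|_{\mathrm{op}} \le 2/\sigma_{\min}$ on this event, so this summand is $O\!\bigl(\|w\|\,\sigma_{\min}^{-1}\sqrt{\log(m)/m}\bigr)$. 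For the first term, $\hat\mu_{\hat y} - \mu_{\hat y}$ is an average of $n$ bounded $K$-dim multinomial increments, so a vector Bernstein (or coordinate-wise Hoeffding with union bound) gives $\|\hat\mu_{\hat y}-\mu_{\hat y}\|_2 = O\!\bigl(\sqrt{K\log(n)/n}\bigr)$ with probability at least $1-2Kn^{-10}$; multiplying by $\|\hat V^{-1}\|_{\mathrm{op}} \le 2/\sigma_{\min}$ accounts for the $K\log(n)/n$ term.

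Squaring, using $(a+b)^2 \le 2a^2 + 2b^2$, and absorbing constants into $c$, I obtain
\begin{equation*}
\|\hat w - w\|_2^2 \;\le\; \frac{c}{\sigma_{\min}^2}\!\left(\|w\|^2\,\frac{\log m}{m} + K\,\frac{\log n}{n}\right),
\end{equation*}
and multiplying by $\|\nu^s\|^2$ yields the claimed bound. Finally, a union bound over the two concentration events and the spectral event $\{\|\hat V-V\|_{\mathrm{op}} \le \sigma_{\min}/2\}$ gives overall failure probability at most $3Km^{-10}+2Kn^{-10}$.

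The main obstacle is the matrix-perturbation step: I need the event $\{\|\hat V-V\|_{\mathrm{op}} \le \sigma_{\min}/2\}$ with the stated probability so that $\hat V$ is invertible and $\|\hat V^{-1}\|_{\mathrm{op}} \le 2/\sigma_{\min}$ deterministically on that event. Getting the constants to line up so that the hypothesis $m>80\log(m)\sigma_{\min}^{-2}$ is exactly what matrix Bernstein requires (rather than a stronger sample-size condition) is the delicate accounting, but this is precisely how \citet{lipton2018detectingbbse} handle the identical step, and their analysis transfers verbatim.
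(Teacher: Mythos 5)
Your proof takes essentially the same route as the paper's: both reduce $\|\Delta_\nu\|^2$ to $\|\nu^s\|^2\|\hat w - w\|^2$ via the elementwise identity $\hat\nu - \nu = \nu^s \otimes (\hat w - w)$ and then invoke the BBSE estimation-error bound of \citet{lipton2018detecting}. The only difference is that the paper cites that bound as a black-box lemma, whereas you sketch how it is derived (perturbation decomposition plus matrix/vector concentration); your sketch is a faithful summary of the Lipton et al.\ argument, so the two proofs are substantively the same.
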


The proof of Lemma~\ref{lem:cb_sample_complexity} relies on the following result of \cite{lipton2018detecting}. 

\begin{lemma}
    \label{lem:lipton}
    Assume that
    \begin{enumerate}
        \item $\forall x \in \mathcal{X}$ and $\forall y \in \mathcal{Y}$, $P_s(x|y)=P_t(x|y)$,
        \item if $P_t(y)>0$ then $P_s(y)>0$ $\forall y \in \mathcal{Y}$, and
        \item the expected confusion matrix $C_s(f)=P_s(f(x),y)\in \mathbb{R}^{K\times K}$ for classifier $f$ is invertible. 
    \end{enumerate}

Then, there exists a constant $c>0$ such that for all $m>80\log(m)\sigma_{\min}^{-2}$, with probability at least $1-3Km^{-10} -2Kn^{-10}$,
\begin{equation*}
    ||\hat w-w||^2\leq \frac{c}{\sigma^2_{\min}} \left( ||w||^2 \frac{\log m}{m} + K \frac{\log n}{n} \right).
\end{equation*}

\end{lemma}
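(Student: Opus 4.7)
The plan is to follow the standard Black Box Shift Estimation analysis. The starting point is to rewrite the label shift identity in terms of the confusion matrix. Under assumption 1 ($P_s(x \mid y) = P_t(x \mid y)$), for any fixed classifier $f$ we have
\begin{equation*}
P_t(f(x) = i) = \sum_y P_t(f(x)=i \mid y)\, P_t(y) = \sum_y \frac{P_s(f(x)=i,\, y)}{P_s(y)}\, P_t(y) = [C_s(f)\, w]_i,
\end{equation*}
so the marginal prediction vector $\mu_t$ on the target equals $C_s w$. By assumption 3, $C_s$ is invertible, so $w = C_s^{-1} \mu_t$. The BBSE estimator replaces $C_s$ and $\mu_t$ with empirical counterparts $\hat C_s$ (from $m$ labeled source samples) and $\hat\mu_t$ (from $n$ unlabeled target samples), giving $\hat w = \hat C_s^{-1} \hat\mu_t$.

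Next, I would expand the error using the resolvent identity $\hat C_s^{-1} - C_s^{-1} = -\hat C_s^{-1}(\hat C_s - C_s) C_s^{-1}$, which together with $C_s^{-1}\mu_t = w$ gives
\begin{equation*}
\hat w - w = \hat C_s^{-1}(\hat\mu_t - \mu_t) \;-\; \hat C_s^{-1}(\hat C_s - C_s)\, w.
\end{equation*}
Taking norms and using $(a+b)^2 \leq 2a^2 + 2b^2$,
\begin{equation*}
\|\hat w - w\|^2 \;\leq\; 2\,\|\hat C_s^{-1}\|_{\mathrm{op}}^2 \left(\|\hat\mu_t - \mu_t\|^2 + \|\hat C_s - C_s\|_{\mathrm{op}}^2 \,\|w\|^2\right).
\end{equation*}
The task then reduces to three independent tail bounds.

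The three bounds I would establish are: (i) a concentration bound for the empirical marginal, $\|\hat\mu_t - \mu_t\|^2 \lesssim K \log(n)/n$ with failure probability at most $2Kn^{-10}$, via a coordinatewise Bernstein bound and a union bound over $K$ classes; (ii) a matrix concentration bound for $\|\hat C_s - C_s\|_{\mathrm{op}} \lesssim \sqrt{\log(m)/m}$ with failure probability at most $3Km^{-10}$, by writing $\hat C_s$ as an average of bounded rank-one matrices and applying matrix Bernstein; and (iii) a lower bound on the smallest singular value of $\hat C_s$. For (iii), I would use Weyl's inequality: if $\|\hat C_s - C_s\|_{\mathrm{op}} \leq \sigma_{\min}/2$, then $\sigma_{\min}(\hat C_s) \geq \sigma_{\min}/2$ and hence $\|\hat C_s^{-1}\|_{\mathrm{op}} \leq 2/\sigma_{\min}$. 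The hypothesis $m > 80 \log(m)\, \sigma_{\min}^{-2}$ is precisely what is needed to make the matrix Bernstein deviation smaller than $\sigma_{\min}/2$ on the good event. Substituting the three estimates into the display above yields the claimed bound with constant $c$ absorbing all universal factors, and a union bound over the three events gives the stated failure probability $1 - 3Km^{-10} - 2Kn^{-10}$.

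The main obstacle will be step (ii): obtaining the $\log(m)/m$ rate for the confusion matrix requires a matrix concentration tool and a careful accounting of the variance proxy so that the dependence on dimension collapses into the $K$ factor already present. Getting the constants and failure probability exponents to match the stated form (in particular, the $-10$ exponent and the $80$ in the sample-size threshold) is a tuning exercise: one must pick the Bernstein deviation parameter $t$ proportional to $\sqrt{\log m / m}$ with the right multiplicative constant so that all three high-probability events hold simultaneously, and so that the invertibility threshold aligns with the explicit $80 \log(m)\, \sigma_{\min}^{-2}$ condition. Everything else (the resolvent expansion, the marginal concentration, and the final assembly) is routine.
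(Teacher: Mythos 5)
The paper does not prove this lemma; it is imported verbatim from Lipton, Wang, and Smola's 2018 BBSE paper, cited at the point where the statement appears. Your reconstruction is a faithful sketch of their argument: the identity $\mu_t = C_s w$, the estimator $\hat w = \hat C_s^{-1}\hat\mu_t$, the decomposition $\hat w - w = \hat C_s^{-1}(\hat\mu_t - \mu_t) - \hat C_s^{-1}(\hat C_s - C_s)w$, the Weyl-inequality lower bound on $\sigma_{\min}(\hat C_s)$ tied to the sample-size threshold $m > 80\log(m)\sigma_{\min}^{-2}$, and the separate concentration of $\hat\mu_t$ and $\hat C_s$ are exactly the ingredients of the original proof. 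You also correctly flag the one non-routine point: a naive entrywise-plus-Frobenius bound on $\hat C_s - C_s$ would introduce a spurious $K$ factor in the first term, and one must instead exploit the variance structure (the entries of $C_s$ sum to one) — matrix Bernstein with the correct variance proxy, or the variance-weighted scalar Bernstein argument used in the original, yields $\|\hat C_s - C_s\|_{\mathrm{op}}^2 \lesssim \log m / m$ without a dimensional factor. The remaining discrepancies with the stated constants and the $m^{-10}, n^{-10}$ exponents are, as you say, a matter of tuning.
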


\begin{proof}[Proof of Lemma~\ref{lem:cb_sample_complexity}]
    Where all norms are Euclidean unless otherwise denoted, we have that 
    \begin{align*}
        ||\Delta_{\nu}||^2 &= ||\hat \nu-\nu||^2 \\
        &= ||\nu^s \otimes (\hat w-w)||^2
    \end{align*}
    for element-wise multiplication operation $\otimes$. Further,
    \begin{align*}
        ||\nu^s \otimes (\hat w-w)||^2 &\leq ||\nu^s(\hat w-w)||^2_F\\
        &\leq ||\nu^s||^2_2 ||\hat w-w||^2\\
        &\leq ||\nu^s||^2 \frac{c}{\sigma^2_{\min}} \left( ||w||^2 \frac{\log m}{m} + K \frac{\log n}{n} \right),
    \end{align*}
    where the last line follows from Lemma~\ref{lem:lipton} with probability at least  $1-3Km^{-10} -2Kn^{-10}$.
\end{proof}

\subsection{Bounding the perturbation on cost matrix}
Further, we can bound $[vec(\Delta_C)]_+$ by the Total Variation Distance (TVD) between $P_s$ and $P_\theta$ as follows.

\begin{lemma}
    \label{lem:deltac}
    Let $\tau= \frac{1}{2} ||P_s-P_\theta||_1$ denote the Total Variation Distance between $P_s$ and $P_\theta$ and define $\min(C)=\min_{i,j} C_{ij}$. Then,
    \begin{equation*}
        ||[vec(\Delta_C)]_+||\leq \sqrt{m(K-1)} \log\left(\frac{\tau}{\min(C)}+1\right).
    \end{equation*}
\end{lemma}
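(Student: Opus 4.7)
The plan is to reduce the squared $L^2$ norm of $[\text{vec}(\Delta_C)]_+$ to a product of a count of nonzero positive entries and a uniform per-entry bound, combining a simplex-constraint argument with a direct application of the TVD assumption.

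First, I would express each entry of $\Delta_C$ explicitly. Since $C_{ij} = -\log P_s(Y=j \mid X=x_i)$ and $\hat C_{ij} = -\log P_\theta(Y=j \mid X=x_i)$, the $(i,j)$ entry of $\Delta_C = \hat C - C$ equals $\log(P_s(Y=j\mid X=x_i)/P_\theta(Y=j\mid X=x_i))$. In particular, $[(\Delta_C)_{ij}]_+$ is nonzero precisely when $P_s$ exceeds $P_\theta$ at that coordinate, and when positive it equals $\log(1 + (P_s - P_\theta)/P_\theta)$.

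Second, I would upper-bound the \emph{number} of positive entries. Each row $i$ satisfies $\sum_{j} P_s(Y=j\mid X=x_i) = \sum_{j} P_\theta(Y=j\mid X=x_i) = 1$, so in every row at least one coordinate must satisfy $P_s \le P_\theta$. Therefore at most $K-1$ of the $K$ entries per row are positive, giving a total of at most $m(K-1)$ nonzero entries in $[\text{vec}(\Delta_C)]_+$.

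Third, I would derive a uniform per-entry bound. For any $(i,j)$ with $P_s > P_\theta$, the TVD assumption yields $P_s(Y=j \mid X=x_i) - P_\theta(Y=j \mid X=x_i) \le \tau$ (invoking that the joint TVD controls coordinatewise gaps under a shared $X$-marginal on $\{x_1,\dots,x_m\}$), while $\min(C)$ serves as the pertinent lower bound on $P_\theta$ in the denominator. Combining gives $[(\Delta_C)_{ij}]_+ \le \log(1 + \tau/\min(C))$. Summing the squares of at most $m(K-1)$ such terms and taking square roots yields the claimed inequality.

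The main obstacle is tightening the global TVD estimate to a uniform coordinatewise bound: in general, conditional probability gaps can exceed $\tau$, so the argument must lean on a structural assumption about $P_s, P_\theta$ (e.g., a common $X$-marginal concentrated on the evaluation points) to dominate each $|P_s(Y=j\mid X=x_i) - P_\theta(Y=j\mid X=x_i)|$ by $\tau$. A secondary subtlety is that the denominator $P_\theta(Y=j \mid X=x_i)$ must be lower-bounded by $\min(C)$ at every relevant coordinate; this requires reading $\min(C)$ as a uniform floor on the model's conditional probabilities (rather than strictly as the minimum of the logarithmic cost matrix), and reconciling that reading with the literal definition in the statement.
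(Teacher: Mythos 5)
Your proof takes essentially the same route as the paper's: count at most $K-1$ positive entries per row (because $\sum_j P_s(Y=j|X=x_i) = \sum_j P_\theta(Y=j|X=x_i)=1$ forces at least one nonpositive difference), bound each positive entry by $\log(1+\tau/\text{denominator})$ via writing the numerator as $(\text{larger}-\text{smaller}) + \text{smaller}$, and sum squares over at most $m(K-1)$ terms. The only substantive difference is the sign convention: you take $\Delta_C^{(ij)} = \log\bigl(P_s(Y=j|X=x_i)/P_\theta(Y=j|X=x_i)\bigr)$, which is consistent with $\Delta_C = \hat C - C$ when $C_{ij}=-\log P_s$ and $\hat C_{ij}=-\log P_\theta$, so the denominator in your per-entry bound is $P_\theta$; the paper's proof writes $\Delta_C^{(ij)} = \log\bigl(P_\theta/P_s\bigr)$ and so has $P_s$ in the denominator. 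Either reading produces an inequality of the same form, and neither is clearly privileged by the statement of Theorem~\ref{thm:perturbation_bound} as written.

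The two obstacles you flag are real, but they are not gaps you introduced --- they are present in the paper's own proof and are glossed over there as well. First, the per-coordinate step $P_\theta(Y=j|X=x_i)-P_s(Y=j|X=x_i) \le \tau$ is asserted without justification; it holds if $\tau$ is read as a uniform bound on the rowwise conditional TVDs $\tfrac12\|P_s(\cdot|x_i)-P_\theta(\cdot|x_i)\|_1$, but it is not a consequence of a single joint TVD in general. Second, $\min(C)=\min_{i,j}C_{ij}$ literally minimizes the nonnegative log-costs, whereas the proof needs a uniform positive lower bound on the relevant conditional probabilities (the paper uses it to lower-bound $P_s(Y=j|X=x_i)$ in the denominator, you use it for $P_\theta$). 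A clean fix, for both of you, would be to define the constant directly as $p_{\min} := \min_{i,j} P_s(Y=j|X=x_i)$ (or $P_\theta$, depending on sign convention) rather than as $\min(C)$. Your exposition is correct modulo these issues, and you are right to call them out.
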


\begin{proof}[Proof of Lemma~\ref{lem:deltac}]
    For each element $\Delta_C^{(ij)}$ of $\Delta_C$, we have that
    \begin{align*}
        \Delta_C^{(ij)}&=\log P_\theta(Y=j|X=x_i) - \log P_s(Y=j|X=x_i)\\
        &= \log \frac{P_\theta(Y=j|X=x_i)}{P_s(Y=j|X=x_i)}.
    \end{align*}

    For $i,j$ such that $P_\theta(Y=j|X=x_i) \leq P_s(Y=j|X=x_i)$, clearly $\Delta_C^{(ij)}\leq 0$ and so $[vec(\Delta_C^{(ij)})]_+=0$.

    Otherwise, for $i,j$ such that $P_\theta(Y=j|X=x_i) > P_s(Y=j|X=x_i)$, it follows that $\Delta_C^{(ij)} > 0$ and
    \begin{align*}
        \Delta_C^{(ij)} &= \log \frac{P_\theta(Y=j|X=x_i) -P_s(Y=j|X=x_i) + P_s(Y=j|X=x_i)}{P_s(Y=j|X=x_i)}\\
        &\leq \log \left( \frac{\tau}{P_s(Y=j|X=x_i)} +1 \right)\\
        &\leq \log \left( \frac{\tau}{\min(C)} +1 \right).
    \end{align*}

    For each $i \in [m]$, there are at most $K-1$ possible $j$ such that $P_\theta(Y=j|X=x_i) > P_s(Y=j|X=x_i)$, because $\sum_{j\in [K]} P_\theta(Y=j|X=x_i) = \sum_{j\in K} P_s(Y=j|X=x_i)$. Therefore, there are at most $m(K-1)$ pairs $(i,j) \in [m]\times[K]$ such that $0<\Delta_C^{(ij)}\leq \log \left( \frac{\tau}{\min(C)} +1 \right)$. Thus,
    \begin{equation*}
        ||[vec(\Delta_C)]_+||\leq \sqrt{m(K-1)} \log \left( \frac{\tau}{\min(C)} +1 \right).
    \end{equation*}

\end{proof}

\subsection{Proof of Theorem \ref{thm:rotter}}\label{appsubsec:rotter_proof} We provide the proof of Theorem \ref{thm:rotter}.

\begin{proof}
By assumption, we have $P_\theta(Y|X)=P_s(Y|X)$. By the result of Theorem \ref{thm:label_shift_invariance}, $y_{\OTTER}$ samples are generated by $\arg \max_{j \in [K]}P_t(Y=j|X=x)$. Suppose $r^*=P_t(Y)/P_s(Y)$. Then,

\begin{align*}
P_{\theta, r^*}(Y=j|X=x)&= \cfrac{r_j^* P_\theta(Y=j|X=x)}{\sum_{j'=1}^K r^*_{j'} P_\theta(Y=j'|X=x)} \\
&=\cfrac{r_j^* P_s(Y=j|X=x)}{\sum_{j'=1}^K r_{j'}^* P_s(Y=j'|X=x)} \\
&=\cfrac{\frac{P_t(Y=j)}{P_s(Y=j)}\frac{P_s(X=x|Y=j)P_s(Y=j)}{P_s(X=x)}}{\sum_{j'=1}^K\frac{P_t(Y=j')}{P_s(Y=j')}\frac{P_s(X=x|Y=j')P_s(Y=j')}{P_s(X=x)}} \\
&=\cfrac{P_t(Y=j)P_s(X=x|Y=j)}{\sum_{j'=1}^KP_t(Y=j')P_s(X=x|Y=j')} \\
&=\cfrac{P_t(Y=j)P_t(X=x|Y=j)}{\sum_{j'=1}^KP_t(Y=j')P_t(X=x|Y=j')} \\
&=\cfrac{\frac{P_t(Y=j)P_t(Y=j|X=x)P_t(X=x)}{P_t(Y=j)}}{\sum_{j'=1}^K\frac{P_t(Y=j')P_t(Y=j'|X=x)P_t(X=x)}{P_t(Y=j')}} \\
&=\cfrac{P_t(Y=j|X=x)}{\sum_{j'=1}^K P_t(Y=j'|X=x)}
\end{align*}
Thus, we have
\begin{align*}
y_{\text{\ROTTER}} &= \arg\max_{j}P_{\theta, r^*}(Y=j|X=x) \\
&=\arg\max_{j}\cfrac{P_t(Y=j|X=x)}{\sum_{j'=1}^K P_t(Y=j'|X=x)} \\
&=\arg\max_{j}P_t(Y=j|X=x) \\
&=y_{\OTTER},
\end{align*}
which implies $r^*$ is an optimal parameter.
\end{proof}

\newpage
\section{Experiment details}\label{appsec:experiment_details}

\subsection{Datasets}
\paragraph{Zeroshot image classification datasets} We use
CIFAR10, CIFAR100 \citep{dataset_cifar}, Caltech101 \citep{dataset_caltech101}, Caltech256 \citep{dataset_caltech256}, Food101 \citep{dataset_food101}, STL10 \citep{dataset_stl10}, SUN397 \citep{dataset_sun397}, Flower102 \citep{dataset_flower102}, EuroSAT \citep{dataset_eurosat}, Oxford-IIIT Pet \citep{dataset_pet}, Stanford Cars \citep{dataset_stanfordcar}, DTD \cite{dataset_dtd}, CUB \cite{dataset_cub}, ImageNet \citep{dataset_imagenet}, ImageNet-r \citep{dataset_imagenet-r}, and ImageNet-Sketch \citep{dataset_imagenet-sketch}.

\paragraph{Zeroshot text classification datasets} We use Amazon \cite{amazon_data}, Gender \cite{dataset_gender_bias1}, CivilComments \cite{dataset_civilcomments}, and HateXplain \cite{mathew2021hatexplain}.

\begin{table*}[!t]\small
\setlength\tabcolsep{1.5pt} 
    \centering
    \begin{tabular}{ccccccc}
        \toprule
        ~ & CIFAR10 & CIFAR100 & Caltech101 & Caltech256 & Food101 & STL10 \\
        n & 10,000 & 10,000 & 7,162 & 22,897 & 25,250 & 8,000 \\
        K & 10 & 100 & 101 & 256 & 101 & 10 \\ 
        Imbalance & 1.00 & 1.00 & 49.06 & 15.94 & 1.00 & 1.00 \\ 
        \midrule 
        ~ & SUN397 & Flowers102 & EuroSAT & Oxford-IIIT-Pet & STANFORD-Cars & Country211 \\
        n & 87,004 & 6,149 & 22,000 & 3,669 & 8,041 & 211,00 \\
        K & 397 & 102 & 10 & 37 & 196 & 211 \\
        Imbalance & 25.43 & 11.90 & 1.67 & 1.14 & 2.83 & 1.00 \\ \midrule
        ~ & DTD & CUB & ImageNet & ImageNet-r & ImageNet-Sketch & ~ \\
        n & 1,880 & 5,794 & 40,000 & 26,000 & 40,889 & ~ \\
        K & 47 & 200 & 1,000 & 200 & 1,000 \\
        Imbalance & 1.00 & 2.73 & 1.00 & 13.23 & 1.03 & ~ \\ \midrule
        ~ & Amazon & Gender & CivilComments & HateXplain &  \cr
        n & 89,078 & 21,750 & 131,782 & 1,621 & \cr
        K & 2 & 2 & 2 & 3 & \cr 
        Imbalance & 19.45 & 6.03 & 8.26 & 1.52 & \cr \bottomrule
    \end{tabular}
    \caption{Statistics of the test dataset in each task. Class imbalance is measured by $\cfrac{\max_{j\in[K]}P_t(Y=j)}{\min_{j\in[K]}P_t(Y=j)}$.}\label{tab:data_imbalance}
\end{table*}
\newpage

\subsection{Zero-shot classification setup}

\paragraph{Image zero-shot classification}
For zero-shot image classification, we emply CLIP \citep{CLIP_base} models. We used ``a photo of a [CLASS]" prompt. Scores are computed by $s_{\theta}(x_i, j) = P_\theta(Y=j|X=x_i)= \cfrac{\exp\left(\cos(f(x_i), g(y_j))/\tau \right)}{\sum_{j'=1}^K\exp\left(\cos(f(x_i), g(y_{j'}))/\tau \right)}$ for image $x_i$ regarding the label $j$, given the image encoder $f$, the text encoder $g$. Cost matrix is constructed by $C = [C_{ij}]_{i \in [n], j \in [K]}$, where $c_{ij}=-\log s_\theta(x_i, j)$. We run Algorithm \ref{alg:main} with the true class balance of the test dataset.

\paragraph{Text zero-shot classification}
We employ BERT and text-embedding-ada-002 sentence embeddings for text classification \citep{bert_embedder}. This process parallels the methodology used in image zero-shot classification --- we compute the prediction scores from the cosine similarity and then construct the cost matrix with the negative log probabilities.

\subsection{Detailed experiment results of Section \ref{subsec:exp_real}}\label{appsec:exp1_exact_prior}
\paragraph{Ablation on zero-shot models}
For the ablation study on zero-shot models, we provide experimental results with varying architectures, given the exact prior. Table \ref{tab:exp_true_cb_img}, \ref{tab:exp_true_cb_img2} show the image zero-shot classification results, and Table \ref{tab:exp1_true_balance_txt_full} shows the text zero-shot classification results. We also provide another baseline results with CLIPPR \cite{kahana2022improvingclippr}, which uses the label distribution for adapter training. CLIPPR is similar with Prior Matching in the point that it requires adapter training, but it has more adapter layers and additional loss function to make the predictions stick to the original prediction scores. While the performance gain varies, we can observe that $\SYSNAME$ is effective for the most cases. 
\setlength{\tabcolsep}{0.1em}
\begin{table*}[t!]
\scriptsize
   \centering
   \begin{tabular}{llccc ccccc}
   ~ & ~ & \rotatebox{90}{CIFAR10} & \rotatebox{90}{CIFAR100} & \rotatebox{90}{Caltech101} & \rotatebox{90}{Caltech256} & \rotatebox{90}{Country211} & \rotatebox{90}{DTD} & \rotatebox{90}{EUROSAT} & \rotatebox{90}{Flower102}\\
   \toprule
    \multirow{3}{*}{RN50} & Zero-shot & 66.5 & 38.7 & 73.5 & 73.0 & 13.3 & 37.5 & 18.2 & 57.4 \\
    \cmidrule(lr){2-10}
    ~ & \multirow{1}{*}{CLIPPR} & 69.2 & 20.1 & 59.4 & 67.1 & 8.2 & 13.0 & 19.3 & 30.7 \\
    \cmidrule(lr){2-10}
    ~&  \multirow{1}{*}{Prior Matching} & 76.4$(\pm 0)$&41.1$(\pm 2.5)$& 44.3$(\pm 9.7)$ & 9.8$(\pm 1.6)$ & 13.11($\pm$0.6) & 41.6($\pm$0.2) & \ohl{31.5 ($\pm 6.0)$}  & 50.0($\pm$2.8)\\
    \cmidrule(lr){2-10}
    ~ & \multirow{1}{*}{\SYSNAME (Ours)} & \hl{76.8} & \hl{44.4} & \hl{83.7} & \hl{80.5} & \hl{14.1} & \hl{42.2} & 26.8 & \hl{64.3} \\
    \midrule
    \multirow{3}{*}{RN101} & Zero-shot & 79.1 & 46.3 & 81.0 & 76.6 & 14.8 & 37.3 & 33.8 & 61.3 \\
    \cmidrule(lr){2-10}
    ~ & \multirow{1}{*}{CLIPPR} & 80.4 & 24.1 & 52.5 & 65.4 & 7.6 & 5.9 & \ghl{37.1} & 30.5 \\
    \cmidrule(lr){2-10}
     ~&  \multirow{1}{*}{Prior Matching} & 80.3 $(\pm 0.1)$ & 46.7 $(\pm 1.3)$ & 57.6 $(\pm 14.2)$ &  9.6 $(\pm 2.4)$ & 15.1 $(\pm 0.7)$ & 39.4 $(\pm 0.1)$ & 34.0 $(\pm 5.9)$ & 49. $(\pm 20.9)$ \\
     \cmidrule(lr){2-10}
    ~ & \multirow{1}{*}{\SYSNAME (Ours)} & \hl{81.1} & \hl{50.9} & \hl{89.4} & \hl{83.7} & \hl{16.0} & \hl{40.7} & 33.0 & \hl{67.7} \\
    \midrule
     \multirow{3}{*}{ViT-B/32} & Zero-shot & 88.9 & 58.5 & 80.3 & 78.2 & 15.5 & 40.7 & 29.7 & 59.3 \\
     \cmidrule(lr){2-10}
     ~ & \multirow{1}{*}{CLIPPR} & \ghl{89.9} & 37.8 & 55.5 & 69.3 & 8.5 & 8.4 & 31.8 & 29.8  \\
     \cmidrule(lr){2-10}
     ~&  \multirow{1}{*}{Prior Matching} & 89.7 $(\pm 0)$ & 58.1 $(\pm 1.2)$ & 54.7 $(\pm 14.1)$ & 9.5 $(\pm 1.6)$ & 14.8 $(\pm 0.1)$ &42.9 $(\pm 0.1)$ & 41.2 $(\pm 0.7)$ & 51.5 $(\pm 3.2)$ \\
     \cmidrule(lr){2-10}
     ~ & \multirow{1}{*}{\SYSNAME (Ours)} & 89.7 & \hl{64.4} & \hl{88.0} & \hl{85.0} & \hl{15.9} & \hl{45.1} & \hl{44.9} & \hl{68.1} \\
    \midrule
    \multirow{3}{*}{ViT-B/16} & Zero-shot & 88.3 & 63.8 & 79.8 & 79.8 & 19.8 & 39.0 & 32.9 & 64.0 \\
    \cmidrule(lr){2-10}
    ~ & \multirow{1}{*}{CLIPPR} &90.3 & 41.6 & 60.3 & 75.4 & 12.0 & 9.1 & 36.4 & 36.2  \\
    \cmidrule(lr){2-10}
    ~&  \multirow{1}{*}{Prior Matching} & 91.3 $(\pm 0)$ & 64.1 $(\pm 2.7)$ & 59.3 $(\pm 15.4)$ & 9.5 $(\pm 1.5)$ & 19.0 $(\pm 0.1)$ & 42.1 $(\pm 0.1)$ & 41.6 $(\pm 0.8)$ & 54.0 $(\pm 14.1)$ \\
    \cmidrule(lr){2-10}
    ~ & \multirow{1}{*}{\SYSNAME (Ours)} & 
    \hl{91.7} & \hl{67.9} & \hl{88.7} & \hl{87.0} & \hl{21.1} & \hl{44.4} & \hl{42.0} & \hl{70.8} \\
    \midrule

    \multirow{3}{*}{ViT-L/14} & Zero-shot & 95.0 & 72.3 & 78.2 & 83.4 & 28.2 & 50.8 & 25.8 & 72.3 \\
    \cmidrule(lr){2-10}
    ~ & \multirow{1}{*}{CLIPPR} & \ghl{96.2} & 57.9 & 66.8 & 80.2 & 17.4 & 12.8 & 31.5 & 44.3\\
    \cmidrule(lr){2-10}
    ~&  \multirow{1}{*}{Prior Matching} & 95.2 $(\pm 0)$ & 73.5 $(\pm 2.5)$ & 68.5 $(\pm 19.7)$ & 9.4 $(\pm 1.2)$ & 27.5 $(\pm 0.3)$ & \ohl{51.5} $(\pm 0)$ & \ohl{59.2 $(\pm 0.4)$} & 66.5 $(\pm 17.9)$\\
    \cmidrule(lr){2-10}
    ~ & \multirow{1}{*}{\SYSNAME (Ours)} & 96.0 & \hl{77.7} & \hl{91.7} & \hl{90.7} & \hl{29.5} & 51.0 & 57.6 & \hl{81.4}\\
    
    
    \bottomrule
    
   \end{tabular}
   \caption{CLIP Zero-shot image classification accuracy (\%)}\label{tab:exp_true_cb_img}
 \end{table*}

\begin{table*}[t!]\label{tab:exp_true_cb_img2}
\setlength{\tabcolsep}{0.1em}
   \centering
   \scriptsize
   \begin{tabular}{llccc cccccc}
   ~ & ~ & \rotatebox{90}{Food101} & \rotatebox{90}{Pet} &\rotatebox{90}{Stanford} & \rotatebox{90}{STL} & \rotatebox{90}{SUN397} & \rotatebox{90}{CUB} & \rotatebox{90}{ImageNet} & \rotatebox{90}{ImageNet-r} & \rotatebox{90}{Imagenet-Sketch}\\
   \toprule
    \multirow{3}{*}{RN50} & Zero-shot & 76.3 & 80.5 & 45.6 & 93.1 & 42.4 & 39.8 & 51.5 & 35.0 & 5.3\\
    \cmidrule(lr){2-11}
    ~ & \multirow{1}{*}{CLIPPR} & 69.5 & 61.9 & 32.5 & 94.4 & 45.0 & 19.0 & 31.1 & 23.8 & 0.7\\
    \cmidrule(lr){2-11}
    ~&  \multirow{1}{*}{Prior Matching} & 77.5 $(\pm 1.4)$ & 77.4 $(\pm 0.3)$ & 27.5 $(\pm 1.9)$ & 94.7 $(\pm 0)$ & 12.3 $(\pm 3.9)$ & 36. $(\pm 0.1)$ & 44.6 $(\pm 0.1)$ & 7.8 $(\pm 1.8)$ & 5.0 $(\pm 0.0)$ \\ 
    \cmidrule(lr){2-11}
    ~ & \multirow{1}{*}{\SYSNAME (Ours)} &\hl{81.1} & \hl{82.9} & \hl{49.2} & \hl{95.5} & \hl{48.1} & \hl{42.7} & \hl{54.0} & \hl{37.6} & \hl{5.8} \\
    \midrule
    \multirow{3}{*}{RN101} & Zero-shot &80.9 & 80.2 & 52.8 & 96.5 & 36.9 & 34.9 & 53.4 & 41.9 & 6.5 \\
    \cmidrule(lr){2-11}
    ~ & \multirow{1}{*}{CLIPPR} & 72.3 & 57.7 & 31.9 & \ghl{97.2} & 40.2 & 9.6 & 25.7 & 26.9 & 0.8 \\
    \cmidrule(lr){2-11}
    ~&  \multirow{1}{*}{Prior Matching} & 80.8 $(\pm 3.7)$ & 77.2 $(\pm 0.4)$ & 34.8 $(\pm 2.9)$ & 96.7 $(\pm 6)$ & 17.6 $(\pm 5)$ & 33.6 $(\pm 0.1)$ & 46.7 $(\pm 0.1)$ & 9.5 $(\pm 3.1)$ & 6.5 $(\pm 0.1)$ \\
    \cmidrule(lr){2-11}
    ~ & \multirow{1}{*}{\SYSNAME (Ours)} & \hl{84.4} & \hl{84.2} & \hl{55.2} & \hl{97.2} & \hl{44.6} & \hl{41.0} & \hl{56.0} & \hl{44.5} & \hl{7.3}\\
    \midrule

    \multirow{3}{*}{ViT-B/32} & Zero-shot & 80.2 & 81.7 & 49.1 & 97.1 & 39.1 & 39.9 & 55.6 & 61.0 & 34.2 \\
    \cmidrule(lr){2-11}
    ~ & \multirow{1}{*}{CLIPPR} & 73.9 & 58.0 & 30.6 & 97.5 & 41.6 & 11.9 & 32.6 & 51.7 & 19.9\\
    \cmidrule(lr){2-11}
    ~&  \multirow{1}{*}{Prior Matching} & 80.0 $(\pm 4.4)$ & 77.5 $(\pm 0.6)$ & 31.5 $(\pm 1.1)$ & 97.4 $(\pm 0.0)$ & 11.3 $(\pm 3.1)$ & 36.3 $(\pm 0.9)$ & 48.3 $(\pm 0.1)$ & 11.7 $(\pm 0.8)$ & 30.4 $(\pm 0.3)$ \\
    \cmidrule(lr){2-11}
    ~ & \multirow{1}{*}{\SYSNAME (Ours)} &\hl{85.0} & \hl{86.2} & \hl{52.1} & \hl{97.8} & \hl{44.7} & \hl{45.4} & \hl{57.7} & \hl{64.7} & \hl{39.3}\\
    \midrule

    \multirow{3}{*}{ViT-B/16} & Zero-shot &85.6 & 83.8 & 55.7 & 98.0 & 47.1 & 46.0 & 60.2 & 68.9 & 39.8 \\
    \cmidrule(lr){2-11}
    ~ & \multirow{1}{*}{CLIPPR} & 80.5 & 64.5 & 40.7 & \ghl{98.6} & 50.4 & 20.5 & 37.7 & 59.8 & 25.6\\
     \cmidrule(lr){2-11}
     ~&  \multirow{1}{*}{Prior Matching} & 86.8 $(\pm 3.1)$ & 82.0 $(\pm 0.3)$ & 39.8 $(\pm 2.6)$ & 98.4 $(\pm 0)$ & 6.7 $(\pm 1.6)$ & 40.4 $(\pm 0)$ & 53.6 $(\pm 0.1)$ & 16.6 $(\pm 3.5)$ & 36.5 $(\pm 0.4)$ \\
     \cmidrule(lr){2-11}
    ~ & \multirow{1}{*}{\SYSNAME (Ours)} & \hl{89.9} & \hl{88.8} & \hl{59.7} & \hl{98.6} & \hl{54.1} & \hl{50.4} & \hl{62.9} & \hl{72.4} & \hl{44.5} \\
    \midrule

    \multirow{3}{*}{ViT-L/14} & Zero-shot & 89.8 & 87.9 & 64.1 & 99.2 & 64.9 & 47.3 & 67.5 & 80.9 & 51.9\\
    \cmidrule(lr){2-11}
    
    ~ & \multirow{1}{*}{CLIPPR} & 87.9 & 82.7 & 54.0 & \ghl{99.6} & 68.2 & 27.3 & 45.4 & 74.5 & 35.5\\
     \cmidrule(lr){2-11}
    ~&  \multirow{1}{*}{Prior Matching} & 90.4 $(\pm 3.8)$ & 84.0 $(\pm 0.5)$ & 53.0 $(\pm 15.6)$ & 99.3 $(\pm 0)$ & 26.5 $(\pm 1.1)$ & 43.9 $(\pm 0.3)$ & 62.3 $(\pm 0.1)$ & 17.6 $(\pm 5.1)$ &47.2 $(\pm 0.4)$ \\
     \cmidrule(lr){2-11}
    ~ & \multirow{1}{*}{\SYSNAME (Ours)} &\hl{93.6} & \hl{91.0} & \hl{70.0} & 99.4 & \hl{71.5} & \hl{55.4} & \hl{70.2} & \hl{83.7} & \hl{55.2} \\
    \bottomrule
    
   \end{tabular}
   \caption{CLIP Zero-shot image classification accuracy (\%) continued.}
   \label{tab:exp_true_cb_img2}
 \end{table*}

\setlength{\tabcolsep}{0.5em}
\begin{table*}[t!]\small
\small
   \centering
   \begin{tabular}{llcccc}
   ~ & ~ & Amazon review & GenderBias & CivilComments & HateXplain\\
   \toprule
    \multirow{4}{*}{BERT} & Zero-shot & 74.0  & 84.1  & 48.4  & 30.9  \\
    ~ & CLIPPR & 74.9 & 84.2 & 49.4 & 31.0 \\
    ~ & Prior Matching & 58.8 ($\pm$ 46.4) & 41.4 ($\pm$ 39.6) & 57.2 ($\pm$ 37.7) & 31.3 ($\pm3.3)$ \\
    ~ & \SYSNAME (Ours) & \hl{91.7 } & \hl{91.9 } & \hl{81.4 } & \hl{34.3 } \\
    \midrule
    \multirow{4}{*}{Ada} & Zero-shot & 72.3  & 50.9  & 56.2  & 27.9 \\
    ~ & CLIPPR & 73.7 & 53.0 & 57.6 & 27.6\\
    ~ & Prior Matching & 58.8 ($\pm$ 43.7) & 50.0 ($\pm41.3)$ & 55.2 ($\pm$35.5) & \ohl{32.4 ($\pm$ 4.0)} \\
    ~ & \SYSNAME (Ours) & \hl{97.0 } & \hl{73.7 } & \hl{82.0 } & \hl{32.0} \\
   \bottomrule
    
   \end{tabular}
   \caption{Text embedding zero-shot classification mean accuracy and standard deviation (\%)}
   \label{tab:exp1_true_balance_txt_full}
 \end{table*}


\newpage
\paragraph{Ablation on the class balance specification} We conducted a semi-synthetic experiment to investigate the sensitivity to the label distribution specification error in real datasets. We generate the noisy label distribution specification and see how the accuracy changes. We control the noise in the label distribution specification as follows. Given the true class balance $\nu^*$, first we make adversarial class balance $\nu^{adv}$ such that $\nu^{adv}_{j^*}=1$ for $j^*=\arg\min_{j \in [K]} \nu^*_j$ and $\nu^{adv}_{j}=0$ for $j\neq j^*$. To measure distance between class balance specification and true class balance, we use the total variance $TV(\nu, \hat{\nu})= \frac{1}{2}\norm{\nu-\hat{\nu}}_1$. Next, we intepolate $\nu^*$ and $\nu^{adv}$ such that $TV(\nu^*, \nu^\alpha)=\alpha$, by $\nu^\alpha = (1-\frac{\alpha}{TV(\nu^*, \nu^{adv})})\nu^*+\frac{\alpha}{TV(\nu^*, \nu^{adv})}\nu^{adv}$. We set the interval of alpha as 0.01 and vary it up to 0.2.

Figure \ref{fig:exp01_semisynthetic_cb} shows the result. We observe the sensitivity to the label distribution specification error varies depending on the datasets, but generally we can observe that the accuracy degrades linearly proportionally to the class balance error. While the result may vary depending on the interaction between class balance error and calibration error in cost matrix, we can expect performance improvement if the class balance specification is good enough.
\begin{figure}[t!]\small
\centering
\includegraphics[width=.8\textwidth]{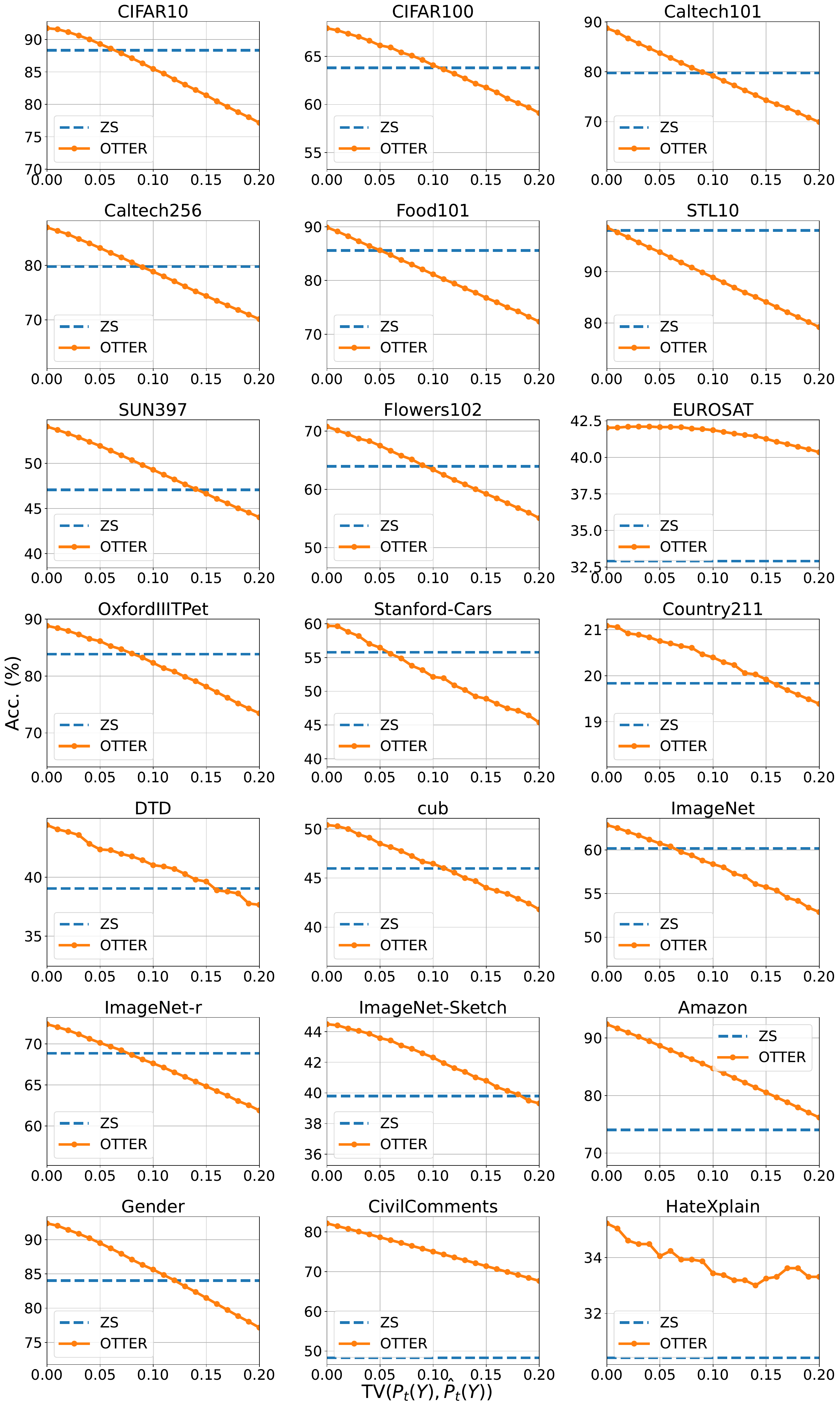}
\caption{Ablation experiment on the class balance specification. X-axis represents the total variation distance between the class specification true class balance $P_t(Y)$ and $\hat{P}_t(Y)$. Y-axis represents accuracy. ViT-B/16 is used as the image zero-shot classifier, and BERT is used as the text zero-shot classifier.}
\label{fig:exp01_semisynthetic_cb}
\end{figure}

\paragraph{Inference time comparison} To show that the additional computation complexity induced by OTTER is not heavy, we measured the time consumption (in seconds) for the inference step in the experiments in Section \ref{subsec:exp_real}, with the pre-computed embeddings. Table \ref{tab:time_comparison} presents the result. Time reduction column represents the time reduction rate of OTTER compared to PM. Measurements were taken using a machine equipped with an Intel® Core™ i7-11700K @ 3.60GHz processor, 64GB RAM, and NVIDIA GPU RTX-4090. For most cases ($n<30000$), our method takes less than 1 second, while the prior matching baseline takes more than 3 seconds. It's worth noting that the time consumption for computing embeddings is more substantial; even 10 seconds is negligible compared to the embedding time consumption (ranging from 5 to 30 minutes for each evaluation set), which is common for all inference conditions.

\begin{table}[t!]
    \centering
    \begin{tabular}{|c|c|c|c|c|c|}
    \toprule
        Dataset & n & ZS & PM & OTTER & Time reudction (\%) \\ 
        \midrule
        CIFAR10 & 10000 & 0.0381 & 3.7127 & 0.0733 & 98.03 \\
        CIFAR100 & 10000 & 0.0462 & 3.6296 & 0.1947 & 94.64 \\ 
        Caltech101 & 7162 & 0.0298 & 3.6445 & 0.1188 & 96.74 \\
        Caltech256 & 22897 & 0.2111 & 3.9597 & 0.8568 & 78.36 \\ 
        Food101 & 25250 & 0.1186 & 3.6968 & 0.4969 & 86.56 \\ 
        STL10 & 8000 & 0.0304 & 3.4877 & 0.0546 & 98.43 \\
        SUN397 & 87004 & 1.1233 & 33.0386 & 10.5316 & 68.12 \\ 
        Flowers102 & 6149 & 0.0280 & 3.7216 & 0.0959 & 97.42 \\ 
        EuroSAT & 22000 & 0.0826 & 3.6655 & 0.3491 & 90.48 \\ 
        OXFORD-IIIT-Pet & 3669 & 0.0137 & 3.3901 & 0.0233 & 99.31 \\
        STANFORD-Cars & 8041 & 0.0413 & 3.4910 & 0.1964 & 94.37 \\
        Country211 & 21100 & 0.1285 & 3.7665 & 1.0537 & 72.02 \\
        DTD & 1880 & 0.0070 & 3.4603 & 0.0156 & 99.55 \\
        CUB & 5794 & 0.0306 & 3.5583 & 0.1410 & 96.04 \\
        ImageNet & 40000 & 0.9954 & 37.6932 & 8.1003 & 78.51 \\
        ImageNet-r & 26000 & 0.1921 & 3.8331 & 0.9834 & 74.35 \\
        ImageNet-Sketch & 40889 & 1.0189 & 38.4853 & 9.0579 & 76.46 \\
    \bottomrule
    \end{tabular}
    \caption{Inference time comparison with pre-computed embeddings (in seconds).}\label{tab:time_comparison}
\end{table}

\paragraph{Ablation on prompts} Recent studies have demonstrated the efficacy of enhancing prompts as a means to improve zero-shot models \cite{zhou2022learnprompt, menon2022visual}. In order to further illustrate the potential enhancements offered by $\SYSNAME$ beyond enhanced prompts, we reproduced \citet{menon2022visual}'s approach (Classification by Description, CD), which employs multiple prompts generated by language models and takes max scores of them for each class. We applied $\SYSNAME$ to CD. The results of this experiment are summarized in Table \ref{tab:otter_cbd}. As anticipated, OTTER exhibits enhancements in zero-shot classification, even when prompt sets are refined using language models.

\begin{table}[t!]
    \centering
    \begin{tabular}{|c|c|c|c|c|c|c|}
    \toprule
                ~ & ZS & PM & OT & ZS + CD & PM + CD & OT + CD \\
        \midrule
        EuroSAT & 32.90 & 11.36 & 42.03 & 53.62 & 11.37 & \hl{57.15} \\ 
        Oxford-IIIT-Pet  & 83.84 & 23.11 & 88.83 & 87.95 & 16.33 & \hl{91.01} \\
        DTD & 39.04 & 8.83 & \hl{44.41} & 42.87 & 14.73 & 43.24 \\ 
        CUB & 45.98 & 10.34 & 50.40 & 55.51 & 11.49 & \hl{58.47} \\
        ImageNet & 60.18 & 12.42 & 62.86 & 66.46 & 14.08 & \hl{68.05} \\
    \bottomrule
    \end{tabular}
    \caption{Accuracy in the prompt-enhanced zero-shot classification by Classification by Description (CD) \cite{menon2022visual}. We can observe $\SYSNAME$'s capability to provide further enhancements upon refined the improvements achieved through refined prompts.}\label{tab:otter_cbd}
\end{table}

\newpage

\subsection{Detailed experiment results of Section \ref{subsubsec:exp2_fewshot}} \label{appsec:exp2_fewshot}
\paragraph{Label distribution estimation errors} We report the label distribution estimation errors in Section \ref{subsubsec:exp2_fewshot}. As a metric, we use the total variance $TV(\nu, \hat{\nu})= \frac{1}{2}\norm{\nu-\hat{\nu}}_1$. We use zeroshot prediction scores
and linear probing prediction scores for BBSE. $\hat{\nu}^{zs}$ denotes the estimated class balance based on zero-shot prediction scores, and $\hat{\nu}^{lp}$ represents the estimated class balance based on linear probing prediction scores.

Table \ref{tab:exp2_bbse_cb_errors} shows the result. We can see that total variation decreases with linear probing in image classification tasks since they reduces the violation of label shift assumptions. However, total variation increases in text classification tasks due to the small number of labeled sample size, following the size of label space ($K=2$ or $3$). Accordingly, we can expect $\SYSNAME$ will be more useful with linear probing, and just rebalancing zero-shot predictions with $\SYSNAME$ could be enough for text classification tasks.

\begin{table}[t!]
    \centering
    \begin{tabular}{ccc||ccc}
        Dataset & TV($\nu^*, \hat{\nu}^{zs}$) & TV($\nu^*, \hat{\nu}^{lp}$) & Dataset & TV($\nu^*, \hat{\nu}^{zs}$) & TV($\nu^*, \hat{\nu}^{lp}$) \\
        \toprule
        CIFAR10 & 0.071 & \hl{0.038} & STL10 & 0.021 & \hl{0.011} \\ 
        CIFAR100 & 0.219 & \hl{0.153} & SUN397 & 0.503 & \hl{0.458} \\
        Caltech101 & 0.130 & \hl{0.041} & CUB & 0.245 & \hl{0.102} \\
        Caltech256 & 0.126 & \hl{0.081} & ImageNet & \hl{0.175} & \hl{0.175} \\
        Country211 & 0.439 & \hl{0.336} & ImageNet-r & 0.210 & \hl{0.189} \\
        DTD & 0.441 & \hl{0.160} & ImageNet-sketch & 0.236 & \hl{0.211} \\
        EUROSAT & 0.404 & \hl{0.084} & Amazon & \hl{0.090} & 0.253 \\
        Flowers102 & 0.202 & \hl{0.067} & CivilComments & \hl{0.369} & 0.383 \\
        Food101 & 0.112 & \hl{0.090} & Gender & \hl{0.083} & 0.155 \\
        Oxford-IIIT-Pet & 0.219 & \hl{0.114} & HateXplain & 0.253 & \hl{0.203} \\ 
        Stanford-Cars & 0.255 & \hl{0.143} & ~ & ~ & ~ \\
        \bottomrule
    \end{tabular}
    \caption{Class balance estimation error with BBSE in Section \ref{subsubsec:exp2_fewshot}. We report the mean of 10 different random samplings of the validation set. Lower is better.}\label{tab:exp2_bbse_cb_errors}
\end{table}

\paragraph{Ablation experiments on linear probing} We provide full results of Section \ref{subsubsec:exp2_fewshot}. Specifically, we additionally report the results of combination with linear probing in text classification tasks and the results of zero-shot classification results in image classification tasks.

The results are presented in Table \ref{tab:exp2_fewshot_full}. While $\OTTER$ often provides additional improvement over LP, zero-shot classification was a strong baseline in image classification tasks. Meanwhile, class balance adaptation in text classification tasks is effective in all cases, giving a significant improvement over zero-shot predictions.

\newpage

\paragraph{Ablation experiments on the number of examples per class}
Few-shot adaptation scenario assumes we have access to labeled data to estimate the target distribution. We hypothesize that an increase in the number of labeled samples enhances the accuracy of the class balance estimation, thereby improving the performance of $\SYSNAME$. To test this hypothesis, we use few-shot adaptation in image and text classification tasks, without linear probing. The experiment varies the number of samples per class from 10 to 100, anticipating a reduction in class balance estimation error and an improvement in $\SYSNAME$'s accuracy with the increase in labeled samples.

The results, as depicted in Figure \ref{fig:ablation_num_samples}, corroborate our hypothesis. It is evident that the error in class balance estimation diminishes with an increasing number of samples, leading to a subsequent enhancement in the accuracy of $\SYSNAME$.

\begin{figure}[t!]\small
\small
	\centering
	\subfloat[CIFAR100]{
\includegraphics[width=0.25\textwidth]{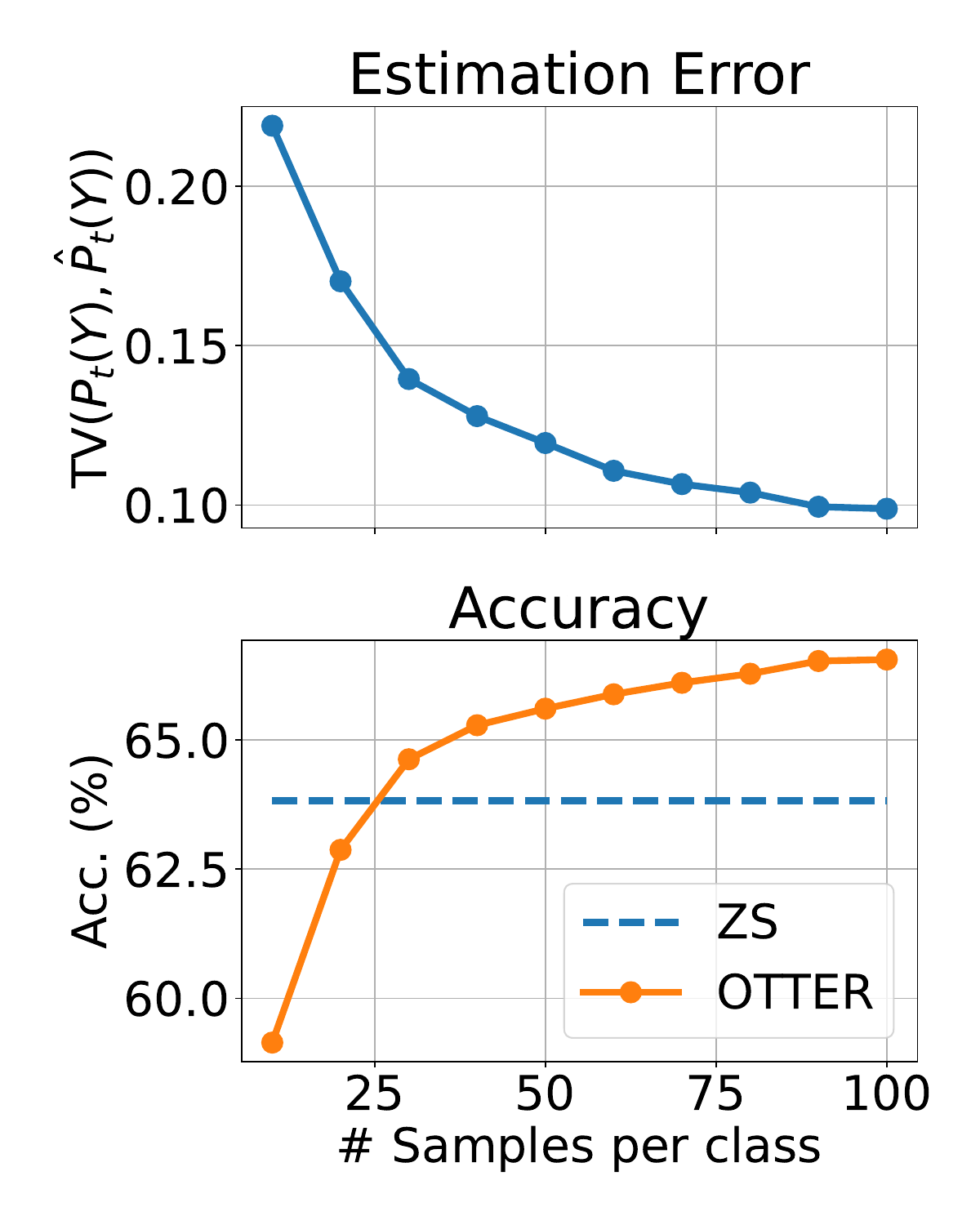}}
        \subfloat[Country211]{
\includegraphics[width=0.25\textwidth]{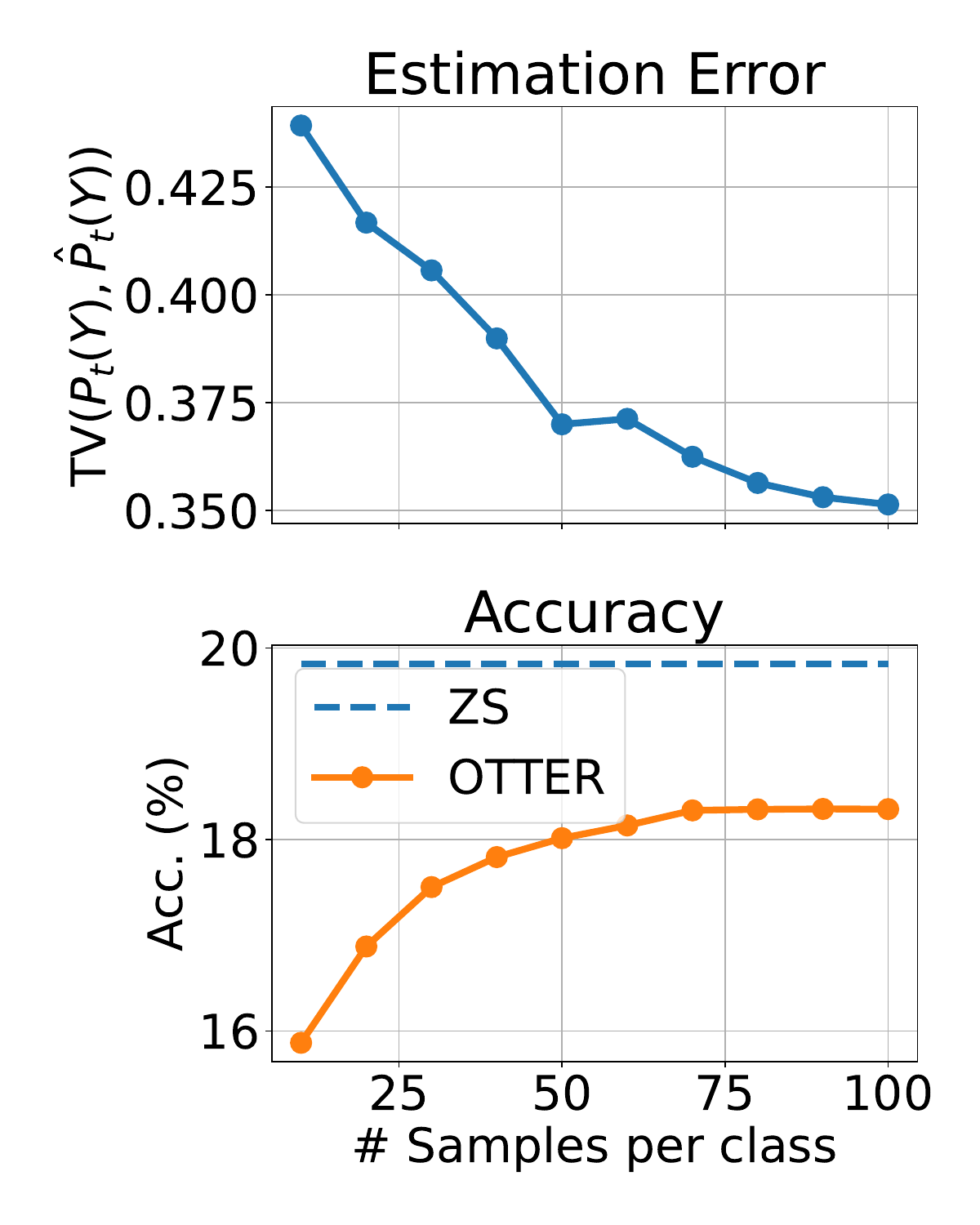}}
        \subfloat[Amazon]{
\includegraphics[width=0.25\textwidth]{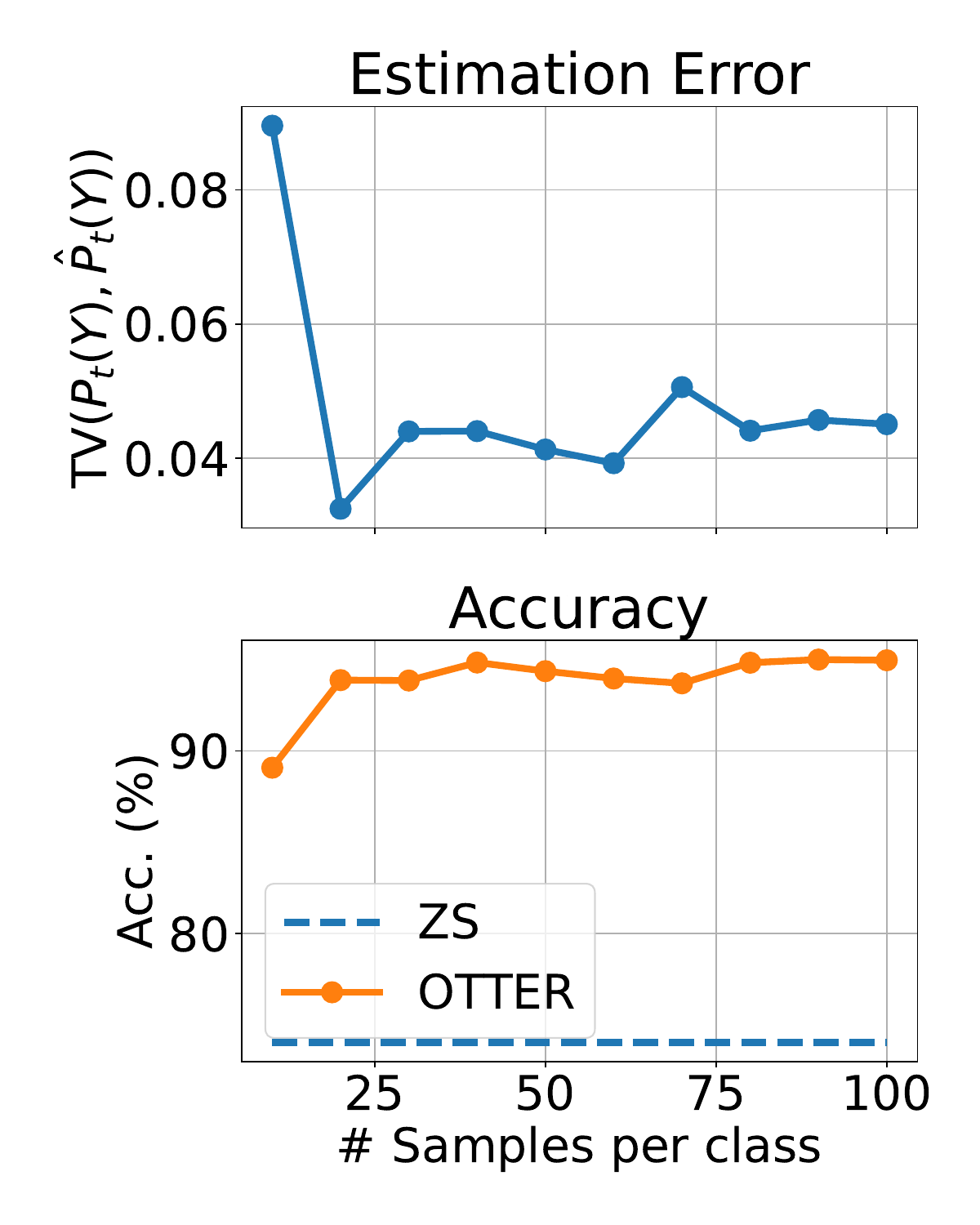}}
        \subfloat[CivilComments]{
\includegraphics[width=0.25\textwidth]{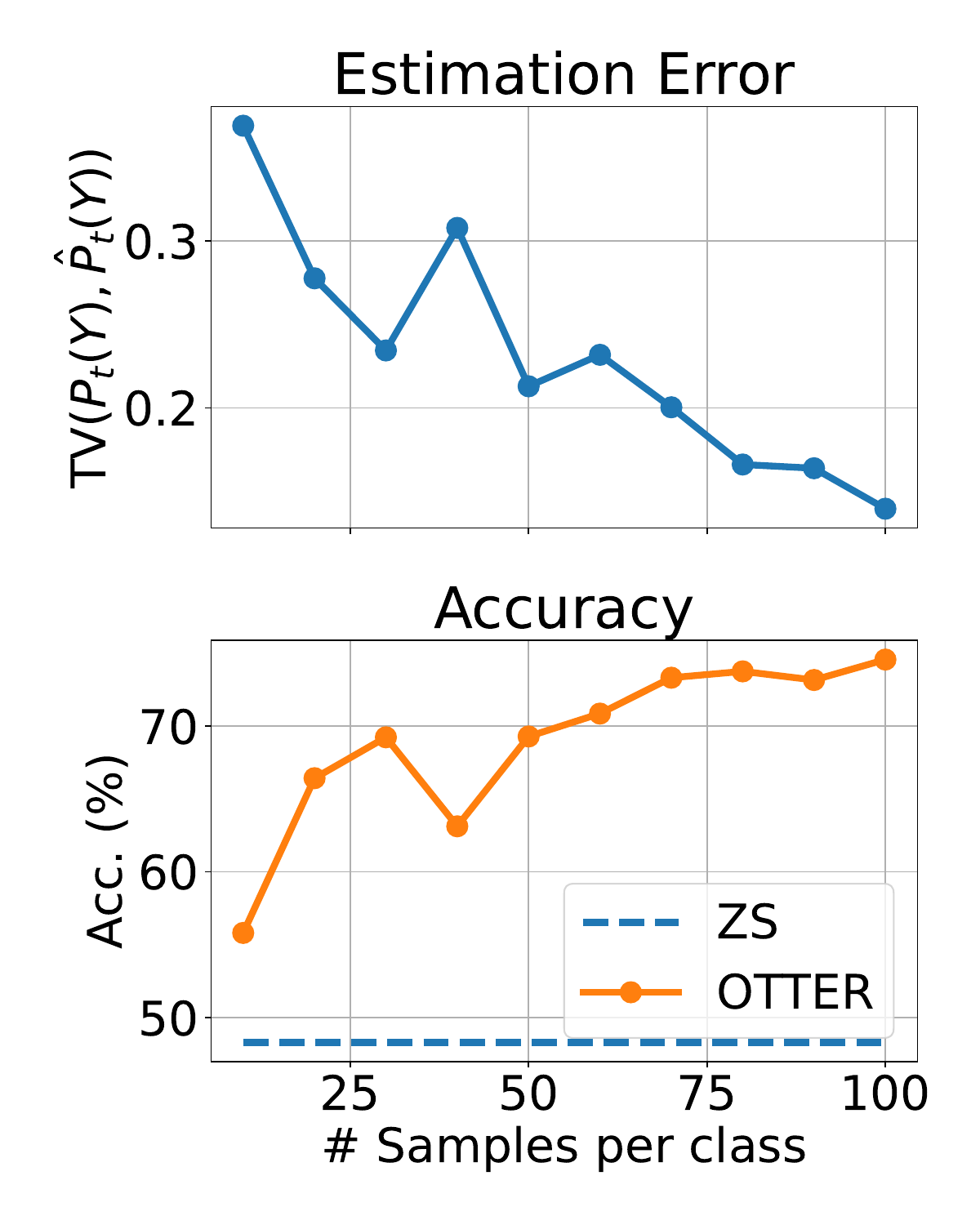}}
\caption{Ablation experiment on the number of samples. We report the mean of 10 different samplings in each setting. We use ViT-B/16 for image classification, and BERT for text classification.}
\label{fig:ablation_num_samples}
\end{figure}

\paragraph{Comparison between $\SYSNAME$ and Linear Probing with varying number of classes} In the few-shot adaptation scenario, we explored three approaches: $\SYSNAME$, linear probing (LP), and a combination of LP + $\SYSNAME$. We formulated two hypotheses. The first posits that $\OTTER$ might outperform LP, particularly in situations with a limited number of samples. The second hypothesis suggests that $\OTTER$ could provide further enhancements to LP even when LP already surpasses the naive version of $\OTTER$. This experiment was conducted using the same setup as the previous one.

The results, displayed in Figure \ref{fig:ablation_num_samples_lp_vs_otter}, reveal several insights regarding our hypotheses. To begin with, $\SYSNAME$ demonstrates performance on par with LP, especially in scenarios with fewer samples. Interestingly, $\SYSNAME$ achieves superior accuracy compared to LP in datasets like Amazon and CivilComments, characterized by a small number of classes ($K=2$), resulting in a relatively low total sample count. Furthermore, it is observed that incorporating $\SYSNAME$ into LP leads to an average increase in accuracy.

\begin{figure}[t!]\small
\small
	\centering
	\subfloat[CIFAR100]{
\includegraphics[width=0.25\textwidth]{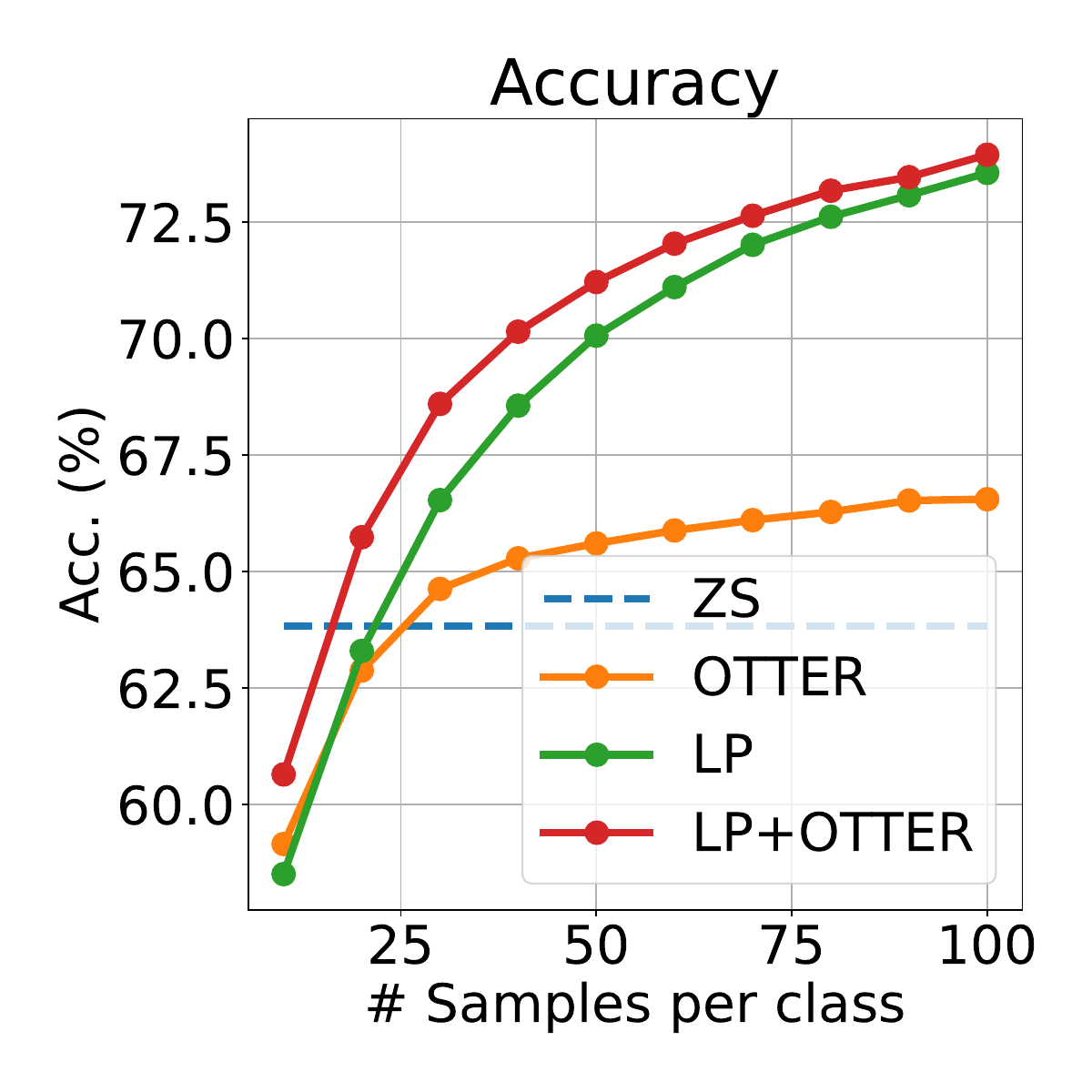}}
\subfloat[Country211]{
\includegraphics[width=0.25\textwidth]{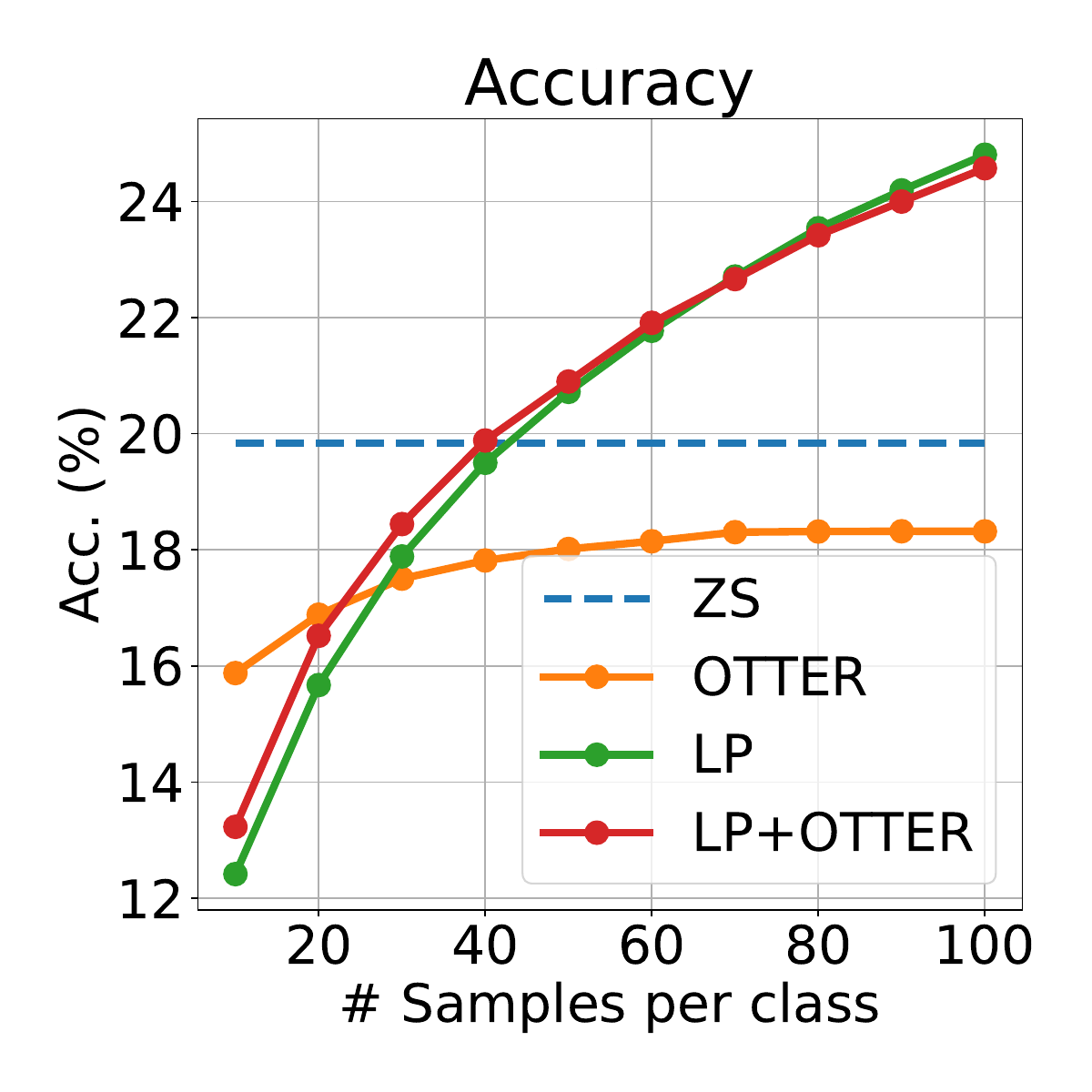}}
        \subfloat[Amazon]{
\includegraphics[width=0.25\textwidth]{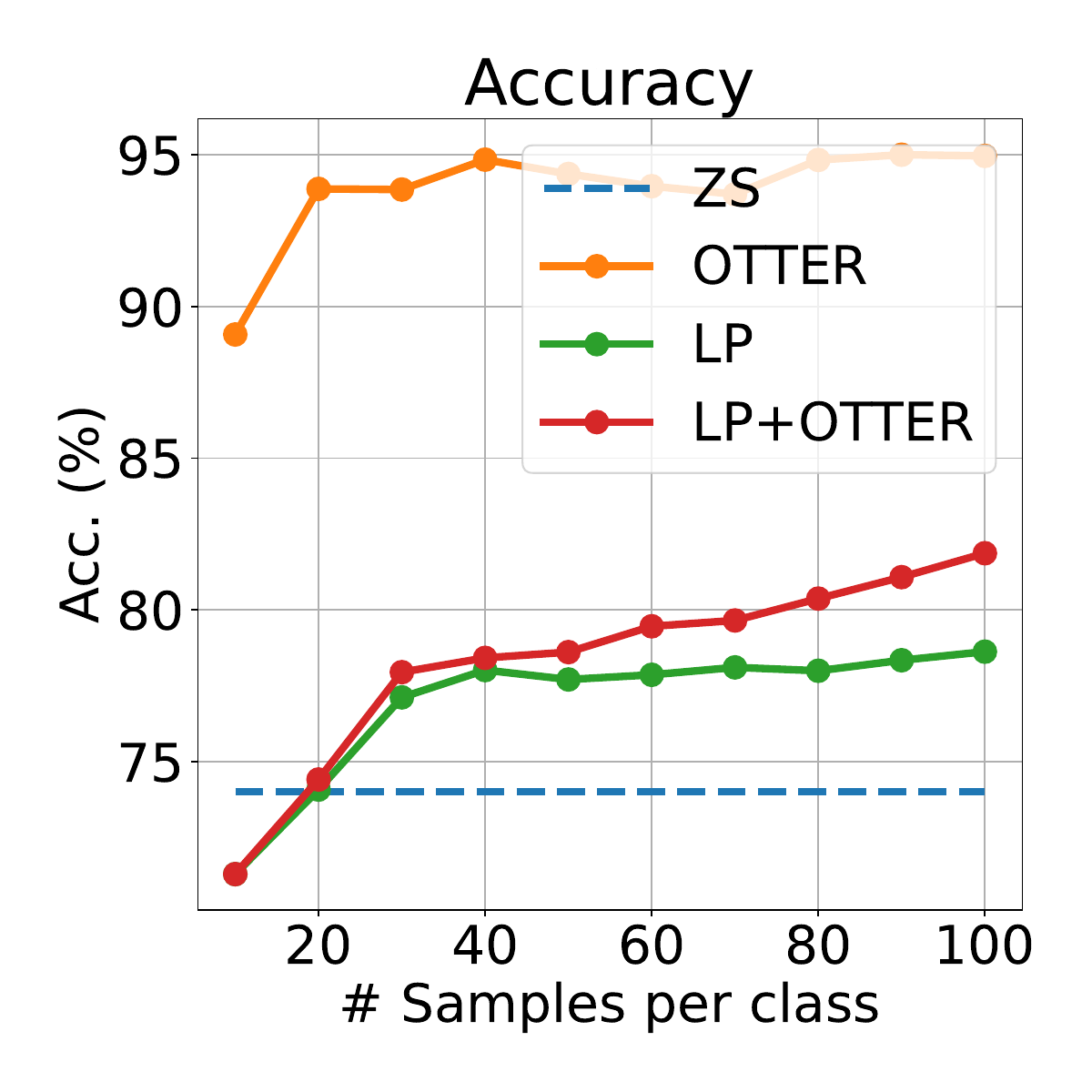}}
        \subfloat[CivilComments]{
\includegraphics[width=0.25\textwidth]{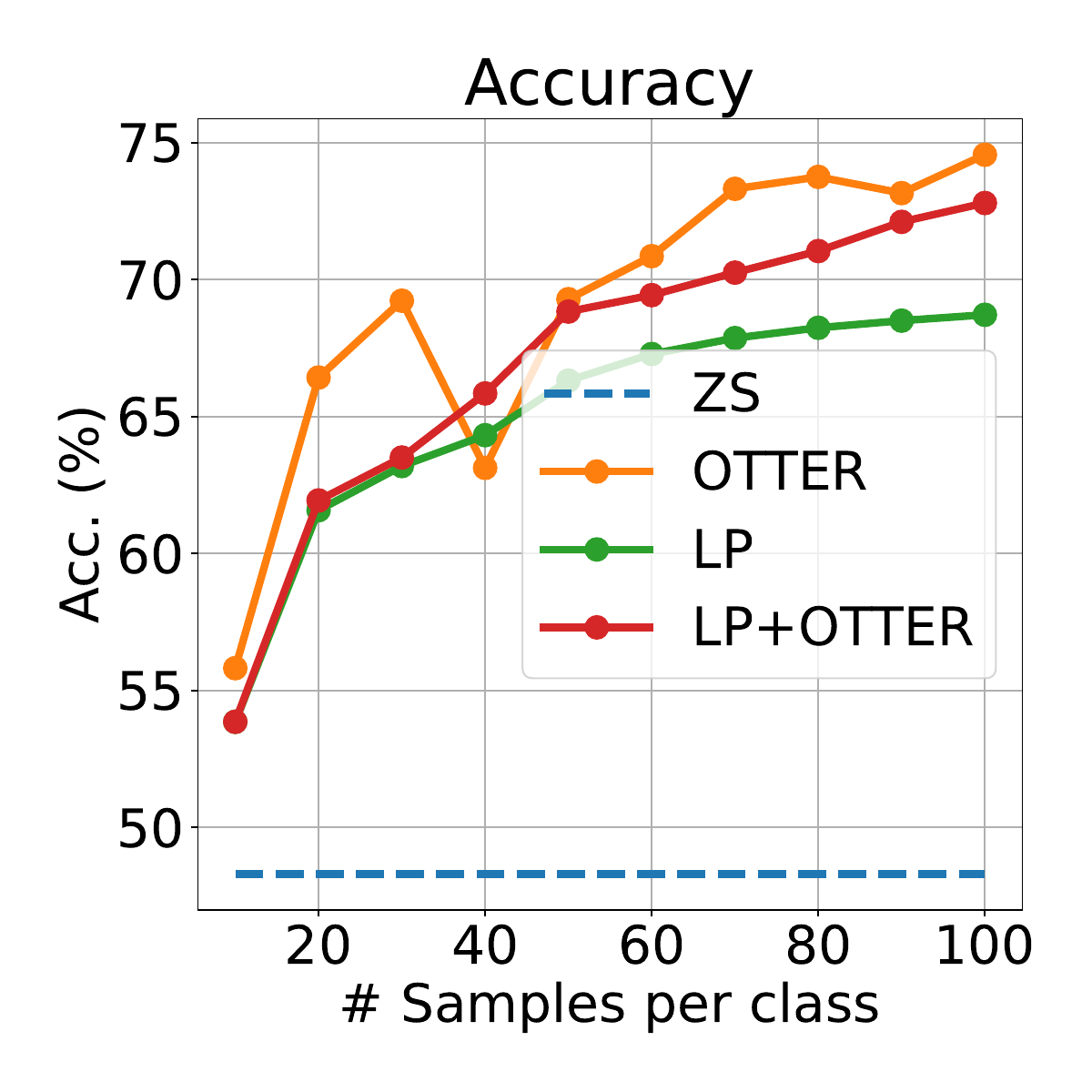}}
\caption{Comparison between $\SYSNAME$, LP, and LP+$\SYSNAME$ with varying the number of samples. We report the mean of 10 different samplings in each setting. We use ViT-B/16 for image classification, and BERT for text classification.}
\label{fig:ablation_num_samples_lp_vs_otter}
\end{figure}

\subsection{Detailed experiment setup and results of Section \ref{subsubsec:exp_hierarchy}}\label{appsec:exp_hierarch}
\paragraph{Class hierarchy} We used the following superclasses and subclasses classes for the proof of concept.
\begin{itemize}
    \item fish: aquarium fish, flatfish, ray, shark , trout
    \item tree: maple tree, oak tree, palm tree, pine tree, willow tree
\end{itemize}

\paragraph{Class balance estimation error} We report the class balance estimation error in Section \ref{subsubsec:exp_hierarchy}. Table \ref{tab:exp3_hotter_bbse_ce} shows the total variation between true class balance and estimated class balance. We can expect a significant accuracy improvement for RN50, RN101, and ViT-B/16 based on this table.

\begin{table}[t!]
    \centering
    \small
    \begin{tabular}{lccc}
        ~ & BBSE & H-BBSE \\ 
        \toprule
        RN50 & 0.335 & \rhl{0.246} \\
        RN101 &  0.378 & \rhl{0.294} \\
        ViT-B/32 & \hl{0.156} & 0.167 \\
        ViT-B/16 &  0.287 & \rhl{0.246} \\
        ViT-L/14 &  \hl{0.131} & 0.152 \\
        \bottomrule
    \end{tabular}
    \caption{Class balance estimation error in the Section \ref{subsubsec:exp_hierarchy} experiment. Class balance estimation error is measured by total variation distance. We report the mean of 10 different samplings of the validation set. H-BBSE denotes the class balance estimation using hierarchy upon BBSE.}\label{tab:exp3_hotter_bbse_ce}
\end{table}

\subsection{Synthetic experiments with \ROTTER}\label{appsubsec:rotter_synthetic}
We validate our theoretical result (Theorem \ref{thm:rotter}) by testing \ROTTER \ in a fully controllable setting.

\paragraph{Setup and Procedure.} We use our synthetic experiment setup (Section \ref{subsec:synthetic_exp}) with perturbation noise $\delta=0$ and label distribution $\alpha=0$. Additionally, we generated a validation set that follows the same distribution as the test set. After learning the reweighting parameter $r$ in the validation set using $y_{otter}$ as pseudolabels, we evaluated R-OTTER on the test set, comparing the results to those of zero-shot and OTTER. We expect that, if successful, R-OTTER will similarly gain improvement over zero-shot when the source and target distributions increasingly differ.

\paragraph{Results} Figure \ref{fig:exp_synthetic_r-otter} presents the experimental result. As expected, \ROTTER \ presents the experimental results. As expected, \ROTTER \ performs closely to the Bayes optimal solution, demonstrating its effectiveness, though it exhibits slight suboptimality due to generalization issues.

\begin{figure}[t!]
\centering
\includegraphics[width=.8\textwidth]{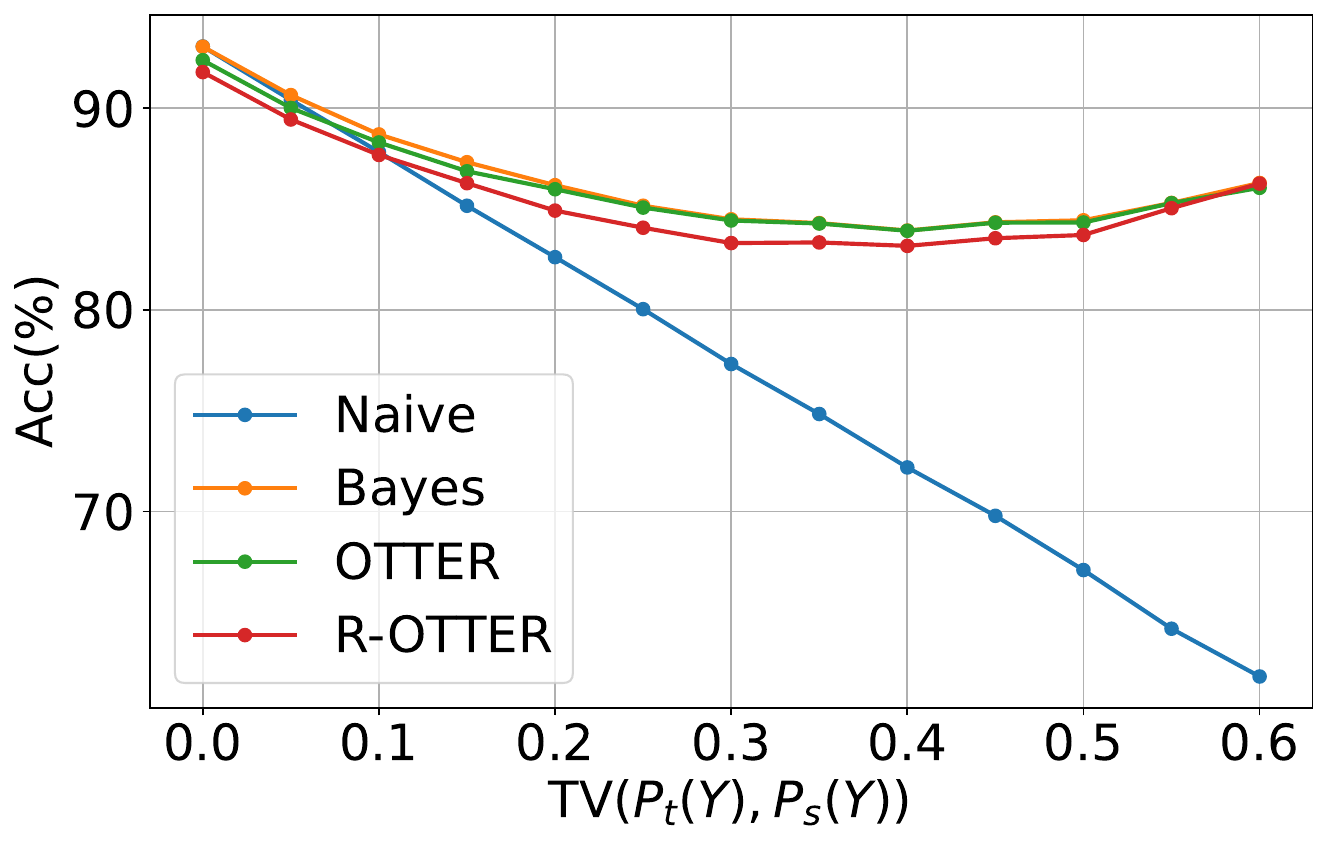}
\caption{Synthetic experiment result with \ROTTER. As expected, \ROTTER \ can successfully resolve the effects of label shift.}
\label{fig:exp_synthetic_r-otter}
\end{figure}

\end{document}